  \DeclareRobustCommand\onedot{\futurelet\@let@token\@onedot}
  \def\@onedot{\ifx\@let@token.\else.\null\fi\xspace}
\begin{document}
  \sloppy
  
  \title{On the Generalization and Causal Explanation in Self-Supervised Learning
  }
  
  
  
  \author{ Wenwen~Qiang \and
           Zeen~Song \and
           Ziyin~Gu \and
           Jiangmeng~Li \and
           Changwen~Zheng \and
           Fuchun~Sun \and
           Hui~Xiong
  }
  
 \institute{Wenwen Qiang, Zeen Song, Ziyin Gu, Jiangmeng Li, Changwen Zheng are with the
              National Key Laboratory of Space Integrated Information System, Institute of Software Chinese Academy of Sciences, University of Chinese Academy of Sciences, Beijing, China. \{qiangwenwen, songzeen, ziyin, jiangmeng2019, changwen\}@iscas.ac.cn.\\
           \and
           Fuchun Sun is with the
              National Key Laboratory of Space Integrated Information System, Department of Computer Science and Technology, Tsinghua University, Beijing, China. E-mail: fcsun@tsinghua.edu.cn.\\
          \and
          Hui Xiong is with the Hong Kong University of Science and Technology, China. E-mail: xionghui@ust.hk.\\
          \and
          Wenwen Qiang and Zeen Song have contributed equally to this paper. Corresponding author: Jiangmeng Li.
}

\date{Received: date / Accepted: date}

\def\ourconv{RIConv++\xspace}
\def\smallgap{\vspace{0.05in}}
  
  \maketitle
  
  \begin{abstract}
Self-supervised learning (SSL) methods learn from unlabeled data and achieve high generalization performance on downstream tasks. However, they may also suffer from overfitting to their training data and lose the ability to adapt to new tasks. To investigate this phenomenon, we conduct experiments on various SSL methods and datasets and make two observations: (1) Overfitting occurs abruptly in later layers and epochs, while generalizing features are learned in early layers for all epochs; (2) Coding rate reduction can be used as an indicator to measure the degree of overfitting in SSL models. Based on these observations, we propose Undoing Memorization Mechanism (UMM), a plug-and-play method that mitigates overfitting of the pre-trained feature extractor by aligning the feature distributions of the early and the last layers to maximize the coding rate reduction of the last layer output. The learning process of UMM is a bi-level optimization process. We provide a causal analysis of UMM to explain how UMM can help the pre-trained feature extractor overcome overfitting and recover generalization. We also demonstrate that UMM significantly improves the generalization performance of SSL methods on various downstream tasks. The source code is to be released at~\href{https://github.com/ZeenSong/UMM}{https://github.com/ZeenSong/UMM}.

  
  \keywords{  Self-Supervised Learning \and Generalization \and Causality \and Representation Learning \and Overfitting }
  
  \end{abstract}

\section{Introduction}

Representation learning without instance-level annotations is an important research topic in machine learning. One of the recent advances in this area is self-supervised learning (SSL), an unsupervised learning approach that has shown superior performance in various computer vision tasks, such as classification, object detection, segmentation, and transfer learning \cite{jaiswal2020survey,SI2022727,radford2021learning}. A key feature of SSL is its instance-based learning paradigm, which treats each training sample as a separate class. This simple but elegant paradigm enables SSL to learn semantic information from the data itself, which can be transferred to different downstream tasks with satisfactory results. As SSL continues to evolve, SSL methods, e.g., SimCLR \cite{chen2020simple} and BYOL \cite{grill2020bootstrap}, are also closing the performance gap with supervised methods. Unless otherwise stated, SSL refers to contrastive and non-contrastive learning methods that are based on data augmentation invariance in this paper.

The objective function of SSL methods \cite{chen2020simple,grill2020bootstrap,zbontar2021barlow} can be generally decomposed into two parts: alignment and constraint. The alignment part aims to make the features of different augmented samples from the same ancestor as similar as possible. The constraint part imposes additional prior knowledge on the data distribution, parameter update mode, features, etc. during training. An essential concern in SSL is the learning of an effective model that can generalize well to downstream tasks, typically achieved through deep neural networks. Usually, deep neural networks have more parameters than the available training data, which can lead to overfitting in a real-world scenario \cite{novak2018sensitivity}. Overfitting means that the network memorizes the specific characteristics of the training data instead of learning a generalizable solution \cite{zhang2021understanding,stephenson2021geometry}. 
In general, overfitting can be observed by measuring the variability in test accuracy of intermediate models obtained at each training update. For instance, if at some point during training, the model's test accuracy starts to consistently decrease, it can be considered as an indication of overfitting. 
However, conducting simultaneous training and testing is time-consuming and resource-intensive, making it highly meaningful to find a means of quantifying overfitting issues solely based on the training data during the training process. Therefore, this paper primarily focuses on the following issues: first, whether SSL encounters overfitting problems; second, how to quantify overfitting between training phases; third, how to address overfitting when it occurs in SSL.

To explore whether SSL encounters overfitting problems, we then conducted a detailed analysis of the SSL training process on different datasets and methods. Specifically, we selected five prominent self-supervised learning techniques: SimCLR, BYOL, Barlow Twins \cite{zbontar2021barlow}, SwAV \cite{swav}, and VICRegL \cite{Vicregl}. Our analysis also included three small-scale classification datasets, e.g., CIFAR-10 \cite{cifar10}, CIFAR-100 \cite{cifar10}, and Tiny-ImageNet \cite{leTinyImagenetVisual2015}, as well as two large-scale datasets, e.g., ImageNet \cite{krizhevsky2012imagenet} and ImageNet100 \cite{tianContrastiveMultiviewCoding2020}. We found that different SSL methods exhibit similar patterns in the training phase. Specifically, the test accuracy of the features from the early-layer of the feature extractor increases and stabilizes as the training progresses, while the test accuracy of the features from the last-layer increases, then decreases, and finally stabilizes. Also, we can observe that the highest test accuracy of the last-layer features throughout the training process is significantly better than the test accuracy of the last-layer features obtained at the end of training. Meanwhile, the highest test accuracy of the early-layer features throughout the training process and the test accuracy of the early-layer features obtained at the end of training are nearly consistent. Thus, we conclude that SSL indeed suffers from overfitting, and that generalization is achieved in early layers for all epochs, while memorization occurs in later layers and epochs. 

To quantify overfitting between training phases, taking into account the close relationship between feature information content and generalization, we proposed a novel perspective by reevaluating the changes in feature information content during the SSL training phase from an information theory standpoint. We used the Coding Rate Reduction \cite{chan2022redunet} to quantify the amount of information embedded in the feature representation and made intriguing observations regarding certain features. Specifically, we found that the coding rate reduction of the last-layer output first increases to a maximum and then decreases and stabilizes, while the coding rate reduction of the early-layer output increases and stabilizes. In other words, the trends in the performance of the coding rate reduction during the training phase align with the trends in model testing accuracy. Therefore, we conclude that coding rate reduction can be used to quantify the overfitting problem of SSL, with lower values indicating higher overfitting.

To address the overfitting problem of SSL, we propose a novel learning mechanism called Undoing Memorization Mechanism (UMM). UMM can be viewed as fine-tuning the parameters of the feature extractor pre-trained by SSL methods, from the early layer to the last layer, while keeping the parameters before the early layer fixed. The motivation of UMM is based on our experimental finding that early layer features are not affected by overfitting to training data. UMM is designed as a dual optimization process. The first optimization process (constraint condition) aims to align the features from the last layer with the features from the early layer. The second optimization process (objective function) aims to maximize the coding rate reduction of the last layer features by using the informative part of the early layer features. This way, the last layer features can recover generalization. We provide the rationale of UMM design and explain why UMM can restore generalization from a causal analysis perspective. We also validate the effectiveness of UMM through extensive experiments. The main contributions are as follows:
\begin{itemize}
    \item We experimentally validate that in the SSL training process, the feature extractor learns generalizable features in early layers for all epochs, and abruptly memorizes specific features in later layers and epochs. Meanwhile, we find that Coding Rate Reduction can serve as an indicator to evaluate the overfitting problem.
    \item We propose a bi-level optimization-based Undoing Memorization Mechanism (UMM) that can be viewed as tuning the parameters of the feature extractor pre-trained by SSL methods. UMM undoes the memorization of the last layer by rewinding based on the early layer and can be easily integrated with the exciting SSL method.
    \item We propose to use causal analysis to explain how UMM can help the pre-trained feature extractor overcome overfitting and recover generalization. We also validated the effectiveness of UMM on various downstream tasks.

\end{itemize}

\section{Related Work}
\textbf{Learning Method.} Contrastive learning (CL) is a prominent approach to learn visual representations without labels \cite{oord2018representation,tian2020contrastive,hjelm2018learning}. SimCLR \cite{chen2020simple} is the first widely used CL method that achieves comparable performance to supervised learning. However, SimCLR requires large training batches and high computational resources. MoCo \cite{moco,chen2020improved,chen2021empirical} addresses this issue by using dynamic memory allocation. MetAug \cite{li2022metaug} generates hard positive samples to reduce the negative sample redundancy. Some methods avoid negative samples altogether, such as BYOL \cite{grill2020bootstrap}, W-MSE \cite{ermolov2021whitening}, Simsiam \cite{chen2021exploring}, and Barlow Twins \cite{zbontar2021barlow}. However, these methods ignore the intrinsic structure of the data distribution. SwAV \cite{swav} and PCL \cite{li2020prototypical} exploit the clustering structures embedded in the data distribution. LMCL \cite{chen2021large} mines the large margin between positive and negative samples. CL can be seen as an instance-based learning paradigm, which limits its ability to capture the relationship between different instances. ReSSL \cite{zheng2021ressl,tomasev2022pushing} measures the similarity of the data distribution based on two augmented samples. I-JEPA \cite{assran2023self} predicts the representations of various target blocks in the same image without relying on hand-crafted data augmentations. SiameseIM\cite{tao2023siamese} shows that it is possible to obtain both semantic alignment and spatial sensitivity with single dense loss.  Unlike these methods, our proposed UMM aims to address the overfitting problem of SSL.

\textbf{Theoretical Analysis.} \cite{arora2019theoretical} provide the first generalization bound for contrastive learning based on the Rademacher complexity. Wang et al. \cite{wang2020understanding} analyze the contrastive learning from the perspective of alignment and uniformity on the hypersphere. Roland et al. \cite{zimmermann2021contrastive} reveal a fundamental connection between contrastive learning, generative modeling, and nonlinear independent component analysis. Wang et al. \cite{wang2020understanding} show that the temperature parameter controls the alignment and uniformity of the learned features in the feature space. Von et al. and Qiang et al. \cite{von2021self,qiang2022interventional} understand the contrastive learning from causal analysis, such as intervention and counterfactual. Ash et al. \cite{ash2021investigating} report that the performance of a representation learned via contrastive learning can degrade with the number of negative samples. However, Nozawa and Pranjal \cite{awasthi2022more,nozawa2021understanding} argue that a larger number of negative samples do not necessarily harm contrastive learning. Hu et al. \cite{hu2022your} demonstrate through analysis that the self-supervised contrastive learning can be viewed as a special case of stochastic neighbor embedding and preserve the pairwise distance specified by data augmentation. Garrido et al. \cite{garrido2022duality} show that the sample-contrastive learning method such as SimCLR and dimension-contrastive learning approaches such as VICReg are closely related. Simon et al. \cite{simon2023stepwise} study the dynamics of the learning process of Barlow Twins and find that the model learns the top eigenmodes of a certain contrastive kernel in a stepwise fashion, the same stepwise learning phenomenon also exists in various contrastive learning process. G{\'a}lvez et al.\cite{galvez2023role} analyze the clustering-based (SwAV) and distillation-based (BYOL) contrastive learning with a different lower bound on the mutual information.  In this paper, we also provide a theoretical analysis to support the validity of our proposed UMM.

\begin{figure*}[t]
    \centering
    \subfigure[CIFAR-10]{
    \includegraphics[width=0.3\textwidth]{ 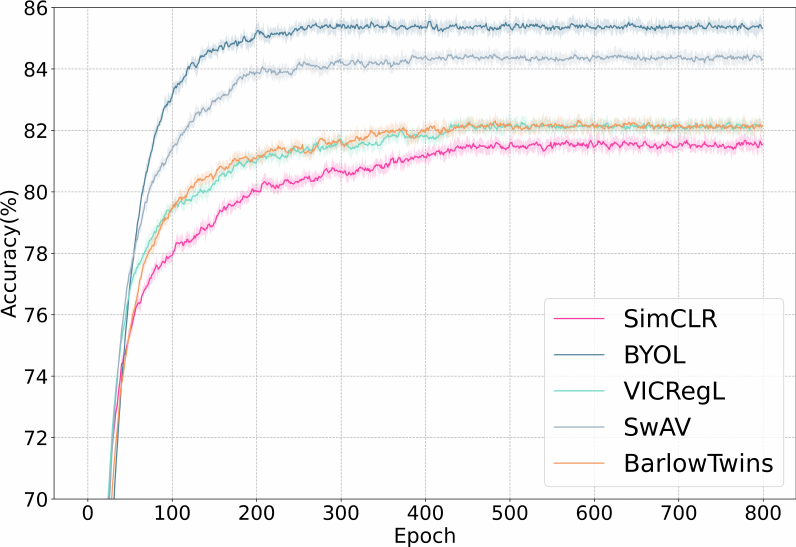}
    }
    \subfigure[CIFAR-100]{
    \includegraphics[width=0.3\textwidth]{ 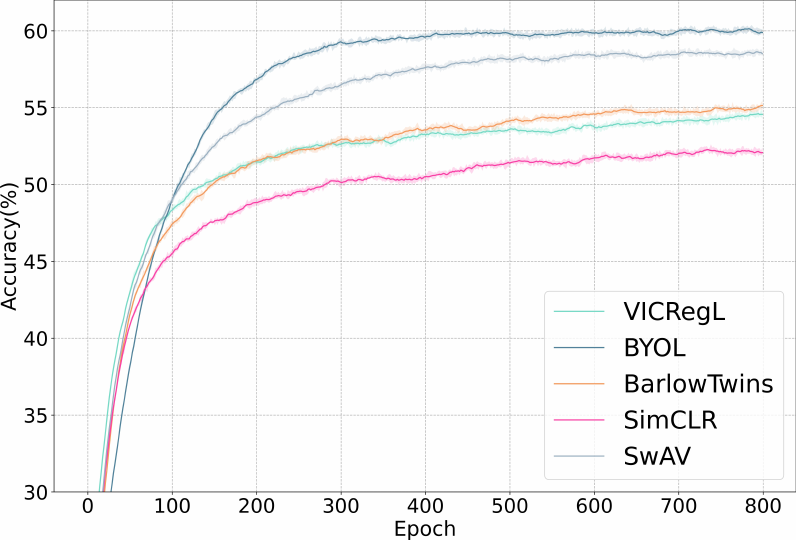}
    }
    \subfigure[Tiny-ImageNet]{
    \includegraphics[width=0.3\textwidth]{ 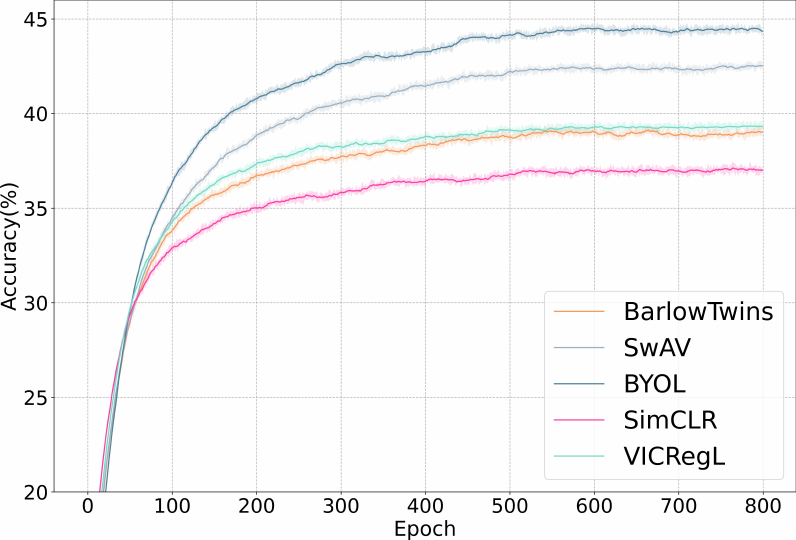}
    }
    \subfigure[ImageNet-100]{
    \includegraphics[width=0.3\textwidth]{ 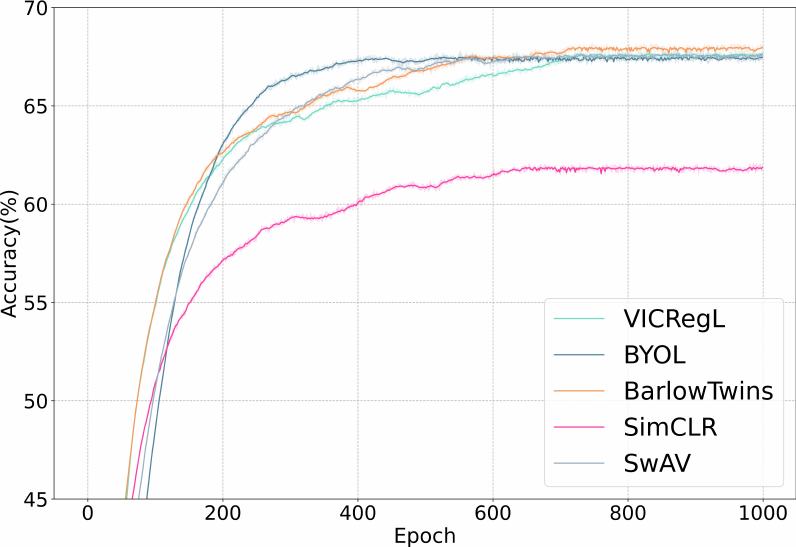}
    }
    \subfigure[ImageNet]{
    \includegraphics[width=0.3\textwidth]{ 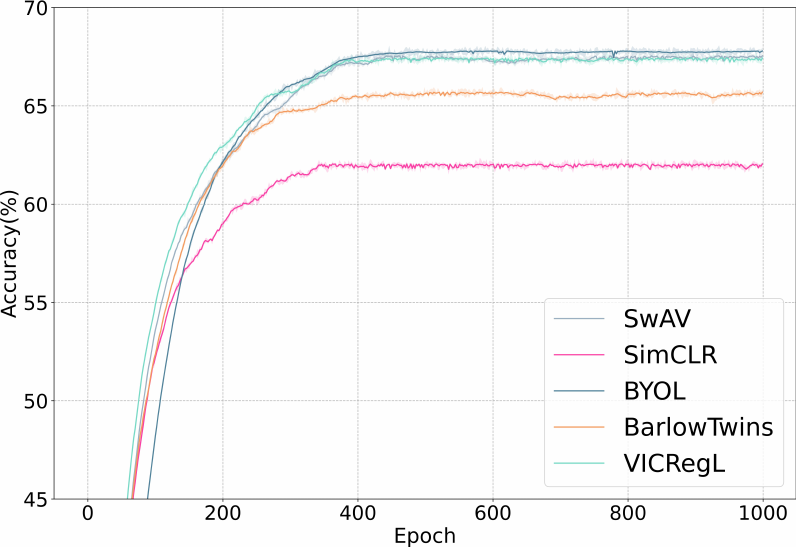}
    }
    \\
    \subfigure[CIFAR-10]{
    \includegraphics[width=0.3\textwidth]{ 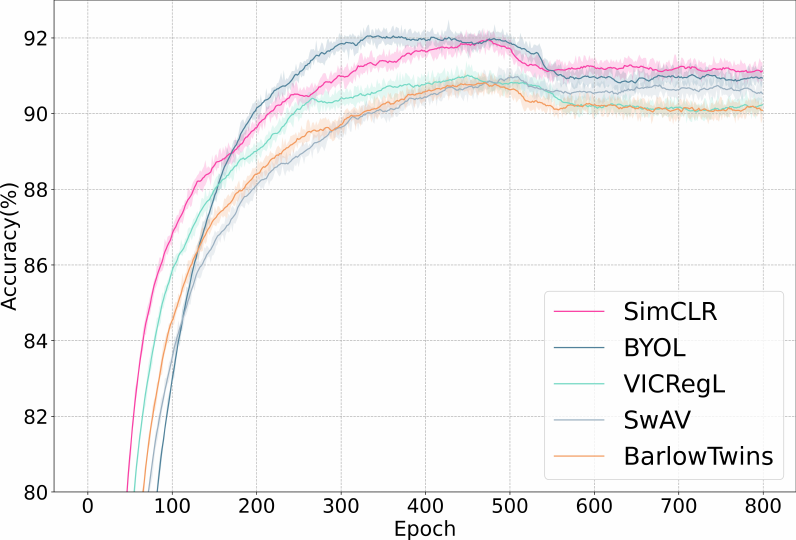}
    }
    \subfigure[CIFAR-100]{
    \includegraphics[width=0.3\textwidth]{ 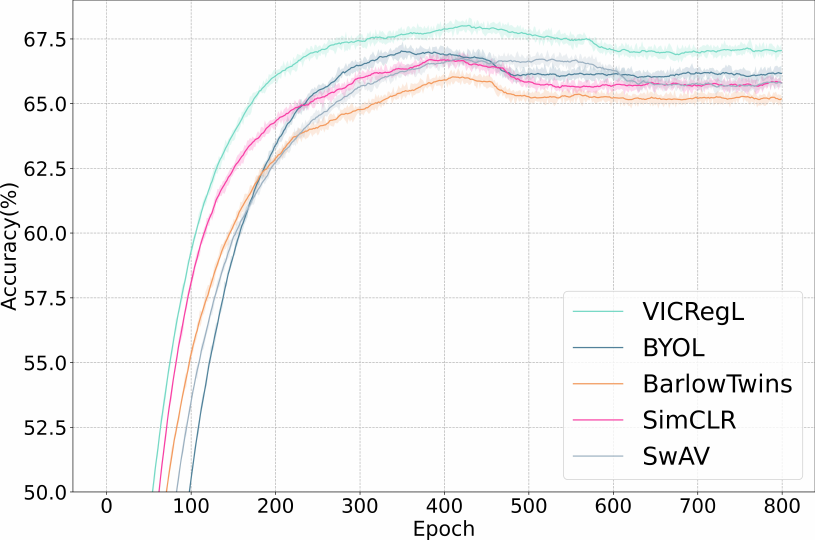}
    }
    \subfigure[Tiny-ImageNet]{
    \includegraphics[width=0.3\textwidth]{ 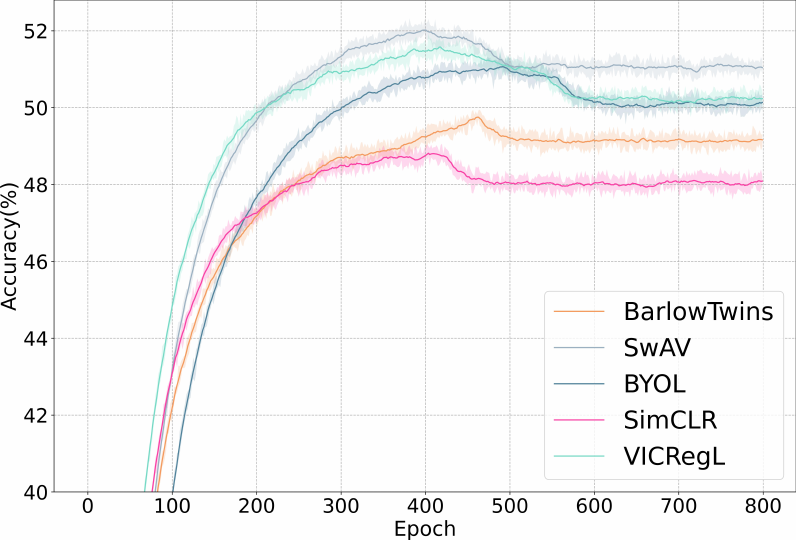}
    }
    \subfigure[ImageNet-100]{
    \includegraphics[width=0.3\textwidth]{ 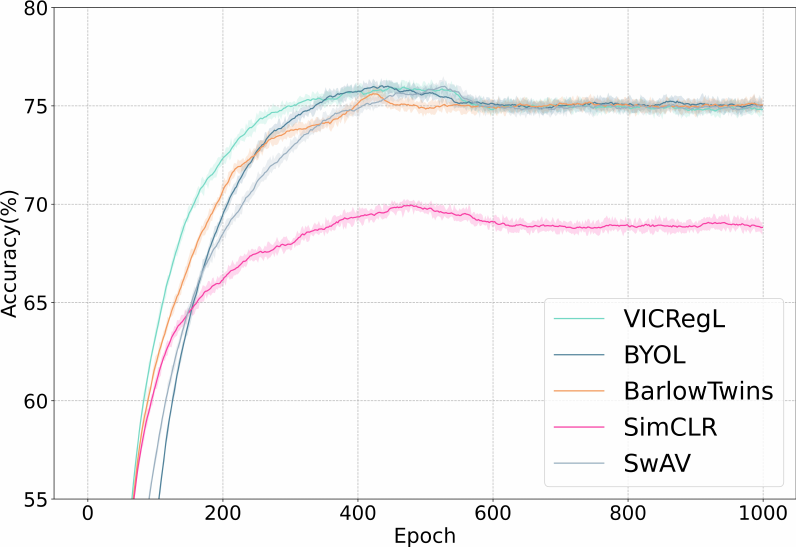}
    }
    \subfigure[ImageNet]{
    \includegraphics[width=0.3\textwidth]{ 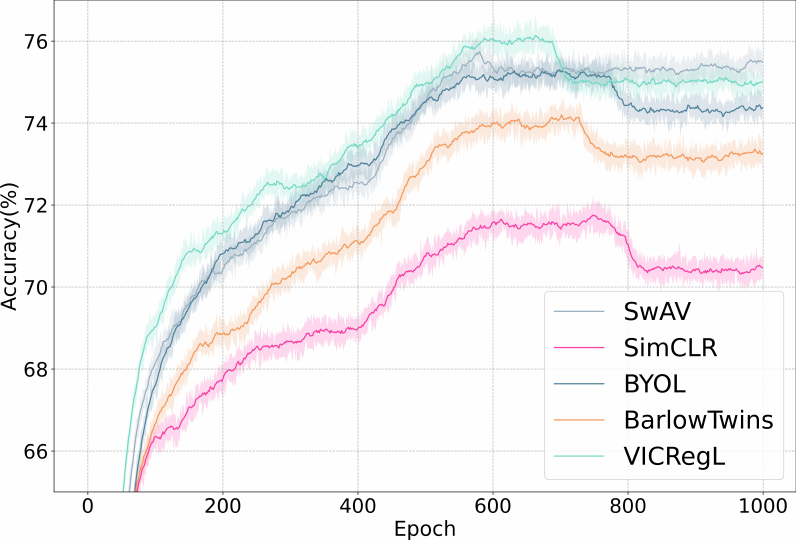}
    }
    \caption{The curves of test accuracy versus training epoch for different SSL methods and datasets. The results from (a) - (e) are based on early-layer output while results from (f) - (j) are based on last-layer output. Each result is the average of 5 independent experiments.}
    \label{Fig.main1}
\end{figure*}

\section{Preliminaries}\label{pre3435}
In the training phase, self-supervised learning (SSL) \cite{chen2020simple,wang2020understanding} projects the original input samples from the raw space into a feature space by a feature extractor $f$ and further projects them from the feature space into an embedding space through a projection head $f_{ph}$. Given a mini-batch of training data ${X_{tr}} = \{ {{x_i}} \}_{i = 1}^N$, where ${{x_i}}$ denotes the $i$-th sample and $N$ represents the number of samples, we apply stochastic data augmentation, e.g., random crop, to transform each sample $x_i$ into two augmented views $x^1_i$ and $x^2_i$. Thus, we obtain an augmented dataset $X_{tr}^{aug} = \{ {x_1^1,x_1^2,...,x_N^1,x_N^2} \}$, which can also be written as $X_{tr}^{aug} = \{ {{}^1X_{tr}^{aug},{}^2X_{tr}^{aug}} \}$, where ${}^iX_{tr}^{aug} = \{ {x_1^i,...,x_N^i} \}$ and $i \in \{ {1,2} \}$. The samples in ${X_{tr}}$ are considered as ancestors of those in $X_{{tr}}^{aug}$. For each sample in the augmented dataset, we have $z_i^j = f_{ph}(f( {x_i^j} ))$, $z_i^j = \frac{z_i^j}{{\| {z_i^j} \|}_2}$, $i \in \{ {1,2,...,N} \}$, $j \in \{ {1,2} \}$.

Existing SSL methods frameworks based on contrastive and non-contrastive frameworks mainly consist of two parts: alignment and constraint. The alignment part aims to make the features of different augmented samples from the same ancestor as similar as possible, while the constraint part imposes additional prior knowledge on the data distribution, parameter update mode, features, etc. during training. Therefore, we can express the SSL methods using a unified framework, which can be presented as follows:
\begin{equation} \label{cvdskop}
\mathop {\min }\limits_{f,{f_{ph}}} {{\cal L}_{{\rm{align}}}}(X_{tr}^{aug},f,{f_{ph}}) + {{\cal L}_{{\rm{prior}}}}(X_{tr}^{aug},f,{f_{ph}}),
\end{equation}
where ${{\cal L}_{{\rm{align}}}}$ and ${{\cal L}_{{\rm{prior}}}}$ denote the alignment and constraint losses, respectively. Next, we revisit three representative SSL methods under this framework.

SimCLR \cite{chen2020simple} is an instance-based learning method that assigns a unique class to each sample during training. As shown in Theorem 1 of \cite{wang2020understanding}, SimCLR constrains the data distribution to be uniform. BYOL \cite{grill2020bootstrap} is a SSL method that ignores negative samples. BYOL constrains the gradient backpropagation based on the training data. Barlow Twins \cite{zbontar2021barlow} is a SSL method that does not need negative samples, gradient stopping, or asymmetric networks. Barlow Twins constrains the representation by decorrelating its vector components. What these methods have in common is that they all impose constraints in the feature space that augmented samples with the same ancestry be similar to each other. The difference is that these methods tend to impose different a priori constraints on the distribution of augmented samples in the feature space. In summary, these methods can be harmonized into Equation \ref{cvdskop} so that the soundness of Equation \ref{cvdskop} can be verified.

\section{Problem Formulation and Empirical Analysis} \label{sec:me}

Deep neural networks usually have far more parameters than training samples, which makes them prone to memorize the statistical characteristics of the training data rather than finding a solution with generalization ability \cite{novak2018sensitivity}. A key issue in self-supervised learning methods is how to improve the model’s generalization ability and avoid overfitting the training dataset. However, this issue lacks in-depth research in the field of self-supervised learning, mainly facing one challenges: how to effectively evaluate whether the model is overfitting. In this section, we address this challenge by conducting the following work: 1) We verify through experiments that self-supervised learning methods indeed overfit the training dataset; 2) We discover that coding rate reduction can serve as an indicator to evaluate the overfitting problem of the model.


\subsection{Overfitting Evidence}
\label{sec:4.1}






To explore the overfitting problem in SSL, we conducted experiments on various self-supervised learning methods and datasets. We selected five representative self-supervised learning method, e.g., SimCLR, BYOL, Barlow Twins, SwAV \cite{swav}, and VICRegL \cite{Vicregl}, three small-scale classification datasets, e.g., CIFAR-10 \cite{cifar10}, CIFAR-100 \cite{cifar10}, Tiny-ImageNet \cite{leTinyImagenetVisual2015}, and two large-scale classification datasets, e.g., ImageNet \cite{krizhevsky2012imagenet} and ImageNet100 \cite{tianContrastiveMultiviewCoding2020}.
Our experimental process is as follows: 1) During the training process, we save the network parameters of the model every 5 epoches; 2) For each saved model, we utilize both early-layer and last-layer outputs to obtain two types of features. Subsequently, we leverage the label information from the training data to train two distinct types of classifiers. 3) Based on the obtained classifiers, validate all saved models on the test dataset and record the test accuracy.

Figure \ref{Fig.main1} shows the curves of test accuracy versus training epoch number for each self-supervised learning method on each dataset. Meanwhile, Figure \ref{Fig.main1} depicts the mean outcome of five independent experiments, with the surrounding shaded region illustrating the variance. We observe that the accuracy based on last-layer testing first increases, then decreases after reaching the highest value, and finally stabilizes, while the accuracy based on early-layer testing first increases and then stabilizes. Therefore, these indicate that the feature discriminability of the last-layer output first increases, then decreases upon reaching its optimum, and subsequently remains stable. Meanwhile, the feature discriminability of the early-layer output  first increases and maintains stability without change upon reaching its optimum. 

We also observe that the highest test accuracy for the last-layer features during the training process surpasses that achieved by the last-layer features at the end of training. Concurrently, the peak test accuracy of the early-layer features throughout the training process mirrors the test accuracy of the early-layer features at the end of the training, indicating consistent performance. Therefore, these findings suggest that the efficacy of the feature extractor based on early-layer features remains stable once optimal performance is achieved, regardless of additional training epochs. In contrast, the performance of the feature extractor based on last-layer features declines upon reaching its peak, before eventually stabilizing.

In conclusion, Figure \ref{Fig.main1} indicates that SSL methods indeed experience overfitting from the perspective of test accuracy. Also, Figure \ref{Fig.main1} concurs that during the initial stages of training, the model's early layer and last layer do not exhibit overfitting issues. However, at a certain point in time, the model's last layer begins to overfit, while the early layer remains unaffected by overfitting issues. Therefore, we can draw the following: 
\begin{itemize}
    \item Generalizing features are learned in early layers for all epochs, and memorization occurs abruptly in later layers and epochs.
\end{itemize}

\begin{figure*}
    \centering
    \subfigure[CIFAR-10]{
    \includegraphics[width=0.3\textwidth]{ 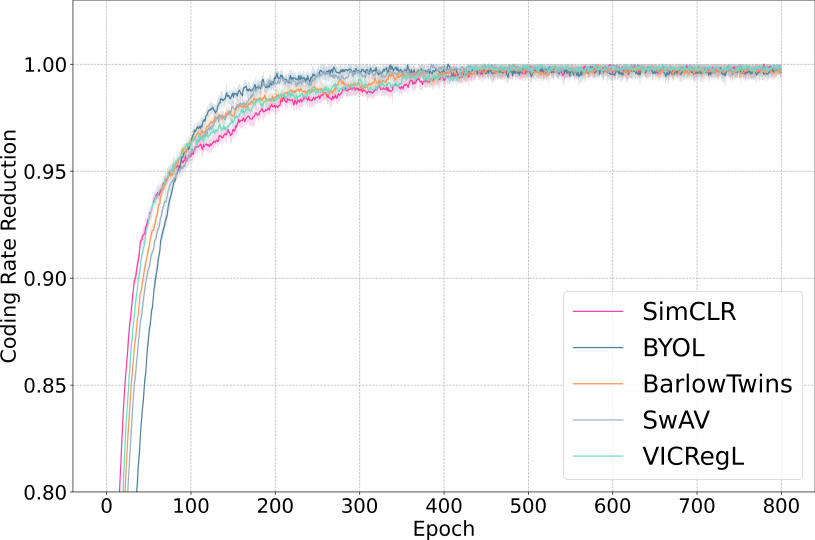}
    }
    \subfigure[CIFAR-100]{
    \includegraphics[width=0.3\textwidth]{ 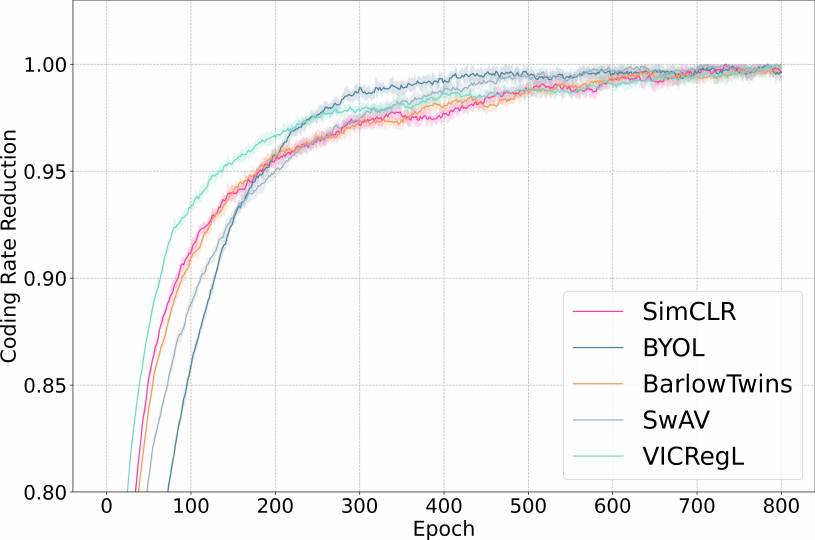}
    }
    \subfigure[Tiny-ImageNet]{
    \includegraphics[width=0.3\textwidth]{ 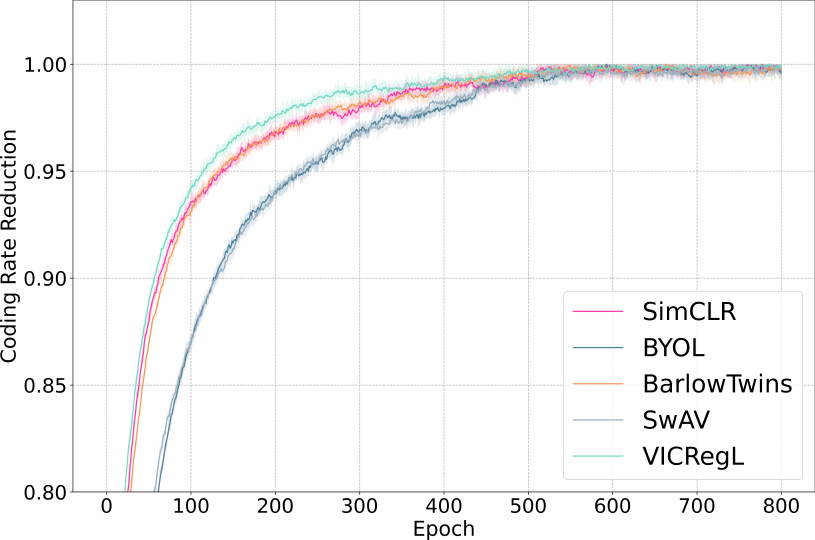}
    }
    \subfigure[ImageNet-100]{
    \includegraphics[width=0.3\textwidth]{ 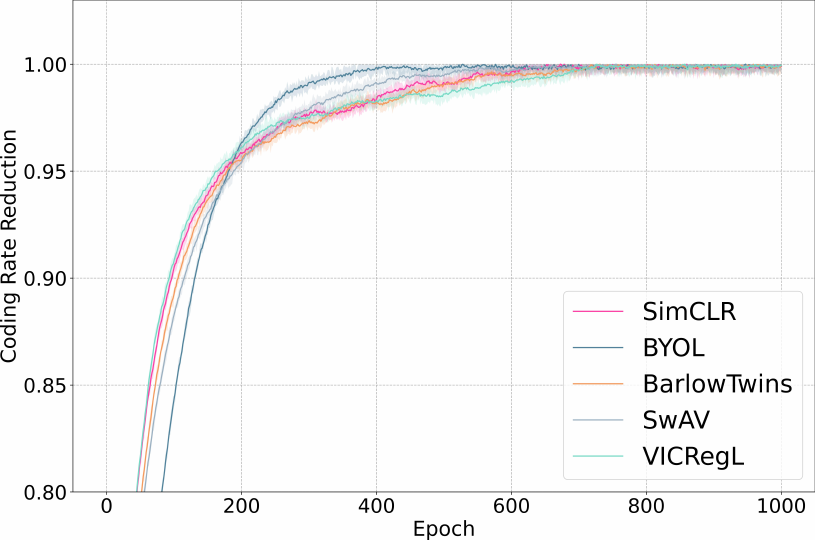}
    }
    \subfigure[ImageNet]{
    \includegraphics[width=0.3\textwidth]{ 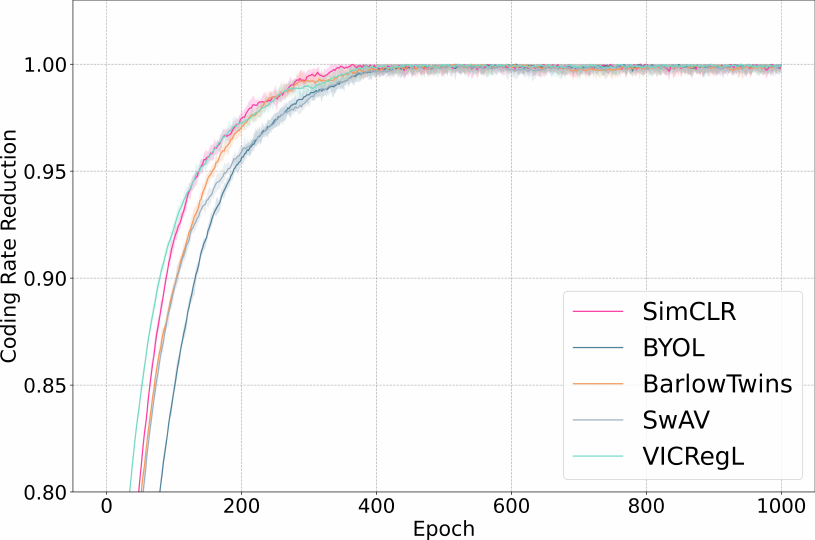}
    }
    \\
    \subfigure[CIFAR-10]{
    \includegraphics[width=0.3\textwidth]{ imgs/fig3/cifar10_crr_middle.pdf}
    }
    \subfigure[CIFAR-100]{
    \includegraphics[width=0.3\textwidth]{ 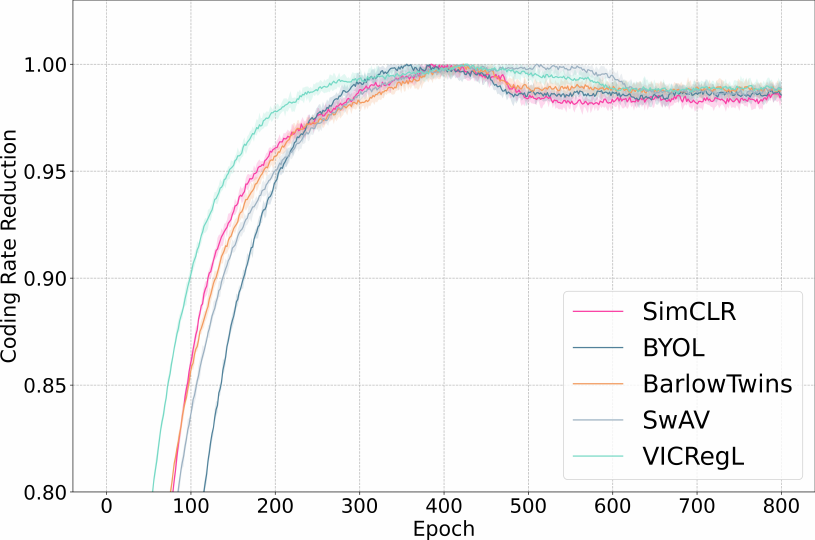}
    }
    \subfigure[Tiny-ImageNet]{
    \includegraphics[width=0.3\textwidth]{ 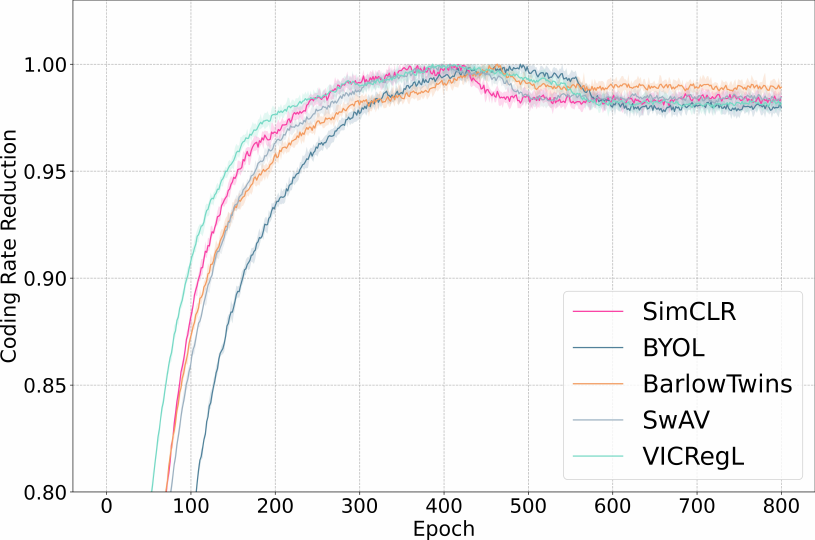}
    }
    \subfigure[ImageNet-100]{
    \includegraphics[width=0.3\textwidth]{ 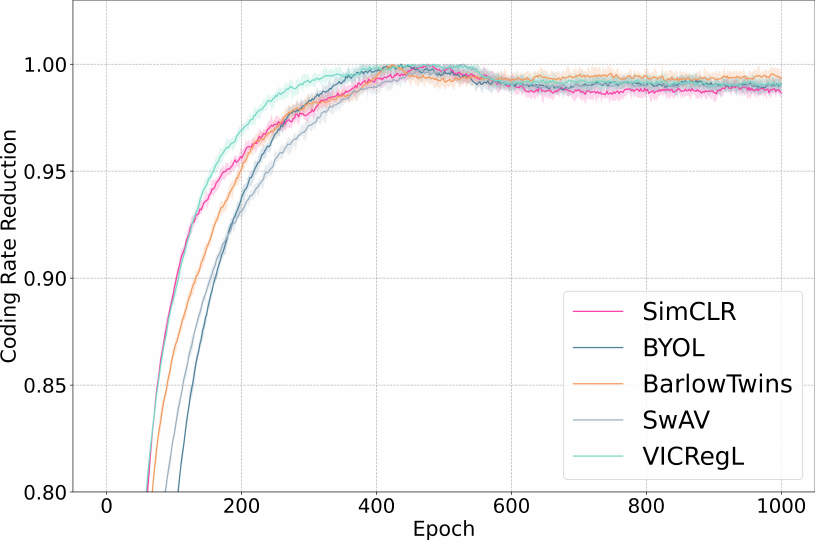}
    }
    \subfigure[ImageNet]{
    \includegraphics[width=0.3\textwidth]{ 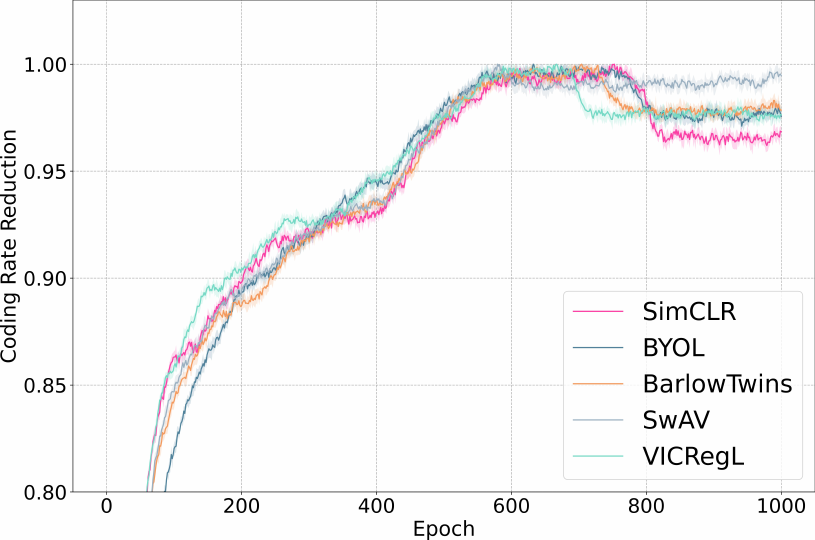}
    }
    \caption{The curves of coding rate reduction versus training epoch for different SSL methods and datasets. The results from (a) - (e) are based on early-layer output while results from (f) - (j) are based on last-layer output. Each result is the average of 5 independent experiments.}
    \label{Fig.main3}
\end{figure*}

\subsection{Quantitative Evidence}


The subsection above highlights the need to reveal overfitting in SSL methods, which currently require storing model parameters at different time points during training and continuously validating these saved models. This process incurs high storage and computational costs. In real-world scenarios, the frequent absence of labeled data precludes the availability of test datasets, thereby obstructing the detection of overfitting through concurrent training and testing. Therefore, there is a need to find a way or metric that can directly reflect overfitting during the training process, making it more efficient and meaningful without the need for multiple rounds of storage and testing. 

From information theory, we can deduce that the more task-relevant information a feature contains, the more it generalizes. This guides us to measure overfitting by quantifying the effective information content in the features. Empirically, a feature representation with generalization properties should satisfy three criteria: 1) Representations from different modes or classes should be highly uncorrelated; 2) Representations from the same mode or class should be relatively correlated; 3) Representations should have as large a dimension as possible to cover all the modes or classes and be variant for the same mode or class. These insights provide a basis for us to use coding rate reduction \cite{chan2022redunet} to quantify the feature information. Before that, we briefly introduce coding rate reduction.




More precisely, given a dataset $X = \{ {{x_1},...,{x_n}} \}$, we input it into a feature extractor $f$ to obtain its representation $Z = \{ {{z_1},...,{z_n}} \}$. To encode $Z$ with an accuracy no lower than $\varepsilon$, the required number of bits can be expressed as: $\mathcal{L}(Z,\varepsilon ) \buildrel\textstyle.\over= (\textstyle{\frac{m + n}{2}})\log \det (I + \textstyle{\frac{m} {n\varepsilon ^2}}Z Z^T)$, where $m$ is the dimension of the feature vector. Therefore, the compactness of $Z$ can be measured by the average encoding length per sample feature, i.e., the coding rate under the $\varepsilon$ constraint, which can be expressed as:
\begin{equation}
    R(Z,\varepsilon ) \buildrel\textstyle.\over= \frac{1}{2}\log \det (I + \frac{m}{n{\varepsilon ^2}}Z{Z^T}) \approx {\rm{tr}}(\frac{m}{{2n{\varepsilon ^2}}}Z{Z^T})
\end{equation}
In general, the features of multi-class data $Z$ may belong to multiple low-dimensional subspaces. To evaluate the coding rate of this mixed data more accurately, we partition $Z$ into multiple subsets: $Z = {Z^1} \cup ,...,{Z^k}$, where $Z^i$ denotes the set of sample features with the same class label. Let $\Pi  = \{ {\Pi ^j} \in {{\mathbbm{R}}^{n \times n}}\} _{j = 1}^k$ be a set of diagonal matrices whose diagonal entries encode the membership of $n$ samples in $k$ classes. More specifically, ${\Pi ^j}(i,i)$ represents the probability that the sample feature $z_i$ belongs to the subset $Z_j$. Therefore, $\Pi$ belongs to a simplex: $\Omega  \buildrel\textstyle.\over= \left\{ {\Pi |{\Pi ^j} \ge 0,{\Pi ^1} + ... + {\Pi ^k} = I} \right\}$. Based on $\Pi$, the average number of bits per sample (coding rate) is defined as:
\begin{equation}
\resizebox{0.92\linewidth}{!}{
$
\begin{array}{l}  
{R_c}(Z,\varepsilon |\Pi ) = \sum\limits_{j = 1}^k {\frac{{{\rm{tr}}({\Pi ^j})}}{{2n}}} \log \det (I + \frac{m}{{{\rm{tr}}({\Pi ^j}){\varepsilon ^2}}}Z{\Pi ^j}{Z^T})\\
 \approx \sum\limits_{j = 1}^k {{\rm{tr}}(\frac{m}{{2n{\varepsilon ^2}}}Z{\Pi ^j}{Z^T})} 
\end{array}
$
}
\end{equation}
Based on $R(Z,\varepsilon )$ and ${R_c}(Z,\varepsilon |\Pi )$, the coding rate reduction is defined as:
\begin{equation}\label{mcr123}
    \Delta R(Z,\Pi ,\varepsilon ) = R(Z,\varepsilon ) - {R_c}(Z,\varepsilon |\Pi )
\end{equation}
Then, from Theorem 1 in \cite{chan2022redunet}, we can see that when $Z$ is obtained by maximizing $\Delta R(Z,\Pi ,\varepsilon )$, then $Z$ satisfies the three criteria mentioned above.

Next, we evaluate the proposed measure of coding rate reduction $\Delta R(Z,\Pi ,\varepsilon )$ on different datasets and methods using Equation \ref{mcr123}. 
Given $X_{tr}^{aug}$, we can obtain the early layer and last layer features for each sample. Since SSL methods are trained in an instance manner, we can consider constraining the value of $k$ in Equation \ref{mcr123} to be of size $2N$, which represents the number of training samples in $X_{tr}^{aug}$. Thus, we have: $\Pi  = \{ {\Pi ^j} \in {{\mathbbm{R}}^{(2N - 1) \times (2N - 1)}}\} _{j = 1}^{2N}$, where ${\Pi ^j}(i,i) = {z_i}^{\rm{T}}{z_j}/\sum\nolimits_{k = 1,k \ne j}^{k = 2N} {{z_k}^{\rm{T}}{z_j}} $ represents the probability that the $i$-th sample $z_i$ and the $j$-th sample $z_j$ in ${}^*Z_{tr}^{aug}$ belong to the same class and ${}^*Z_{tr}^{aug}$ means that the sample features in ${}^*Z_{tr}^{aug}$ are calculated based on $f_{e-l}^*$. Meanwhile, we have $\Pi^1(i-1,i-1)+,...,\Pi^{i-1}(i-1,i-1)+\Pi^{i+1}(i,i)+,...,+\Pi^{2n}(i,i)=1$.

Finally, the trend of $\Delta R(Z, \Pi, \varepsilon)$ changing with epochs in the training phase is shown in Figure \ref{Fig.main3}. Meanwhile, Figure \ref{Fig.main3} depicts the mean outcome of five independent experiments, with the surrounding shaded region illustrating the variance, and the value recorded is normalized by dividing by the maximum value. We observe that, whether based on early layer features or last layer features, the trend of $\Delta R(Z,\Pi ,\varepsilon )$'s change is almost consistent with the trend of test accuracy changes. 

Specifically, we observe that the model with the highest test accuracy also has the highest $\Delta R(Z,\Pi ,\varepsilon )$, which suggests that the features contain more task-relevant information when the model generalizes better. Moreover, we notice that $\Delta R(Z,\Pi ,\varepsilon )$ based on last layer features first increases, then decreases after reaching a peak, and finally stabilizes; while $\Delta R(Z,\Pi ,\varepsilon )$ based on early layer features first increases, then remains stable. This implies that $\Delta R(Z,\Pi ,\varepsilon )$ reflects the generalization trend of the model, where lower values imply more severe overfitting. Therefore, based on the change in $\Delta R(Z, \Pi, \varepsilon)$, we can also conclude from the coding rate reduction perspective that: 
\begin{itemize}
    \item Early layers learn generalizable representations, while the last layer overfits to training data at the end of training.
\end{itemize}


\begin{figure*}[t]
    \centering
    \includegraphics[width=0.95\textwidth]{ 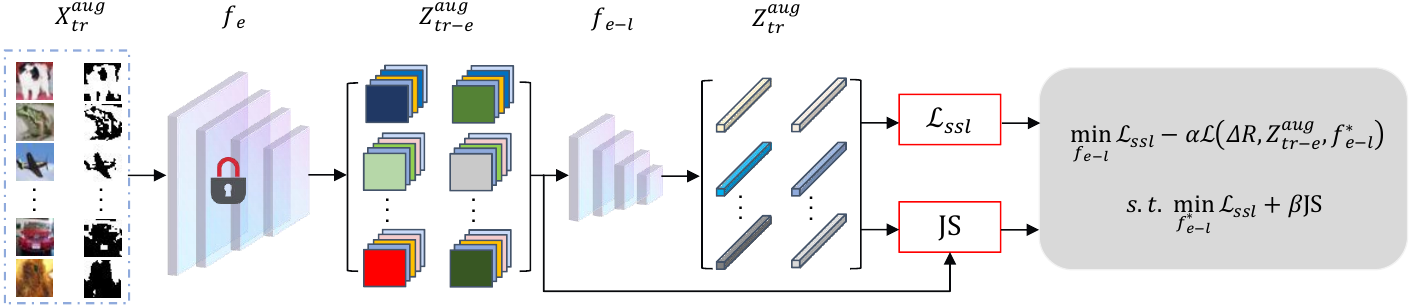}
    \caption{The learning framework of the proposed UMM. Given a pre-trained SSL model, UMM aims to fine-tune module $f_{e-l}$ while keeping module $f_{e}$ frozen. The learning objective of UMM is a quadratic optimization problem.}
    \label{qww:UMM_frame}
\end{figure*}

\section{Methodology}

In this section, we first introduce the key idea for designing the method. Then, we give the details of the propose undoing memorization mechanism which aims to mitigate the overfitting problem of pre-trained self-supervised learning methods on their training data, and the learning framework is shown in Figure \ref{qww:UMM_frame}. At last, we provide a causal analysis of the proposed mechanism based on causal theory and justify its rationality.

\subsection{Key Idea for Designing the Proposed Method}
\label{kidddg}

The understanding of ``generalization'' can be divided into two aspects: one is achieving the best performance with a given model while keeping other training factors constant, which falls into the category of exploring the model's optimal attributes; the other involves changing factors such as the model structure, while keeping training factors constant, to construct a model that most perfectly approximates the ``god model'' or ``ground truth model'', belonging to the category of minimizing modeling error. The ``generalization'' discussed in this paper pertains to the first aspect. Because we address the issue of overfitting. That is, for a model, it concerns whether there's overfitting to the training data during the training phase. If overfitting occurs, then the model is not the best possible under the training scenario.

From the discussion in the above section, it follows that models based on early-layer and those based on last-layer, due to structural differences, naturally exhibit performance discrepancies. Meanwhile, models of early-layer possess the first aspect of ``generalization'', whereas models of the last-layer, although having the second aspect of ``generalization'', have been experimentally validated to lack the first aspect. We aim to endow models of the last-layer with the first aspect of ``generalization''. Since models of early-layer lack the second aspect of ``generalization'' (This can be inferred from Figure \ref{Fig.main1}-\ref{Fig.main3}), directly using features from early-layer models to promote the recovery of the first aspect of ``generalization'' in last-layer models might damage their second aspect of ``generalization''. 

Therefore, we propose using the distribution properties of features from early-layer as prior knowledge to constrain the feature distribution of last-layer models, without involving the direct interaction between the features of different layers. At the same time, distribution information can be regarded as spatial location information, which does not involve specific semantic information in the sample features. We can use JS-divergence to achieve this, because that calculating JS-divergence is based on the probability density of the early-layer feature, not the feature itself. The rationale is that the feature distribution can reflect the information entropy contained in the data. When facing the same task, regardless of how the data changes (different features can indicate different data), the quantity of information related to the task remains constant. We aim to transmit task-related information quantity by constraining distribution.


\subsection{Undoing Memorization Mechanism}\label{umm;qw}

Based on the aforementioned discussion, we propose the Undoing Memorization Mechanism (UMM), aimed at enabling the last-layer to possess the first aspect of ``generalization''. As known from Section \ref{sec:me}, the reduction of coding rate can serve as a monitor for the first aspect of ``generalization''. Therefore, combining the discussions from Subsection \ref{kidddg}, UMM suggests using the feature distribution information of early-layer to maximize the coding rate reduction of the last layer, i.e., UMM first adjusts the coding rate reduction using the JS-divergence between the feature distributions of different layers, and then uses the coding rate reduction to regulate the feature learning process of SSL. We did not directly regulate the feature learning of SSL by minimizing the JS-divergence of feature distributions across different layers or maximizing the coding rate reduction. Our rationale is: 1) Directly minimizing JS-divergence might lead SSL's objective function to transition from one local optimum to another, and while there could be multiple local optima, we seek the one that mitigates overfitting, which UMM can directly optimize for. 2) The value of coding rate reduction may be influenced by multiple factors, with overfitting being one of them. Assuming this factor remains constant, the maximum achievable value of coding rate reduction might also be invariant. Only by addressing the overfitting factor can the value of coding rate reduction be further increased.


To implement the UMM, we begin by introducing a Continuous Piecewise Affine (CPA) mapping. This mapping is employed to model the transformation of feature distributions, both from the input space to the early layer and from the input space to the final layer, within a pre-trained SSL method. A CPA mapping is defined as a first-order spline $S$ that maps each region $w \in \Omega$ of a partition $\Omega$ to the manifold/feature space, e.g., $S(x) = \sum\nolimits_{w \in \Omega } {({A_w}x + {b_w}){\mathbbm{1}_{x \in w}}} $ with ${A_w}$ and ${b_w}$ as per-region slope and offset parameters. $\Omega$ is generated as follows: the input space $\mathcal{X}$ is first divided into different regions, and then these regions are used as elements to form a finite set, which is said to be a partition. When the input space is characterized by a density distribution, this density is transformed by the CPA mapping and resides on the surface of the CPA manifold. Based on these considerations, the following conclusions can be drawn:

\begin{lemma}
    Denote the volume of a region $w \in \Omega $ as ${\rm{vol}}(w)$ and the volume of the affinely transformed region $S(w)$ as ${\rm{vol}}(S(w))$, we have:
    \begin{equation}
    {\rm{vol}}(S(w)) = {\rm{vol}}(w)\sqrt {\det (A_w^{\rm{T}}{A_w})}.
    \end{equation}
\end{lemma}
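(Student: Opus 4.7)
The plan is to reduce the statement to the standard fact that the Jacobian of a linear map $A:\mathbb{R}^n\to\mathbb{R}^m$ (with $m\ge n$) multiplies $n$-dimensional volume by $\sqrt{\det(A^{\mathrm T}A)}$, and then apply it on a single region where $S$ is affine.

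First I would observe that by the definition of the CPA spline, on the fixed region $w$ the map $S$ agrees with the single affine map $x\mapsto A_w x+b_w$. Translations by $b_w$ are Euclidean isometries and in particular preserve the $n$-dimensional Hausdorff measure of any set, so the problem reduces to showing $\mathrm{vol}(A_w(w))=\mathrm{vol}(w)\sqrt{\det(A_w^{\mathrm T}A_w)}$, where $\mathrm{vol}$ denotes the $n$-dimensional Hausdorff measure (which coincides with Lebesgue measure intrinsically on $w$).

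Next I would invoke the area formula for Lipschitz maps (equivalently, the change-of-variables formula for injective linear maps into a higher-dimensional ambient space): for any measurable $w\subset\mathbb{R}^n$ and linear $A_w:\mathbb{R}^n\to\mathbb{R}^m$,
\begin{equation}
\mathrm{vol}(A_w(w)) \;=\; \int_{w} \sqrt{\det\bigl((DA_w)^{\mathrm T}(DA_w)\bigr)}\,dx.
\end{equation}
Because the differential of an affine map is the constant linear part $A_w$, the integrand equals $\sqrt{\det(A_w^{\mathrm T}A_w)}$ on all of $w$, and pulling it out of the integral yields the claimed identity. As a sanity check, in the square case $m=n$ this quantity is $|\det(A_w)|$, the familiar Lebesgue Jacobian.

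For readers who prefer a linear-algebraic derivation rather than citing the area formula, I would alternatively justify the factor $\sqrt{\det(A_w^{\mathrm T}A_w)}$ via the singular value decomposition $A_w=U\Sigma V^{\mathrm T}$, where $V\in\mathbb{R}^{n\times n}$ is orthogonal, $U\in\mathbb{R}^{m\times n}$ has orthonormal columns, and $\Sigma=\mathrm{diag}(\sigma_1,\ldots,\sigma_n)$. The map $V^{\mathrm T}$ is an isometry of $\mathbb{R}^n$, the map $U$ is an isometric embedding into $\mathbb{R}^m$, and $\Sigma$ scales $n$-dimensional volume by $\prod_i\sigma_i$. Hence volume changes by $\prod_i\sigma_i=\sqrt{\det(\Sigma^{\mathrm T}\Sigma)}=\sqrt{\det(A_w^{\mathrm T}A_w)}$, which gives the result. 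The only subtlety worth flagging is the rectangular case $m>n$, where one must interpret $\mathrm{vol}(S(w))$ as the intrinsic $n$-dimensional measure of the embedded image rather than an $m$-dimensional Lebesgue measure; once that convention is fixed, no real obstacle remains.
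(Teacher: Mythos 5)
Your proposal is correct and follows essentially the same route as the paper's proof: both interpret $\mathrm{vol}(S(w))$ as the intrinsic $n$-dimensional measure of the affinely embedded image and obtain the scaling factor $\sqrt{\det(A_w^{\mathrm{T}}A_w)}$ from the Gram/metric-tensor (area-formula) Jacobian, with the SVD giving the product-of-singular-values interpretation. Your explicit handling of the translation $b_w$ and of the Hausdorff-measure convention in the rectangular case is a slightly cleaner statement of the same argument.
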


\begin{theorem}\label{theo:pd}
    Denote the probability density of the input space as $p(x)$ and the probability density generated by $S$ as $p_S(z)$, where $z=S(x)$, we have:
    \begin{equation}
        {p_S}(z) = \sum\nolimits_{w \in \Omega } {\frac{{p({{(A_w^{\rm{T}}{A_w})}^{ - 1}}A_w^{\rm{T}}(z - {b_w}))}}{{\sqrt {\det (A_w^{\rm{T}}{A_w})} }}} {\mathbbm{1}_{x \in w}}
    \end{equation}
\end{theorem}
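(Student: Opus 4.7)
The plan is to establish the pushforward of the input density $p(x)$ under the CPA mapping $S$ by doing the change-of-variables separately on each affine piece and then assembling the contributions. First, I would fix an arbitrary region $w \in \Omega$. On $w$ the map $S$ reduces to the injective affine map $x \mapsto A_w x + b_w$, so for $z$ lying in the image $S(w)$ there is a unique preimage $x_w(z) = (A_w^{\mathrm{T}} A_w)^{-1} A_w^{\mathrm{T}}(z - b_w)$, obtained by applying the Moore--Penrose pseudoinverse of $A_w$ (which coincides with the usual inverse when $A_w$ is square and full-rank, and is the correct object when $A_w$ has full column rank but is rectangular, as is typical for a feature map that embeds $\mathcal{X}$ into a higher-dimensional ambient space).

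Next, I would invoke Lemma 1 to quantify how probability mass is stretched. Lemma 1 gives $\mathrm{vol}(S(w)) = \mathrm{vol}(w)\sqrt{\det(A_w^{\mathrm{T}} A_w)}$, so an infinitesimal volume element on $w$ is dilated by the factor $\sqrt{\det(A_w^{\mathrm{T}} A_w)}$ when pushed to the image manifold. Conservation of probability on each piece then forces $p_S(z)\,\mathrm{vol}(S(dw)) = p(x_w(z))\,\mathrm{vol}(dw)$, yielding the per-region density
\begin{equation*}
p_S^{(w)}(z) = \frac{p\bigl((A_w^{\mathrm{T}} A_w)^{-1} A_w^{\mathrm{T}}(z - b_w)\bigr)}{\sqrt{\det(A_w^{\mathrm{T}} A_w)}}.
\end{equation*}
Finally, because $\Omega$ is a partition of the input space, the events $\{x \in w\}_{w \in \Omega}$ are disjoint and exhaustive, so the global pushforward density is the sum of the per-region contributions weighted by the indicators that select the correct preimage region, which gives exactly the claimed formula.

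The main obstacle I expect is the rectangular-Jacobian subtlety: unlike the textbook change-of-variables, $A_w$ need not be square, so the usual $|\det A_w|$ Jacobian factor is undefined and must be replaced by the Gram determinant $\sqrt{\det(A_w^{\mathrm{T}} A_w)}$; this is precisely why the inverse appears as the pseudoinverse $(A_w^{\mathrm{T}} A_w)^{-1} A_w^{\mathrm{T}}$ rather than $A_w^{-1}$. A secondary care point is the interpretation of $p_S$ as a density on the CPA manifold $S(\mathcal{X})$ (a lower-dimensional surface embedded in the ambient feature space) rather than as a Lebesgue density on the full ambient space; once one fixes the reference Hausdorff measure on each image piece $S(w)$, Lemma 1 yields the Jacobian factor immediately and the proof reduces to bookkeeping across the partition.
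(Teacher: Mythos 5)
Your proposal is correct and follows the same overall strategy as the paper: invert each affine piece via the pseudoinverse $(A_w^{\mathrm{T}}A_w)^{-1}A_w^{\mathrm{T}}$, attach the Gram-determinant Jacobian $\sqrt{\det(A_w^{\mathrm{T}}A_w)}$ in place of the undefined $|\det A_w|$, and sum the per-region contributions over the partition. The one place you diverge is in how the Jacobian factor is obtained: you cite Lemma 1 directly and argue by infinitesimal conservation of mass, $p_S(z)\,\mathrm{vol}(S(dw)) = p(x_w(z))\,\mathrm{vol}(dw)$, whereas the paper writes the pushforward probability of a measurable set as an integral, identifies $J_{G^{-1}} = A_w^{+}$, and then proves via the SVD that $\sigma_i(A_w^{+}) = \sigma_i(A_w)^{-1}$, hence $\sqrt{\det((A_w^{+})^{\mathrm{T}}A_w^{+})} = 1/\sqrt{\det(A_w^{\mathrm{T}}A_w)}$. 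Your route is shorter and reuses Lemma 1 (which the paper proves but then effectively re-derives inside the Theorem 1 proof), at the cost of the mass-conservation step being a heuristic differential argument; to make it as explicit as the paper you would integrate over an arbitrary measurable subset of $S(w)$ and apply the area formula, which is exactly the bookkeeping the paper's displayed computation performs. Your remarks on full column rank of $A_w$ (so the piecewise map is injective and $A_w^{\mathrm{T}}A_w$ is invertible) and on interpreting $p_S$ as a density with respect to the Hausdorff measure on the image manifold match the paper's implicit "invertible assumption," so there is no gap.
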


For the detailed proof, please refer to Appendix. Note that the pre-trained $f$ consists of affine transformations, ReLU operators, absolute values, and max-pooling, it can be approximately viewed as CPA mapping with only one region, thus, we have:
\begin{equation}\label{vbhjmm}
    f\left( x \right) \approx {A_w}x + {b_w}
\end{equation}
Then, taking derivatives of both sides of Equation \ref{vbhjmm} with respect to $x$, we can obtain that ${A_w} \approx {J_f}\left( x \right)$, where ${J_f}$ is the Jacobian matrix of $f$.


At this point, we begin to elaborate on the UMM process. For the feature extractor $f$ pre-trained by SSL method, we denote $f = {f_{e-l}} \cdot {f_e}$, where ${f_e}$ represents the part of $f$ from the input layer to the early layer, and $f_{e-l}$ represents the part of $f$ from the early layer to the last layer. Then, UMM tunes only the parameters of the layers after the early layer of $f$, e.g., $f_{e-l}$. The training dataset for training UMM is consistent with the training set used for pre-training $f$, e.g., $X_{tr}^{aug}$. The early layer feature representation $Z_{tr - e}^{aug} = \{ {z_{{1^e}}^1,z_{{1^e}}^2,...,z_{{N^e}}^1,z_{{N^e}}^2} \}$ is obtained by inputting $X_{tr}^{aug}$ into ${f_e}$, where $z_{{i^e}}^j = {f_e}(x_i^j)$. The last layer feature representation $Z_{tr}^{aug} = \{ {z_{{1}}^1,z_{{1}}^2,...,z_{{N}}^1,z_{{N}}^2} \}$ is obtained by inputting $Z_{tr - e}^{aug}$ into $f_{e-l}$, where $z_{{i}}^j = {f_{e-l}}(z_{i^e}^j)$. Without loss of generality, we assume that $Z_{tr - e}^{aug}$ follows a uniform distribution, e.g. $p(z_{{i^e}}^j) = 1/2N$, this is because that $f_e$ is fixed in this stage. Then, by Theorem \ref{theo:pd}, we have:$p(z_i^j) = {(2N)^{ - 1}}{(\det ({J_{{f_{e-l}}}}{(z_{{i^e}}^j)^{\rm{T}}}{J_{{f_{e-l}}}}(z_{{i^e}}^j)))^{ - 1/2}}$. Finally, the objective function of UMM can be expressed as:
\begin{equation}\label{umm:eq}
\begin{array}{l}
\mathop {\min }\limits_{{f_{e - l}}} {{\mathcal{L}}_{ssl}}(Z_{tr - e}^{aug},f_{e - l}^*) - \alpha {\mathcal{L}}(\Delta R,Z_{tr - e}^{aug},f_{e - l}^*)\\
s.t.\; f_{e - l}^*  \gets {f_{e - l}} - \lambda {\nabla _{{f_{e - l}}}}\{ {{\mathcal{L}}_{ssl}}(Z_{tr - e}^{aug},f_{e - l}) \\ 
\quad\quad\quad\quad\quad\quad\quad\quad\quad\quad+ \beta {\rm{JS}}(p(Z_{tr - e}^{aug})|p(Z_{tr}^{aug}))\} 
\end{array}
\end{equation}
where ${\mathcal{L}}(\Delta R,Z_{tr - e}^{aug},f_{e - l}^*)=\Delta R({}^*Z_{tr}^{aug},\Pi ,\varepsilon )$, ${{\mathcal{L}}_{ssl}}( \cdot ) = {{\mathcal{L}}_{{\rm{align}}}}( \cdot ) + {{\mathcal{L}}_{{\rm{prior}}}}( \cdot )$, \rm{JS} is the JS-divergence, $\alpha ,\beta$ are the hyper-parameters, and $\lambda$ is the learning rate. Algorithm \ref{alg1} is the pseudo-code of UMM.

Note that self-supervised learning methods can be regarded as instance-based methods, in other words, self-supervised learning treats each sample as a separate class during the modeling process. Therefore, when calculating ${\mathcal{L}}(\Delta R,Z_{tr - e}^{aug},f_{e - l}^*)$, we let $\Pi  = \{ {\Pi ^j} \in {{\mathbbm{R}}^{(2N - 1) \times (2N - 1)}}\} _{j = 1}^{2N}$, where ${\Pi ^j}(i,i) = {z_i}^{\rm{T}}{z_j}/\sum\nolimits_{k = 1,k \ne j}^{k = 2N} {{z_k}^{\rm{T}}{z_j}} $ represents the probability that the $i$-th sample $z_i$ and the $j$-th sample $z_j$ in ${}^*Z_{tr}^{aug}$ belong to the same class and ${}^*Z_{tr}^{aug}$ means that the sample
features in ${}^*Z_{tr}^{aug}$ are calculated based on $f_{e-l}^*$. Meanwhile, ${R_c}({}^*Z_{tr}^{aug},\varepsilon |\Pi ) \approx \sum\nolimits_{j = 1}^{2N} {{\rm{tr}}({\textstyle{{m} \over {(2N-1){\varepsilon ^2}}}}{}^*Z_{tr}^{aug}({z_j}){\Pi}^j {}^*Z_{tr}^{aug}{{({z_j})}^{\rm{T}}})} $, where ${{}^*Z_{tr}^{aug}({z_j})}$ denotes removing the $j$-th sample $z_j$ from $Z_{tr}^{aug}$ and $m$ represents the dimension of the samples in ${}^*Z_{tr}^{aug}$. From the constraint condition in Equation \ref{umm:eq}, we can obtain that $f_{e - l}^*$ is a function of $f_{e - l}$. Thus, minimizing the objective function in Equation \ref{umm:eq} can be regarded as using the quadratic gradient to update $f_{e - l}$. Also, the constraint condition in Equation \ref{umm:eq} is equivalent to:
\begin{equation}\label{umm:evshdhq}
\mathop {\min }\limits_{{f_{e - l}}}{{\cal L}_{ssl}}(Z_{tr - e}^{aug},{f_{e - l}},{f_{ph}}) + \beta {\rm{JS}}(p(Z_{tr - e}^{aug})|p(Z_{tr}^{aug}))
\end{equation}
Thus, Equation \ref{umm:eq} can be regarded as a bi-level optimization process.

Upon closer examination of Equation \ref{umm:eq} from a more granular perspective, it can be observed that for any $f_{e - l}$, the optimization process of $f_{e - l}^*$ is controlled by $f_{e - l}$, i.e., $f_{e - l}^*$ changes as $f_{e - l}$ changes. Also, we can see that $f_{e - l}^*$ is obtained by minimizing both ${{\cal L}_{ssl}}(Z_{tr - el}^{aug},{f_{e - l}},{f_{ph}})$ and ${\rm{JS}}(p(Z_{tr - el}^{aug})|p(Z_{tr}^{aug}))$. So, we can regard $f_{e - l}^*$ as having absorbed the generalization information from the early layer to the last layer. Then, the optimization process of the objective function in Equation \ref{umm:eq} can be reformulated as $f_{e - l} \gets {f_{e - l}} - \lambda {\nabla _{{f_{e - l}}}}\{ {{\mathcal{L}}_{ssl}}(Z_{tr - e}^{aug},f_{e - l}^*) - \alpha {\mathcal{L}}(\Delta R,Z_{tr - e}^{aug},f_{e - l}^*)\}$, where ${\nabla _{{f_{e - l}}}}\{ {{\mathcal{L}}_{ssl}}(Z_{tr - e}^{aug},f_{e - l}^*) - \alpha {\mathcal{L}}(\Delta R,Z_{tr - e}^{aug},f_{e - l}^*)\}$ can be regarded as taking the 2nd order derivative with respect to $f_{e - l}$. In other words, the optimization process of Equation \ref{umm:eq} can be regarded as consisting of two step, the first step is to obtain different $f_{e - l}^*$ by adjusting the value of $f_{e - l}$, and the second step is to adjust ${{\mathcal{L}}_{ssl}}(Z_{tr - e}^{aug},f_{e - l}^*) - \alpha {\mathcal{L}}(\Delta R,Z_{tr - e}^{aug},f_{e - l}^*)$ by different $f_{e - l}^*$, choosing the $f_{e - l}$ corresponding to the $f_{e - l}^*$ that minimizes ${{\mathcal{L}}_{ssl}}(Z_{tr - e}^{aug},f_{e - l}^*) - \alpha {\mathcal{L}}(\Delta R,Z_{tr - e}^{aug},f_{e - l}^*)$. This corresponds exactly to the motivation for designing the UMM, that is UMM first adjusts the coding rate reduction using the JS-divergence between the feature distributions of different layers, and then uses the coding rate reduction to regulate the feature learning process of SSL.

In the end, we can understand Equation \ref{umm:eq} from two levels. The first level (constraint condition) aims to use the features extracted by the early layer to adjust the features extracted by the last layer that face overfitting problems. The second level (objective function) further restricts the first level, aiming to constrain its behavior, that is, it should use the information contained in the early layer that can maximize the coding rate reduction of the last layer features to adjust the features extracted by the last layer. In short, Equation \ref{umm:eq} can be understood as undoing memorization of the last layer by rewinding. Meanwhile, UMM can be easily integrated with any self-supervised learning method.


\begin{algorithm}[t]
	\SetAlgoLined
	\KwIn{Training Data $D=\{x\}_{i=1}^n$; Batch Size $N$; Pre-trained Feature Extractor $f=f_{e-l} \cdot f_{e}$; Projection Head $f_{ph}$; Hyper-parameters $\alpha ,\beta ,\varepsilon$; and Learning Rate $\lambda, \gamma $}
	\KwOut{The optimal encoder: $f_{e-l}$}
	
	\For{sample batch $X_{tr}$ from $D$}{
		\# generate two augmented views\\
		${}^1X_{tr}^{aug},{}^2X_{tr}^{aug} = T(X)$, $T$ is data augmentation methods, let $X_{tr}^{aug}=\{{}^1X_{tr}^{aug},{}^2X_{tr}^{aug}\}$\\ 
		\# obtain the output embeddings of the early layer $f_{e}$ \\
		$Z_{tr-e}^{aug} = f_{e}(X_{tr}^{aug})$\\  
		\# obtain the output embeddings of the last layer $f_{e-l}$\\
        $Z_{tr}^{aug} = f_{e-l} \cdot f_{e}(X_{tr}^{aug})$\\  
		\# calculate the probability value \\
		$p(Z_{tr-e}^{aug}) = {(2N)^{ - 1}}{(\det ({J_{{f_{e-l}}}}{(Z_{tr-e}^{aug})^{\rm{T}}}{J_{{f_{e-l}}}}(Z_{tr-e}^{aug})))^{ - 1/2}}$\\
        $p(Z_{tr}^{aug}) = {(2N)^{ - 1}}{(\det ({J_{{f}}}{(Z_{tr}^{aug})^{\rm{T}}}{J_{{f}}}(Z_{tr}^{aug})))^{ - 1/2}}$\\      
		\# update the inner loop of Equation 6 \\
		$f_{e - l}^* = {f_{e - l}} - \lambda {\nabla _{{f_{e - l}}}}\{ {{\mathcal{L}}_{ssl}}(Z_{tr - e}^{aug},f_{e - l},{f_{ph}}) + \beta {\rm{JS}}(p(Z_{tr - e}^{aug})|p(Z_{tr}^{aug}))$\\
		\# update the outer loop of Equation 6\\
		$f_{e - l} = {f_{e - l}} - \gamma {\nabla _{{f_{e - l}}}}\{ {{\mathcal{L}}_{ssl}}(Z_{tr - e}^{aug},f_{e - l}^*,{f_{ph}}) - \alpha {\mathcal{L}}(\Delta R,Z_{tr - e}^{aug},f_{e - l}^*)\}$
	}
	\caption{The Pseudo-Code of UMM
}

 \label{alg1}
\end{algorithm}

\subsection{Theoretical Analysis from a Causal Perspective}

We perform a causal analysis of the self-supervised learning methods from the perspective of data generation. Based on \cite{2021Contrastive,2017Nonlinear}, we can obtain that the data in a realistic scenario can be regarded as obtained from the generating factors, and the feature extraction process of the data can be regarded as extracting the corresponding generating factors. In other words, when the feature representation of a sample can be obtained by an invertible transformation to the corresponding generating factor, then the feature representation of the sample and the generating factor can be considered equivalent. Formally, let $Z$ denote the set of feature representations of the original dataset $X$, let $S_{r}$ denote the task-relevant information contained in $Z$, i.e., generalizable knowledge or foreground related knowledge, and let $S_{ur}$ denote the task-irrelevant information contained in $Z$, i.e., background relevant knowledge. We assume that the original dataset ${X}$ is generated by a mixture function $\rm{f}$ that takes $Z$ as input, e.g., ${X}={\rm{f}}(Z)$. We apply augmentation to ${X}$ to obtain the augmented dataset ${X}^{aug}$. As shown in Section \ref{sec:me}, the features extracted by the early layer of the self-supervised learning methods based on augmentation are generalizable. Therefore, from the perspective of data generation, the augmented dataset ${X}^{aug}$ can be regarded as generated by $S_{r}$ and perturbed $S_{ur}$. We denote the perturbed $S_{ur}$ as $S_{ur}^{*}$, which can be understood as only changing some of the task-irrelevant information in $S_{ur}$.

\begin{table*}
	\centering
	\caption{ The classification accuracies of a linear classifier (linear) and a 5-nearest neighbors classifier (5-nn) with a ResNet-18 as the feature extractor. In the table, the methods with * denote the results with only $\mathcal{L}_{ssl}$ and ${\rm{JS}}(p(Z_{tr - el}^{aug})|p(Z_{tr}^{aug}))$. The methods with ** denote the results with only $\mathcal{L}_{ssl}$ and ${\mathcal{L}}(\Delta R,Z_{tr - el}^{aug},f_{e - l}^*)$.}
	\label{tab:1}
 \resizebox{\linewidth}{!}{
 
	\begin{tabular}{lcccccccc}
		\toprule
		\multirow{2.5}{*}{Method} & \multicolumn{2}{c}{CIFAR-10} & \multicolumn{2}{c}{CIFAR-100} &\multicolumn{2}{c}{STL-10} & \multicolumn{2}{c}{
		Tiny ImageNet} \\
		\cmidrule(lr){2-3} \cmidrule(lr){4-5} \cmidrule(lr){6-7} \cmidrule(lr){8-9}
		& \(\mathbf{linear}\) & \(\mathbf{5-nn}\) & \(\mathbf{linear}\) & \(\mathbf{5-nn}\) & \(\mathbf{linear}\) & \(\mathbf{5-nn}\) & \(\mathbf{linear}\) & \(\mathbf{5-nn}\)\\
       \midrule
		MoCo \cite{moco} & 91.69$\pm$0.12 & 88.66$\pm$0.14& 67.22$\pm$0.16 & 56.29$\pm$0.25 & 90.64$\pm$0.28 & 88.01$\pm$0.19 & 50.92$\pm$0.22 & 35.55$\pm$0.16 \\
		SimSiam \cite{chen2021exploring} & 91.71$\pm$0.27 & 88.65$\pm$0.17& 67.02$\pm$0.26 & 56.36$\pm$0.19 & 91.01$\pm$0.19 & 88.16$\pm$0.19 & 51.14$\pm$0.20& 35.67$\pm$0.16 \\
	    SimCLR \cite{chen2020simple} & 91.80$\pm$0.15 & 88.42$\pm$0.15 & 66.83$\pm$0.27 & 56.56$\pm$0.18 & 90.51$\pm$0.14 & 85.68$\pm$0.10 & 48.84$\pm$0.15 & 32.86$\pm$0.25 \\
		BYOL \cite{byol} & 91.93$\pm$0.22 & 89.45$\pm$0.22& 66.60$\pm$0.16 & 56.82$\pm$0.17 & 91.99$\pm$0.13 & 88.64$\pm$0.20 & 51.00$\pm$0.12 & 36.24$\pm$0.28 \\
		SwAV \cite{swav} & 91.03$\pm$0.19 & 89.52$\pm$0.24 & 66.56$\pm$0.17 & 57.01$\pm$0.25 & 90.72$\pm$0.29 & 86.24$\pm$0.26 & 52.02$\pm$0.26& 37.40$\pm$0.11\\
		Barlow Twins \cite{zbontar2021barlow} & 90.88$\pm$0.19 & 89.68$\pm$0.21 & 66.13$\pm$0.10& 56.70$\pm$0.25 & 90.38$\pm$0.13 & 87.13$\pm$0.23 & 49.78$\pm$0.26& 34.18$\pm$0.18  \\
		W-MSE \cite{chen2021exploring} & 91.99$\pm$0.12 &89.87$\pm$0.25 & 67.64$\pm$0.16 & 56.45$\pm$0.26& 91.75$\pm$0.23 & 88.59$\pm$0.15 & 49.22$\pm$0.16 & 35.44$\pm$0.10 \\
  
		RELIC v2 \cite{tomasev2022pushing}& 91.92$\pm$0.14 & 90.02$\pm$0.22 & 67.66$\pm$0.20 & 57.03$\pm$0.18 & 91.10$\pm$0.23 & 88.66$\pm$0.12 & 49.33$\pm$0.13 & 35.52$\pm$0.22\\

            LMCL \cite{chen2021large} & 91.91$\pm$0.25 & 88.52$\pm$0.29 & 67.01$\pm$0.18 & 56.86$\pm$0.14 & 90.87$\pm$0.18 & 85.91$\pm$0.25 & 49.24$\pm$0.18 & 32.88$\pm$0.13 \\
            ReSSL \cite{zheng2021ressl} & 90.20$\pm$0.16 & 88.26$\pm$0.18 & 66.79$\pm$0.12 & 53.72$\pm$0.28 & 88.25$\pm$0.14 & 86.33$\pm$0.17 & 46.60$\pm$0.18 & 32.39$\pm$0.20\\
            SSL-HSIC \cite{li2021self} & 91.95$\pm$0.14 & 89.99$\pm$0.17 & 67.23$\pm$0.26 & 57.01$\pm$0.27 & 92.09$\pm$0.20 & 88.91$\pm$0.29 & 51.37$\pm$0.15 & 36.03$\pm$0.12 \\
CorInfoMax\cite{CorInfoMax}& 91.81$\pm$0.11 & 89.85$\pm$0.13 & 67.09$\pm$0.24 & 56.92$\pm$0.23 & 91.85$\pm$0.25 & 89.99$\pm$0.24 & 51.23$\pm$0.14 & 35.98$\pm$0.09\\
MEC\cite{MEC}& 90.55$\pm$0.22 & 87.80$\pm$0.10 & 67.36$\pm$0.27 & 57.25$\pm$0.25 & 91.33$\pm$0.14 & 89.03$\pm$0.33 & 50.93$\pm$0.13 & 36.28$\pm$0.14\\
VICRegL\cite{Vicregl}& 90.99$\pm$0.13 & 88.75$\pm$0.26 & 68.03$\pm$0.32 & 57.34$\pm$0.29 & 92.12$\pm$0.26 & 90.01$\pm$0.20 & 51.52$\pm$0.13 & 36.24$\pm$0.16\\
  \hline
	
 SimSiam* &91.91$\pm$0.26 \textcolor{red}{$\uparrow 0.20$} &89.38$\pm$0.14 \textcolor{red}{$\uparrow 0.73$}&67.29$\pm$0.14 \textcolor{red}{$\uparrow 0.27$}&57.26$\pm$0.19 \textcolor{red}{$\uparrow 0.90$}&91.31$\pm$0.26 \textcolor{red}{$\uparrow 0.30$}&89.69$\pm$0.13 \textcolor{red}{$\uparrow 1.53$}&51.35$\pm$0.14 \textcolor{red}{$\uparrow 0.21$}&36.82$\pm$0.11 \textcolor{red}{$\uparrow 1.15$}\\
 SimCLR* &92.03$\pm$0.12 \textcolor{red}{$\uparrow 0.23$}&89.14$\pm$0.13 \textcolor{red}{$\uparrow 0.72$}&66.95$\pm$0.17 \textcolor{red}{$\uparrow 0.12$}&56.95$\pm$0.15 \textcolor{red}{$\uparrow 0.39$}&90.71$\pm$0.22 \textcolor{red}{$\uparrow 0.20$}&85.88$\pm$0.12 \textcolor{red}{$\uparrow 0.20$}&49.18$\pm$0.25 \textcolor{red}{$\uparrow 0.34$}&33.12$\pm$0.16 \textcolor{red}{$\uparrow 0.26$}\\
 BYOL* &92.06$\pm$0.14 \textcolor{red}{$\uparrow 0.13$}&90.02$\pm$0.13 \textcolor{red}{$\uparrow 0.57$}&66.91$\pm$0.12 \textcolor{red}{$\uparrow 0.31$}&56.92$\pm$0.15 \textcolor{red}{$\uparrow 0.10$}&92.03$\pm$0.15 \textcolor{red}{$\uparrow 0.04$}&88.81$\pm$0.21 \textcolor{red}{$\uparrow 0.17$}&51.17$\pm$0.15 \textcolor{red}{$\uparrow 0.17$}&36.42$\pm$0.24 \textcolor{red}{$\uparrow 0.18$}\\
 Barlow Twins* &90.91$\pm$0.11 \textcolor{red}{$\uparrow 0.03$}&90.02$\pm$0.16 \textcolor{red}{$\uparrow 0.34$}&66.71$\pm$0.24 \textcolor{red}{$\uparrow 0.58$}&57.02$\pm$0.15 \textcolor{red}{$\uparrow 0.32$}&90.97$\pm$0.14 \textcolor{red}{$\uparrow 0.59$}&88.61$\pm$0.14 \textcolor{red}{$\uparrow 1.48$}&50.06$\pm$0.24 \textcolor{red}{$\uparrow 0.28$}&34.31$\pm$0.16 \textcolor{red}{$\uparrow 0.13$}\\
 W-MSE* &92.01$\pm$0.11 \textcolor{red}{$\uparrow 0.02$}&90.15$\pm$0.19 \textcolor{red}{$\uparrow 0.28$}&67.72$\pm$0.25 \textcolor{red}{$\uparrow 0.08$}&56.71$\pm$0.14 \textcolor{red}{$\uparrow 0.26$}&91.92$\pm$0.13 \textcolor{red}{$\uparrow 0.17$}&88.72$\pm$0.23 \textcolor{red}{$\uparrow 0.13$}&50.42$\pm$0.16 \textcolor{red}{$\uparrow 1.20$}&35.65$\pm$0.17 \textcolor{red}{$\uparrow 0.21$}\\
 VICRegL* &91.46$\pm$0.21 \textcolor{red}{$\uparrow 0.47$}&89.24$\pm$0.13 \textcolor{red}{$\uparrow 0.49$}&68.28$\pm$0.13 \textcolor{red}{$\uparrow 0.25$}&57.91$\pm$0.10 \textcolor{red}{$\uparrow 0.57$}&92.27$\pm$0.17 \textcolor{red}{$\uparrow 0.15$}&90.10$\pm$0.13 \textcolor{red}{$\uparrow 0.09$}&51.73$\pm$0.23 \textcolor{red}{$\uparrow 0.21$}&36.71$\pm$0.23 \textcolor{red}{$\uparrow 0.47$}\\
\midrule
 SimSiam** &91.97$\pm$0.27 \textcolor{red}{$\uparrow 0.26$}&88.98$\pm$0.44 \textcolor{red}{$\uparrow 0.33$}&67.32$\pm$0.33 \textcolor{red}{$\uparrow 0.30$}&56.47$\pm$0.37 \textcolor{red}{$\uparrow 0.11$}&91.25$\pm$0.42 \textcolor{red}{$\uparrow 0.24$}&88.21$\pm$0.17 \textcolor{red}{$\uparrow 0.05$}&51.53$\pm$0.14 \textcolor{red}{$\uparrow 0.39$}&35.96$\pm$0.24 \textcolor{red}{$\uparrow 0.29$}\\
 SimCLR**  &91.95$\pm$0.32 \textcolor{red}{$\uparrow 0.15$}&88.73$\pm$0.10 \textcolor{red}{$\uparrow 0.31$}&67.23$\pm$0.28 \textcolor{red}{$\uparrow 0.40$}&56.76$\pm$0.27 \textcolor{red}{$\uparrow 0.20$}&90.59$\pm$0.20 \textcolor{red}{$\uparrow 0.08$}&85.82$\pm$0.27 \textcolor{red}{$\uparrow 0.14$}&49.13$\pm$0.43 \textcolor{red}{$\uparrow 0.29$}&33.25$\pm$0.20 \textcolor{red}{$\uparrow 0.39$}\\
 BYOL**  &92.36$\pm$0.35 \textcolor{red}{$\uparrow 0.43$}&89.54$\pm$0.43 \textcolor{red}{$\uparrow 0.09$}&67.07$\pm$0.16 \textcolor{red}{$\uparrow 0.47$}&57.17$\pm$0.44 \textcolor{red}{$\uparrow 0.35$}&92.26$\pm$0.35 \textcolor{red}{$\uparrow 0.27$}&88.71$\pm$0.14 \textcolor{red}{$\uparrow 0.07$}&51.42$\pm$0.11 \textcolor{red}{$\uparrow 0.42$}&36.69$\pm$0.41 \textcolor{red}{$\uparrow 0.45$}\\
 Barlow Twins**  &91.19$\pm$0.23 \textcolor{red}{$\uparrow 0.31$}&90.12$\pm$0.22 \textcolor{red}{$\uparrow 0.44$}&66.39$\pm$0.36 \textcolor{red}{$\uparrow 0.26$}&56.81$\pm$0.37 \textcolor{red}{$\uparrow 0.11$}&90.52$\pm$0.14 \textcolor{red}{$\uparrow 0.14$}&87.46$\pm$0.16 \textcolor{red}{$\uparrow 0.33$}&49.91$\pm$0.25 \textcolor{red}{$\uparrow 0.13$}&34.26$\pm$0.15 \textcolor{red}{$\uparrow 0.08$}\\
 W-MSE**  &92.16$\pm$0.22 \textcolor{red}{$\uparrow 0.17$}&90.07$\pm$0.26 \textcolor{red}{$\uparrow 0.20$}&67.80$\pm$0.45 \textcolor{red}{$\uparrow 0.16$}&56.51$\pm$0.32 \textcolor{red}{$\uparrow 0.06$}&92.01$\pm$0.44 \textcolor{red}{$\uparrow 0.26$}&88.77$\pm$0.21 \textcolor{red}{$\uparrow 0.18$}&49.68$\pm$0.13 \textcolor{red}{$\uparrow 0.46$}&35.62$\pm$0.48 \textcolor{red}{$\uparrow 0.18$}\\
 VICRegL** &91.16$\pm$0.15 \textcolor{red}{$\uparrow 0.17$}&89.08$\pm$0.32 \textcolor{red}{$\uparrow 0.33$}&68.33$\pm$0.20 \textcolor{red}{$\uparrow 0.30$}&57.78$\pm$0.31 \textcolor{red}{$\uparrow 0.44$}&92.55$\pm$0.23 \textcolor{red}{$\uparrow 0.43$}&90.33$\pm$0.48 \textcolor{red}{$\uparrow 0.32$}&51.84$\pm$0.25 \textcolor{red}{$\uparrow 0.32$}&36.36$\pm$0.28 \textcolor{red}{$\uparrow 0.12$}\\
\midrule
 SimSiam + UMM&92.81$\pm$0.16 \textcolor{red}{$\uparrow 1.10$}&90.66$\pm$0.12 \textcolor{red}{$\uparrow 2.01$}&68.31$\pm$0.12 \textcolor{red}{$\uparrow 1.29$}&57.69$\pm$0.16 \textcolor{red}{$\uparrow 1.33$}&92.24$\pm$0.14 \textcolor{red}{$\uparrow 1.23$}&90.05$\pm$0.17 \textcolor{red}{$\uparrow 1.89$}&52.42$\pm$0.07 \textcolor{red}{$\uparrow 1.28$}&37.12$\pm$0.31 \textcolor{red}{$\uparrow 1.45$}\\
 SimCLR + UMM&92.93$\pm$0.14 \textcolor{red}{$\uparrow 1.13$}&90.24$\pm$0.15 \textcolor{red}{$\uparrow 1.82$}&67.95$\pm$0.13 \textcolor{red}{$\uparrow 1.12$}&57.25$\pm$0.24 \textcolor{red}{$\uparrow 0.69$}&91.79$\pm$0.13 \textcolor{red}{$\uparrow 1.28$}&86.99$\pm$0.25 \textcolor{red}{$\uparrow 1.31$}&50.11$\pm$0.15 \textcolor{red}{$\uparrow 1.27$}&34.33$\pm$0.24 \textcolor{red}{$\uparrow 1.47$}\\
 BYOL + UMM&92.99$\pm$0.13 \textcolor{red}{$\uparrow 1.06$}&90.81$\pm$0.17 \textcolor{red}{$\uparrow 1.36$}&67.84$\pm$0.25 \textcolor{red}{$\uparrow 1.24$}&57.22$\pm$0.22 \textcolor{red}{$\uparrow 0.40$}&\bf 93.99$\pm$0.18 \textcolor{red}{$\uparrow 2.00$}&89.95$\pm$0.27 \textcolor{red}{$\uparrow 1.31$}&\bf52.54$\pm$0.19 \textcolor{red}{$\uparrow 1.54$}&\bf 37.61$\pm$0.10 \textcolor{red}{$\uparrow 1.37$}\\
 Barlow Twins + UMM&91.55$\pm$0.26 \textcolor{red}{$\uparrow 0.67$}&90.74$\pm$0.24 \textcolor{red}{$\uparrow 1.06$}&67.77$\pm$0.14 \textcolor{red}{$\uparrow 1.64$}&58.07$\pm$0.24 \textcolor{red}{$\uparrow 1.37$}&92.27$\pm$0.15 \textcolor{red}{$\uparrow 1.89$}&89.69$\pm$0.16 \textcolor{red}{$\uparrow 2.56$}&50.82$\pm$0.13 \textcolor{red}{$\uparrow 1.04$}&35.44$\pm$0.23 \textcolor{red}{$\uparrow 1.26$}\\
 W-MSE + UMM&93.06$\pm$0.13 \textcolor{red}{$\uparrow 1.07$}&91.04$\pm$0.18 \textcolor{red}{$\uparrow 1.17$}&68.62$\pm$0.14 \textcolor{red}{$\uparrow 0.98$}&57.41$\pm$0.14 \textcolor{red}{$\uparrow 0.96$}&92.98$\pm$0.09 \textcolor{red}{$\uparrow 1.23$}&89.35$\pm$0.25 \textcolor{red}{$\uparrow 0.76$}&51.26$\pm$0.21 \textcolor{red}{$\uparrow 2.04$}&36.38$\pm$0.33 \textcolor{red}{$\uparrow 0.94$}\\
 VICRegL + UMM&92.77$\pm$0.15 \textcolor{red}{$\uparrow 1.78$}&90.74$\pm$0.15 \textcolor{red}{$\uparrow 1.99$}&\bf 69.21$\pm$0.27 \textcolor{red}{$\uparrow 1.18$}&\bf 58.42$\pm$0.21 \textcolor{red}{$\uparrow 1.08$}&92.48$\pm$0.19 \textcolor{red}{$\uparrow 0.36$}&\bf 91.17$\pm$0.29 \textcolor{red}{$\uparrow 1.16$}&52.45$\pm$0.17 \textcolor{red}{$\uparrow 0.93$}&37.45$\pm$0.12 \textcolor{red}{$\uparrow 1.21$}\\
 RELIC v2 + UMM&\bf 93.42$\pm$0.26 \textcolor{red}{$\uparrow 1.50$}&\bf 91.19$\pm$0.21 \textcolor{red}{$\uparrow 1.17$}&68.98$\pm$0.17 \textcolor{red}{$\uparrow 1.32$}&58.35$\pm$0.39 \textcolor{red}{$\uparrow 1.32$}&92.14$\pm$0.23 \textcolor{red}{$\uparrow 1.04$}&90.23$\pm$0.24 \textcolor{red}{$\uparrow 1.57$}&50.58$\pm$0.17 \textcolor{red}{$\uparrow 1.25$}&36.70$\pm$0.48 \textcolor{red}{$\uparrow 1.18$}\\
  \bottomrule
	\end{tabular}
}
\label{tab:exp1_1}
\end{table*}

\begin{figure}[tp]
    \centering   \includegraphics[width=0.315\textwidth]{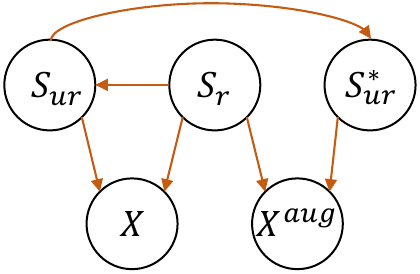}
    \caption{SCM for data generation process. $X$ and $X^{aug}$ represent the original and augmented datasets, respectively. $S_{r}$ denotes the task-relevant information, and $S_{ur}$ denotes the task-irrelevant information. $S_{ur}^{*}$ denote the perturbed $S_{ur}$, which can be understood as only changing some of the task-irrelevant information in $S_{ur}$.}
    \label{Fig.11subqw.1}
\end{figure}

To this end, we use a structural causal model (SCM) to provide a causal explanation for the data generation process and to describe the (allowed) causal relationships between latent variables. This allows us to interpret the augmented dataset $X^{aug}$ in terms of counterfactuals. Figure \ref{Fig.11subqw.1} shows the SCM that we obtain based on our assumptions and understanding of the data generation process. In this SCM, we assume that $S_{r}$ may affect $S_{ur}$, because different foregrounds tend to have characteristic background information, e.g., for a picture of a fish, the background is likely to be water. Therefore, we formalize their relationship as follows: ${S_r}: = {{\rm{f}}_r}({U_r}),{S_{ur}}: = {{\rm{f}}_u}({S_r},{U_{ur}}), {X}={\rm{f}}(S_r, S_{ur})$, where ${U_r}, {U_{ur}}$ are independent exogenous variables, and ${{\rm{f}}_r}, {{\rm{f}}_u}$ are deterministic functions. Given an observation $X = {\rm{f}}({S_r},{S_{ur}})$ where the real foreground exists, we can ask a counterfactual question: what would happen if ${S_{r}}$ remained unchanged and $S_{ur}$ was randomly perturbed? We can answer this question by performing a soft intervention on $S_{ur}$, resulting in an intervention that changes ${\rm{f}}_u$ to $do({S_{ur}}): = {\rm{f}}_u^{*}({S_r},{U_{ur}},{U_A})$, where ${U_A}$ represents a set of random augmentations. Based on the established SCM, fixing the exogenous variables to their factual values, and performing a soft intervention on $S_{ur}$, we obtain the counterfactual observation ${X^{cf}}$ generation process: ${S_r}: = {{\rm{f}}_r}({U_r}),{S^*_{ur}}: = {\rm{f}}_u^{*}({S_r},{U_{ur}},{U_A}),{X^{cf}} = {\rm{f}}({S_r},{S^*_{ur}})$. Therefore, ${X}^{aug}$ can be viewed as a counterfactual version of $X$. For self-supervised learning, the goal is to obtain $Z^{aug}$ from the counterfactual observation ${X}^{aug}$, i.e., ${S_{r}}$ and the perturbed $S_{ur}$. Based on the established SCM and the data generation mechanism, we can draw the following conclusions:

\begin{theorem}\label{theo:fac}
    Given the ${\rm{SCM}}$ shown in Figure \ref{Fig.11subqw.1}, let $f$ denote the feature extractor, let $X^{aug}_1$ and $X^{aug}_2$ denote the corresponding augmented datasets generated by $X$ using different data augmentations. If $f$ can minimize the following objective function:
    \begin{align}
        \mathcal{L}(f,{X^{aug}_1},{X^{aug}_2}) ={\rm D}(f(X_1^{aug}),f(X_2^{aug})) \nonumber \\
        - [H(f({X^{aug}_1})) + H(f({X^{aug}_2}))]
    \end{align}
    where ${\rm D}(\cdot)$ is the distance operator for two discrete distributions and ${H}(\cdot)$ is the information entropy, then $f({X^{aug}_1})$ and $f({X^{aug}_2})$ contain all of task-related information $S_{r}$.
\end{theorem}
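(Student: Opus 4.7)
The plan is to decompose $\mathcal{L}$ into its alignment component $\mathrm{D}(f(X_1^{aug}), f(X_2^{aug}))$ and its negative-entropy component $-[H(f(X_1^{aug})) + H(f(X_2^{aug}))]$, and to argue that each part imposes an independent structural constraint on $f$ at the minimizer. Together, these constraints force $f(X^{aug})$ to be in bijection with $S_r$, which is exactly the statement that all task-relevant information is retained.

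First I would use the SCM to write $X_i^{aug} = \mathrm{f}(S_r, S_{ur,i}^{*})$ for $i \in \{1,2\}$, noting that the two augmented views share the same $S_r$ (the exogenous $U_r$ is fixed once we condition on the same ancestor $X$) while their perturbed irrelevant parts $S_{ur,i}^{*} = \mathrm{f}_u^{*}(S_r, U_{ur}, U_{A,i})$ are conditionally independent given $S_r$ through the independent augmentation noises $U_{A,i}$. Because $\mathrm{D}$ is a nonnegative distance between discrete distributions, its minimum is attained iff the two distributions coincide. Conditioning on $S_r$ and using the conditional i.i.d.\ structure of the views, I would deduce that this minimum can hold only when, for (almost) every fixed $S_r$, the mapping $S_{ur}^{*} \mapsto f(\mathrm{f}(S_r, S_{ur}^{*}))$ is almost surely constant; otherwise the independent draws of $U_{A,1}$ and $U_{A,2}$ would produce nontrivially different conditional laws. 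Hence there exists a measurable $g$ with $f(X^{aug}) = g(S_r)$ a.s., meaning all dependence on the perturbed $S_{ur}^{*}$ has been eliminated.

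Next I would handle the entropy term under this derived functional form. The data-processing inequality gives $H(f(X^{aug})) = H(g(S_r)) \le H(S_r)$, with equality iff $g$ is injective on the support of $S_r$. Minimizing the negative-entropy term therefore saturates the bound and forces $g$ to be a bijection onto its image. Combining both steps, any minimizer satisfies $f(X_i^{aug}) = g(S_r)$ with $g$ bijective, so $I(f(X_i^{aug}); S_r) = H(S_r)$ and $S_r$ can be losslessly recovered from either view; this is the precise sense in which $f(X_1^{aug})$ and $f(X_2^{aug})$ contain all of $S_r$.

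The hardest step will be rigorously promoting the marginal-distribution equality enforced by $\mathrm{D}$ to a pointwise fiber-wise invariance of $f$; the cleanest route is to work on the joint law of $(S_r, X_1^{aug}, X_2^{aug})$ and observe that the conditional laws of $f(X_1^{aug})$ and $f(X_2^{aug})$ given $S_r$ are equal and, by conditional independence, their joint is a product, so the only way the distance vanishes is for these conditional laws to be Dirac masses, yielding the functional relation $f(X^{aug}) = g(S_r)$. A minor secondary issue is checking that the two objectives are not in conflict: since the alignment constraint has already restricted the optimization to $S_r$-measurable functions, the entropy upper bound $H(S_r)$ is attainable by any injective $g$, so the minimizer of the combined loss is well defined.
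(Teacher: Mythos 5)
Your high-level decomposition matches the paper's intent (alignment forces the representation to depend only on $S_r$; entropy maximization forces no loss of $S_r$), but your execution diverges: the paper works with smooth functions, rules out dependence on $S_{ur}$ via a continuity/partial-derivative contradiction on open sets, and handles the entropy part by the Darmois construction plus the maximum-entropy characterization of the uniform distribution on a bounded codomain, whereas you use a conditional-independence/Dirac-mass argument and a data-processing equality-iff-injective argument. Two of your steps are defensible under a charitable reading: the Dirac-mass step works only if ${\rm D}$ is an expected pointwise (coupled) distance such as $\mathbb{E}\,d\bigl(f(X_1^{aug}),f(X_2^{aug})\bigr)$ — if it were literally a distance between the two marginal laws it would vanish identically, since the views are exchangeable, and would constrain nothing — and the inequality $H(g(S_r))\le H(S_r)$ with equality iff $g$ is injective holds for discrete Shannon entropy but fails for differential entropy, which is the regime the paper's own proof actually lives in (uniform on $(0,1)^{d_r}$ with differential entropy $0$).

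The genuine gap is in your final step asserting that a minimizer of the \emph{sum} must simultaneously achieve ${\rm D}=0$ and the entropy ceiling $H(S_r)$. Your "no conflict" remark only shows that, \emph{after} restricting to $S_r$-measurable encoders, the bound $H(S_r)$ is attainable; it does not rule out an unaligned $f$ that retains style information, pays ${\rm D}>0$, but gains entropy, since for general $f$ one only has $H(f(X^{aug}))\le H(X^{aug})=H(S_r,S_{ur}^{*})$, which typically strictly exceeds $H(S_r)$. Without a uniform upper bound on the achievable entropy that is met simultaneously with zero distance, the loss could in principle be driven below the value attained by your bijective $g$, and your characterization of the minimizer collapses. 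The paper closes exactly this hole: the encoder's codomain is (implicitly, following the Darmois construction) the unit hypercube, on which every distribution has entropy at most that of the uniform distribution; the constructed $f={\rm F}\circ{\rm f}^{-1}_{1:d_r}$ attains zero distance and maximal entropy at once, so the global minimum value forces \emph{any} minimizer to zero the distance term and maximize each entropy term individually, after which the invariance and uniformity arguments apply. To repair your proof you would need an analogous a priori bound (e.g., a fixed finite representation alphabet or bounded codomain with its max-entropy value attainable under alignment), stated before you invoke data processing.
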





Please refer to Appendix for the proof. 
According to Theorem \ref{theo:fac}, only by aligning the feature representations of different augmented samples and maximizing the information entropy of the feature representations during the training process can we ensure that SSL learns all task-relevant factors. As known from Figure \ref{Fig.main1}-\ref{Fig.main3}, the information entropy of the feature representations in the SSL early-layer remains maximized and unchanged from a certain stage of training, while the information entropy of the feature representations in the SSL last-layer is indeed maximized during the early stages of training but then decreases and stabilizes. Thus, we can deduce: SSL indeed extracts all task-related generative factors (as inferred from the maximum information entropy of the early-layer's feature representations), but as the network deepens, more task-irrelevant generative factors are extracted (as inferred from the decrease in information entropy of the last-layer's feature representations in the later stages of training, indicating overfitting). Since the dimensionality of the features is fixed and the maximum capacity for information they can hold is also fixed, when the proportion of task-irrelevant information increases, the quantity of task-relevant information is suppressed, thereby degrading SSL performance. Moreover, based on literature \cite{chan2022redunet}, we can interpret coding rate reduction as measuring the information entropy of a distribution from the perspective of a geometric manifold. As derived from Subsection \ref{umm;qw}, the introduction of UMM is precisely to increase the information quantity of the pre-trained SSL's feature representations again. Therefore, based on Theorem 2, we can infer that UMM is effective and can improve generalization.

\begin{table}
	\centering
	\caption{ The top-1 and top-5 classification accuracies of linear classifier on the ImageNet-100 dataset with ResNet-50 as feature extractor. In the table, the methods with * denote the results with only $\mathcal{L}_{ssl}$ and ${\rm{JS}}(p(Z_{tr - el}^{aug})|p(Z_{tr}^{aug}))$. The methods with ** denote the results with only $\mathcal{L}_{ssl}$ and ${\mathcal{L}}(\Delta R,Z_{tr - el}^{aug},f_{e - l}^*)$.}
	\label{tab:image100} 
 \resizebox{0.9\linewidth}{!}{
	\begin{tabular}{lcc}
		\toprule
	    Method & \textbf{Top-1} & \textbf{Top-5} \\
	    \midrule
	    SimCLR \cite{chen2020simple} & 70.15$\pm$0.16 & 89.75$\pm$0.14  \\
		MoCo \cite{moco} & 72.80$\pm$0.12 & 91.64$\pm$0.11 \\ 
		SimSiam \cite{chen2021exploring} & 73.01$\pm$0.21 & 92.61$\pm$0.27 \\ 
		SwAV \cite{swav} & 75.78$\pm$0.16 & 92.86$\pm$0.15 \\
		LMLC \cite{chen2021large} & 75.89$\pm$0.19 & 92.89$\pm$0.28 \\
	    W-MSE \cite{ermolov2021whitening} & 76.01$\pm$0.27 & 93.12$\pm$0.21 \\
        BYOL \cite{byol} & 75.66$\pm$0.18 & 92.07$\pm$0.22 \\
        Barlow Twins \cite{zbontar2021barlow} & 75.97$\pm$0.23 & 92.91$\pm$0.19 \\
        RELIC v2 \cite{tomasev2022pushing} & 75.88$\pm$0.15 & 93.52$\pm$0.13 \\
        ReSSL \cite{zheng2021ressl} & 75.77$\pm$0.21 & 92.91$\pm$0.27 \\
        CorInfoMax\cite{CorInfoMax}& 75.54$\pm$0.20 & 92.23$\pm$0.25\\
        MEC\cite{MEC}& 75.38$\pm$0.17 & 92.84$\pm$0.20\\
        VICRegL\cite{Vicregl}& 75.96$\pm$0.19 & 92.97$\pm$0.26\\
        \midrule
 SimCLR* &71.21$\pm$0.12 \textcolor{red}{$\uparrow 1.06$}&90.42$\pm$0.06 \textcolor{red}{$\uparrow 0.67$}\\
 BYOL* &75.74$\pm$0.15 \textcolor{red}{$\uparrow 0.08$}&92.58$\pm$0.17 \textcolor{red}{$\uparrow 0.51$}\\
 Barlow Twins* &75.99$\pm$0.08 \textcolor{red}{$\uparrow 0.02$}&93.01$\pm$0.13 \textcolor{red}{$\uparrow 0.10$}\\
        \midrule
 SimCLR** &70.27$\pm$0.22 \textcolor{red}{$\uparrow 0.12$}&89.82$\pm$0.37 \textcolor{red}{$\uparrow 0.07$}\\
 BYOL** &75.77$\pm$0.46 \textcolor{red}{$\uparrow 0.11$}&92.21$\pm$0.17 \textcolor{red}{$\uparrow 0.14$}\\
 Barlow Twins** &76.13$\pm$0.49 \textcolor{red}{$\uparrow 0.16$}&92.96$\pm$0.33 \textcolor{red}{$\uparrow 0.05$}\\
\midrule
 SimSiam + UMM &74.15$\pm$0.14 \textcolor{red}{$\uparrow 1.14$}&93.64$\pm$0.22 \textcolor{red}{$\uparrow 1.03$}\\
 SimCLR + UMM &71.75$\pm$0.17 \textcolor{red}{$\uparrow 1.60$}&91.61$\pm$0.21 \textcolor{red}{$\uparrow 1.86$}\\
 W-MSE + UMM &\bf 77.45$\pm$0.05 \textcolor{red}{$\uparrow 1.44$}&94.28$\pm$0.14 \textcolor{red}{$\uparrow 1.16$}\\
 BYOL + UMM &76.98$\pm$0.12 \textcolor{red}{$\uparrow 1.32$}&\bf 94.99$\pm$0.13 \textcolor{red}{$\uparrow 2.92$}\\
 Barlow Twins + UMM &77.18$\pm$0.15 \textcolor{red}{$\uparrow 1.21$}&94.17$\pm$0.17 \textcolor{red}{$\uparrow 1.26$}\\
 VICRegL + UMM&77.12$\pm$0.14 \textcolor{red}{$\uparrow 1.16$}&94.72$\pm$0.13 \textcolor{red}{$\uparrow 1.75$}\\
 RELIC v2 + UMM&77.44$\pm$0.14 \textcolor{red}{$\uparrow 1.56$}&94.73$\pm$0.41 \textcolor{red}{$\uparrow 1.21$}\\
\bottomrule
	\end{tabular}
 }
\end{table}

\begin{table*}[h]
	\centering
	\caption{The Top-1 and Top-5 classification accuracies of linear classification on the ImageNet dataset with
ResNet-50 as the feature extractor. We record the comparison results from 100, 200, 400, and 1000 epochs.}
	\label{tab:imagenet}
 \resizebox{0.9\linewidth}{!}{
		\begin{tabular}{lcccccc}
		\toprule
		\multirow{2.5}{*}{Method}  &\multicolumn{2}{c}{100 Epochs} & \multicolumn{2}{c}{200 Epochs}&\multicolumn{1}{c} {400 Epochs} &\multicolumn{1}{c} {1000 Epochs}\\
	    \cmidrule(lr){2-3} \cmidrule(lr){4-5}\cmidrule(lr){6-6}\cmidrule(lr){7-7}
	    & \textbf{Top-1} & \textbf{Top-5} & \textbf{Top-1} & \textbf{Top-5} & \textbf{Top-1} & \textbf{Top-1}\\
	    \midrule
	     Supervised&71.9 &-&73.4&-&74.9&76.5\\
	  \midrule
	     MoCo \cite{moco} & 64.5$\pm$0.2 & 86.1$\pm$0.1 & 67.5$\pm$0.2 & 88.4$\pm$0.1 & 69.7$\pm$0.1& 71.1$\pm$0.2\\
	     BYOL \cite{byol} & 68.6$\pm$0.2 & 89.9$\pm$0.1 & 71.1$\pm$0.2& 90.4$\pm$0.2 & 72.8$\pm$0.1 & 74.3$\pm$0.2\\
	     SimCLR \cite{chen2020simple} & 66.5$\pm$0.2& 88.1$\pm$0.2 & 68.6$\pm$0.3&89.7$\pm$0.2 & 69.2$\pm$0.2 & 70.4$\pm$0.3\\
	     SwAV \cite{swav} & 68.8$\pm$0.2 & 88.4$\pm$0.2 & 69.1$\pm$0.2 &89.3$\pm$0.2 & 72.2$\pm$0.3& 75.3$\pm$0.1\\
	     Barlow Twins \cite{zbontar2021barlow} & 67.2$\pm$0.2 & 88.6$\pm$0.1& 69.1$\pm$0.3& 88.4$\pm$0.2& 71.4$\pm$0.1 & 73.2$\pm$0.1 \\
	     SimSiam \cite{chen2021exploring} & 68.1$\pm$0.2 & 87.1$\pm$0.2 & 70.0$\pm$0.1 & 88.7$\pm$0.2 & 70.8$\pm$0.3 & 71.3$\pm$0.2\\
	     RELIC v2 \cite{tomasev2022pushing} & 71.0$\pm$0.2 & 90.3$\pm$0.1 & 72.1$\pm$0.2& 90.6$\pm$0.1& 73.9$\pm$0.2 & 77.1$\pm$0.2\\
	     LMCL \cite{chen2021large} & 66.7$\pm$0.1& 89.8$\pm$0.3 & 70.8$\pm$0.2& 90.0$\pm$0.2 & 72.5$\pm$0.2& 72.9$\pm$0.2\\
	     ReSSL \cite{zheng2021ressl}& 67.4$\pm$0.2 & 90.5$\pm$0.2 & 69.9$\pm$0.2 &91.2$\pm$0.1& 72.4$\pm$0.2 & 72.9$\pm$0.3\\
	     SSL-HSIC \cite{li2021self}&69.3$\pm$0.1  &91.0$\pm$0.1 & 70.6$\pm$0.1 &91.5$\pm$0.1&73.8$\pm$0.2 & 74.8$\pm$0.2\\
	CorInfoMax\cite{CorInfoMax} & 70.1$\pm$0.1 & 91.1$\pm$0.2 & 70.8$\pm$0.1 &91.5$\pm$0.2 &73.2$\pm$0.2 & 74.8$\pm$0.3\\
        MEC\cite{MEC}& 69.9$\pm$0.1 & 90.6$\pm$0.1 & 70.3$\pm$0.2&91.2$\pm$0.3 &72.9$\pm$0.2 & 75.0$\pm$0.2\\
        VICRegL\cite{Vicregl}& 69.9$\pm$0.2 & 91.2$\pm$0.1 & 71.4$\pm$0.2 &91.6$\pm$0.2 &73.2$\pm$0.2 & 75.0$\pm$0.2\\
      \midrule
         SimCLR + UMM &67.6$\pm$0.1 \textcolor{red}{$\uparrow 1.1$}&\bf 91.8$\pm$0.3 \textcolor{red}{$\uparrow 3.7$}&71.4$\pm$0.3 \textcolor{red}{$\uparrow 2.8$}&92.1$\pm$0.1 \textcolor{red}{$\uparrow 2.4$}&72.2$\pm$0.3 \textcolor{red}{$\uparrow 3.0$}&73.1$\pm$0.1 \textcolor{red}{$\uparrow 2.7$}\\
 BYOL + UMM &71.7$\pm$0.3 \textcolor{red}{$\uparrow 3.1$}&91.2$\pm$0.1 \textcolor{red}{$\uparrow 1.3$}&72.3$\pm$0.1 \textcolor{red}{$\uparrow 0.3$}&91.2$\pm$0.2 \textcolor{red}{$\uparrow 0.8$}&73.4$\pm$0.3 \textcolor{red}{$\uparrow 0.6$}&76.5$\pm$0.2 \textcolor{red}{$\uparrow 2.2$}\\
 Barlow Twins + UMM &68.2$\pm$0.4 \textcolor{red}{$\uparrow 1.0$}&89.4$\pm$0.2 \textcolor{red}{$\uparrow 0.8$}&72.8$\pm$0.5 \textcolor{red}{$\uparrow 3.7$}&91.5$\pm$0.2 \textcolor{red}{$\uparrow 3.1$}&73.2$\pm$0.3 \textcolor{red}{$\uparrow 1.8$}&75.8$\pm$0.3 \textcolor{red}{$\uparrow 2.6$}\\
 MoCo + UMM &66.5$\pm$0.2 \textcolor{red}{$\uparrow 2.0$}&87.3$\pm$0.1 \textcolor{red}{$\uparrow 1.2$}&70.5$\pm$0.1 \textcolor{red}{$\uparrow 3.0$}&89.8$\pm$0.2 \textcolor{red}{$\uparrow 1.4$}&72.1$\pm$0.2 \textcolor{red}{$\uparrow 2.4$}&72.3$\pm$0.1 \textcolor{red}{$\uparrow 1.2$}\\
 SimSiam + UMM &69.2$\pm$0.2 \textcolor{red}{$\uparrow 1.1$}&88.9$\pm$0.3 \textcolor{red}{$\uparrow 1.8$}&71.4$\pm$0.1 \textcolor{red}{$\uparrow 1.4$}&90.7$\pm$0.2 \textcolor{red}{$\uparrow 2.0$}&71.6$\pm$0.1 \textcolor{red}{$\uparrow 0.8$}&73.6$\pm$0.4 \textcolor{red}{$\uparrow 2.3$}\\
 VICRegL + UMM &\bf 72.5$\pm$0.3 \textcolor{red}{$\uparrow 2.6$}&91.6$\pm$0.2 \textcolor{red}{$\uparrow 0.4$}&72.7$\pm$0.4 \textcolor{red}{$\uparrow 1.3$}&\bf 92.1$\pm$0.3 \textcolor{red}{$\uparrow 0.5$}&73.6$\pm$0.2 \textcolor{red}{$\uparrow 0.4$}&76.5$\pm$0.1 \textcolor{red}{$\uparrow 1.5$}\\
 RELIC v2 + UMM &72.4$\pm$0.4 \textcolor{red}{$\uparrow 1.4$}&91.6$\pm$0.3 \textcolor{red}{$\uparrow 1.3$}&\bf 73.6$\pm$0.2 \textcolor{red}{$\uparrow 1.5$}&92.0$\pm$0.4 \textcolor{red}{$\uparrow 1.4$}&\bf 75.3$\pm$0.2 \textcolor{red}{$\uparrow 1.4$}&\bf 78.2$\pm$0.4 \textcolor{red}{$\uparrow 1.1$}\\
         
  \bottomrule
	\end{tabular}}
\end{table*}

\begin{table}[h]
	\centering
	\caption{The classification accuracies on the ImageNet dataset under a semi-supervised learning task with the ResNet-50 pre-trained on the Imagenet dataset.}
\label{tab:semi}
\resizebox{\linewidth}{!}{
		\begin{tabular}{lccccc}
		\toprule
		\multirow{2.5}{*}{Method} &\multirow{2.5}{*} {Epochs} &\multicolumn{2}{c}{1\%} & \multicolumn{2}{c}{10\%} \\
	    \cmidrule(lr){3-4} \cmidrule(lr){5-6}
	    & & \textbf{Top-1} & \textbf{Top-5} & \textbf{Top-1} & \textbf{Top-5} \\
	     \midrule
	    
	     MoCo\cite{moco}&200&43.8$\pm$0.2 &72.3$\pm$0.1 &61.9$\pm$0.1 &84.6$\pm$0.2\\
	     BYOL\cite{byol}&200&54.8$\pm$0.2 &78.8$\pm$0.1&68.0$\pm$0.2&88.5$\pm$0.2\\
	   \midrule
	     MoCo + UMM &200 &45.5$\pm$0.2 \textcolor{red}{$\uparrow 1.7$}&72.7$\pm$0.3 \textcolor{red}{$\uparrow 0.4$}&62.9$\pm$0.2 \textcolor{red}{$\uparrow 1.0$}&85.4$\pm$0.1 \textcolor{red}{$\uparrow 0.8$}\\
 BYOL + UMM &200 &\bf 55.7$\pm$0.3 \textcolor{red}{$\uparrow 0.9$}&\bf 80.4$\pm$0.1 \textcolor{red}{$\uparrow 1.6$}&\bf 69.2$\pm$0.3 \textcolor{red}{$\uparrow 1.2$}&\bf 89.7$\pm$0.2 \textcolor{red}{$\uparrow 1.2$}\\
	     
	    \midrule
	     MoCo\cite{moco} &1000 &52.3$\pm$0.1 & 77.9$\pm$0.2 &68.4$\pm$0.1 &88.0$\pm$0.2\\
	     BYOL\cite{byol} & 1000 & 56.3$\pm$0.2 & 79.6$\pm$0.2 & 69.7$\pm$0.2& 89.3$\pm$0.1\\
	     SimCLR\cite{chen2020simple} & 1000 & 48.3$\pm$0.2 & 75.5$\pm$0.1 & 65.6$\pm$0.1 & 87.8$\pm$0.2\\
	     BarlowTwins\cite{zbontar2021barlow} & 1000 & 55.0$\pm$0.1& 79.2$\pm$0.1 & 67.7$\pm$0.2 & 89.3$\pm$0.2\\
	     SimSiam\cite{chen2021exploring} & 1000 & 54.9$\pm$0.2 & 79.5$\pm$0.2
      
      & 68.0$\pm$0.1 &89.0$\pm$0.3 \\
	     RELIC v2 \cite{tomasev2022pushing} &1000 & 55.2$\pm$0.2 & 80.0$\pm$0.1& 68.0$\pm$0.2 & 88.9$\pm$0.2\\
	     LMCL \cite{chen2021large} & 1000 & 54.8$\pm$0.2 & 79.4$\pm$0.2 & 70.3$\pm$0.1  & 89.9$\pm$0.2\\
	     ReSSL \cite{zheng2021ressl} & 1000 & 55.0$\pm$0.1 & 79.6$\pm$0.3 & 69.9$\pm$0.1 & 89.7$\pm$0.1\\
	     SSL-HSIC \cite{li2021self} & 1000 & 55.4$\pm$0.3 & 80.1$\pm$0.2 & 70.4$\pm$0.1 & 90.0$\pm$0.1 \\
      CorInfoMax\cite{CorInfoMax}& 1000 & 55.0$\pm$0.2 & 79.6$\pm$0.3 & 70.3$\pm$0.2 & 89.3$\pm$0.2\\
      MEC\cite{MEC}& 1000 & 54.8$\pm$0.1 & 79.4$\pm$0.2&  70.0$\pm$0.1 & 89.1$\pm$0.1\\
      VICRegL\cite{Vicregl}& 1000 & 54.9$\pm$0.1 & 79.6$\pm$0.2 & 67.2$\pm$0.1  & 89.4$\pm$0.2\\
	   \hline
 	     MoCo + UMM  &1000  &54.4$\pm$0.2 \textcolor{red}{$\uparrow 2.1$}&79.6$\pm$0.1 \textcolor{red}{$\uparrow 1.7$}&\bf 70.6$\pm$0.2 \textcolor{red}{$\uparrow 2.2$}&89.6$\pm$0.3 \textcolor{red}{$\uparrow 1.6$}\\
 BYOL + UMM  &1000 &57.1$\pm$0.1 \textcolor{red}{$\uparrow 0.8$}&80.7$\pm$0.3 \textcolor{red}{$\uparrow 1.1$}&70.4$\pm$0.1 \textcolor{red}{$\uparrow 0.7$}&\bf 90.4$\pm$0.3 \textcolor{red}{$\uparrow 1.1$}\\
 SimCLR + UMM &1000 &52.6$\pm$0.1 \textcolor{red}{$\uparrow 4.3$}&76.5$\pm$0.2 \textcolor{red}{$\uparrow 1.0$}&67.2$\pm$0.3 \textcolor{red}{$\uparrow 1.6$}&88.3$\pm$0.1 \textcolor{red}{$\uparrow 0.5$}\\
 Barlow Twins + UMM &1000 &\bf 57.2$\pm$0.2 \textcolor{red}{$\uparrow 2.2$}&80.6$\pm$0.3 \textcolor{red}{$\uparrow 1.4$}&68.8$\pm$0.2 \textcolor{red}{$\uparrow 1.1$}&90.3$\pm$0.3 \textcolor{red}{$\uparrow 1.0$}\\
 SimSiam + UMM  &1000 &56.3$\pm$0.3 \textcolor{red}{$\uparrow 1.4$}&80.2$\pm$0.2 \textcolor{red}{$\uparrow 0.7$}&69.2$\pm$0.3 \textcolor{red}{$\uparrow 1.2$}&90.1$\pm$0.2 \textcolor{red}{$\uparrow 1.1$}\\
 VICRegL + UMM &1000 &56.8$\pm$0.1 \textcolor{red}{$\uparrow 1.9$}&81.2$\pm$0.1 \textcolor{red}{$\uparrow 1.6$}&69.7$\pm$0.2 \textcolor{red}{$\uparrow 2.5$}&90.3$\pm$0.2 \textcolor{red}{$\uparrow 0.9$}\\
 RELIC v2 + UMM &1000 &56.6$\pm$0.5 \textcolor{red}{$\uparrow 1.4$}&\bf 81.5$\pm$0.3 \textcolor{red}{$\uparrow 1.5$}&68.9$\pm$0.4 \textcolor{red}{$\uparrow 0.9$}&\bf 90.4$\pm$0.1 \textcolor{red}{$\uparrow 1.5$}\\
		\bottomrule
	\end{tabular}}
\end{table}

\begin{table*}
	\centering
	\caption{The results of transfer learning on object detection and instance segmentation with C4-backbone as the feature extractor.}
\label{tab:pascol}
		\resizebox{\linewidth}{!}{
		
		\begin{tabular}{lcccccccccccc}
		\toprule
		\multirow{2.5}{*}{Method} &\multicolumn{3}{c}{VOC 07 detection} & \multicolumn{3}{c}{VOC 07+12 detection} &\multicolumn{3}{c}{COCO detection}&\multicolumn{3}{c}{COCO instance segmentation}\\
	    \cmidrule(lr){2-4} \cmidrule(lr){5-7} \cmidrule(lr){8-10} \cmidrule(lr){11-13} 
	    & \(\mathbf{AP_{50}}\)& \(\mathbf{AP}\) & \(\mathbf{AP_{75}}\)& \(\mathbf{AP_{50}}\)& \(\mathbf{AP}\) & \(\mathbf{AP_{75}}\)& \(\mathbf{AP_{50}}\)& \(\mathbf{AP}\) & \(\mathbf{AP_{75}}\)& \(\mathbf{AP^{mask}_{50}}\)& \(\mathbf{AP^{mask}}\) & \(\mathbf{AP^{mask}_{75}}\)\\
	       \midrule
	     Supervised & 74.4 & 42.4 & 42.7 & 81.3 & 53.5 & 58.8 & 58.2 & 38.2 & 41.2 & 54.7 & 33.3 & 35.2\\
	   \midrule
	     SimCLR \cite{chen2020simple} & 75.9 & 46.8 & 50.1 & 81.8 & 55.5 & 61.4 & 57.7 & 37.9 & 40.9 & 54.6 & 33.3 & 35.3\\
	     MoCo \cite{moco} & 77.1 & 46.8 & 52.5 & 82.5 & 57.4 & 64.0 & 58.9 & 39.3 & 42.5 & 55.8 & 34.4 & 36.5\\
	     BYOL\cite{byol} & 77.1 & 47.0 & 49.9 & 81.4 & 55.3 & 61.1 & 57.8 & 37.9 & 40.9 & 54.3 & 33.2 & 35.0\\
	     SwAV \cite{swav} & 75.5 & 46.5 & 49.6 & 82.6 & 56.1 & 62.7 & 58.6 & 38.4 & 41.3 & 55.2 & 33.8 & 35.9\\
	     Barlow Twins \cite{zbontar2021barlow} & 75.7 & 47.2 & 50.3 & 82.6 & 56.8 & 63.4 & 59.0 & 39.2 & 42.5 & 56.0 & 34.3 & 36.5\\
	     SimSiam \cite{chen2021exploring} & 77.3 & 48.5 & 52.5 & 82.4 & 57.0 & 63.7 & 59.3 & 39.2 & 42.1 & 56.0 & 34.4 & 36.7\\
	     MEC \cite{MEC} & 77.4 & 48.3 & 52.3 & 82.8 & 57.5 & 64.5 & 59.8 & 39.8 & 43.2 & 56.3 & 34.7 & 36.8\\
	     RELIC v2 \cite{tomasev2022pushing} & 76.9 & 48.0 & 52.0 & 82.1 & 57.3 & 63.9 & 58.4 & 39.3 & 42.3 & 56.0 & 34.6 & 36.3\\
	 CorInfoMax\cite{CorInfoMax}& 76.8 & 47.6 & 52.2 & 82.4 & 57.0 & 63.4 & 58.8 & 39.6 & 42.5 & 56.2 & 34.8 & 36.5\\
      VICRegL\cite{Vicregl}& 75.9 & 47.4 & 52.3 & 82.6 & 56.4 & 62.9 & 59.2 & 39.8 & 42.1 & 56.5 & 35.1 & 36.8\\    
     \midrule
 SimCLR + UMM &76.7 \textcolor{red}{$\uparrow 0.8$}&48.1 \textcolor{red}{$\uparrow 1.3$}&50.8 \textcolor{red}{$\uparrow 0.7$}&82.7 \textcolor{red}{$\uparrow 0.9$}&56.9 \textcolor{red}{$\uparrow 1.4$}&62.7 \textcolor{red}{$\uparrow 1.3$}&58.9 \textcolor{red}{$\uparrow 1.2$}&39.1 \textcolor{red}{$\uparrow 1.2$}&42.1 \textcolor{red}{$\uparrow 1.2$}&55.7 \textcolor{red}{$\uparrow 1.1$}&34.2 \textcolor{red}{$\uparrow 0.9$}&36.1 \textcolor{red}{$\uparrow 0.8$}\\
 MoCo + UMM &78.2 \textcolor{red}{$\uparrow 1.1$}&47.8 \textcolor{red}{$\uparrow 1.0$}&53.5 \textcolor{red}{$\uparrow 1.0$}&83.6 \textcolor{red}{$\uparrow 1.1$}&58.2 \textcolor{red}{$\uparrow 0.8$}&65.2 \textcolor{red}{$\uparrow 1.2$}&60.2 \textcolor{red}{$\uparrow 1.3$}&40.7 \textcolor{red}{$\uparrow 1.4$}&43.6 \textcolor{red}{$\uparrow 1.1$}&56.9 \textcolor{red}{$\uparrow 1.1$}&35.7 \textcolor{red}{$\uparrow 1.3$}&37.9 \textcolor{red}{$\uparrow 1.4$}\\
 BYOL + UMM &78.3 \textcolor{red}{$\uparrow 1.2$}&48.1 \textcolor{red}{$\uparrow 1.1$}&50.4 \textcolor{red}{$\uparrow 0.5$}&82.6 \textcolor{red}{$\uparrow 1.2$}&56.8 \textcolor{red}{$\uparrow 1.5$}&62.8 \textcolor{red}{$\uparrow 1.7$}&59.6 \textcolor{red}{$\uparrow 1.8$}&39.7 \textcolor{red}{$\uparrow 1.8$}&43.2 \textcolor{red}{$\uparrow 2.3$}&56.7 \textcolor{red}{$\uparrow 2.4$}&35.9 \textcolor{red}{$\uparrow 2.7$}&36.5 \textcolor{red}{$\uparrow 1.5$}\\
 SwAV + UMM &76.4 \textcolor{red}{$\uparrow 0.9$}&47.3 \textcolor{red}{$\uparrow 0.8$}&50.2 \textcolor{red}{$\uparrow 0.6$}&83.2 \textcolor{red}{$\uparrow 0.6$}&57.6 \textcolor{red}{$\uparrow 1.5$}&63.3 \textcolor{red}{$\uparrow 0.6$}&59.7 \textcolor{red}{$\uparrow 1.1$}&39.4 \textcolor{red}{$\uparrow 1.0$}&42.6 \textcolor{red}{$\uparrow 1.3$}&56.6 \textcolor{red}{$\uparrow 1.4$}&35.6 \textcolor{red}{$\uparrow 1.8$}&37.1 \textcolor{red}{$\uparrow 1.2$}\\
 SimSiam + UMM &\bf 78.6 \textcolor{red}{$\uparrow 1.3$}&\bf 49.7 \textcolor{red}{$\uparrow 1.2$}&53.9 \textcolor{red}{$\uparrow 1.4$}&83.5 \textcolor{red}{$\uparrow 1.1$}&\bf 58.7 \textcolor{red}{$\uparrow 1.7$}&\bf 65.5 \textcolor{red}{$\uparrow 1.8$}&60.4 \textcolor{red}{$\uparrow 1.1$}&40.4 \textcolor{red}{$\uparrow 1.2$}&43.3 \textcolor{red}{$\uparrow 1.2$}&57.1 \textcolor{red}{$\uparrow 1.1$}&35.2 \textcolor{red}{$\uparrow 0.8$}&37.3 \textcolor{red}{$\uparrow 0.6$}\\
 MEC + UMM   &78.3 \textcolor{red}{$\uparrow 0.9$}&49.4 \textcolor{red}{$\uparrow 1.1$}&\bf 54.2 \textcolor{red}{$\uparrow 1.9$}&83.7 \textcolor{red}{$\uparrow 0.9$}&58.3 \textcolor{red}{$\uparrow 0.8$}&65.4 \textcolor{red}{$\uparrow 0.9$}&\bf 60.7 \textcolor{red}{$\uparrow 0.9$}&\bf 40.8 \textcolor{red}{$\uparrow 1.0$}&\bf 44.1 \textcolor{red}{$\uparrow 0.9$}&57.2 \textcolor{red}{$\uparrow 0.9$}&35.8 \textcolor{red}{$\uparrow 1.1$}&37.8 \textcolor{red}{$\uparrow 1.0$}\\
 VICRegL + UMM&77.3 \textcolor{red}{$\uparrow 1.4$}&48.6 \textcolor{red}{$\uparrow 1.2$}&53.2 \textcolor{red}{$\uparrow 0.9$}&\bf 83.9 \textcolor{red}{$\uparrow 1.3$}&57.4 \textcolor{red}{$\uparrow 1.0$}&64.7 \textcolor{red}{$\uparrow 1.8$}&59.9 \textcolor{red}{$\uparrow 0.7$}&40.4 \textcolor{red}{$\uparrow 0.6$}&43.3 \textcolor{red}{$\uparrow 1.2$}&\bf 57.7 \textcolor{red}{$\uparrow 1.2$}&35.8 \textcolor{red}{$\uparrow 0.7$}&\bf 38.5 \textcolor{red}{$\uparrow 1.7$}\\    
 RELIC v2 + UMM &78.2 \textcolor{red}{$\uparrow 1.3$}&49.5 \textcolor{red}{$\uparrow 1.5$}&53.2 \textcolor{red}{$\uparrow 1.2$}&83.2 \textcolor{red}{$\uparrow 1.1$}&58.4 \textcolor{red}{$\uparrow 1.1$}&\bf 65.5 \textcolor{red}{$\uparrow 1.6$}&60.2 \textcolor{red}{$\uparrow 1.8$}&\bf 40.8 \textcolor{red}{$\uparrow 1.5$}&43.5 \textcolor{red}{$\uparrow 1.2$}&57.2 \textcolor{red}{$\uparrow 1.2$}&\bf 36.4 \textcolor{red}{$\uparrow 1.8$}&37.4 \textcolor{red}{$\uparrow 1.1$}\\
		\bottomrule
	\end{tabular}
	}
\end{table*}

\section{Experiments} \label{sec:exp}
To evaluate the performance of the proposed UMM, we conduct experiments on multiple downstream tasks. With the pre-trained model from SSL methods, we further train the SSL+UMM according to Algorithm \ref{alg1} and Equation \ref{umm:eq} for 200 epochs on the dataset used for pre-training the corresponding SSL methods. The bi-level optimization process is implemented with the approximate implicit differentiation method [51]. All results reported in this paper are the averages of the results of five independent experiments.

\subsection{Benchmark Datasets}
For unsupervised learning task, we evaluate UMM on six image datasets, including CIFAR-10 dataset \cite{cifar10}, CIFAR-100 dataset \cite{cifar10}, STL-10 dataset \cite{stl10}, Tiny ImageNet dataset \cite{leTinyImagenetVisual2015}, ImageNet-100 dataset \cite{tianContrastiveMultiviewCoding2020}, and ImageNet dataset \cite{krizhevsky2012imagenet}. For semi-supervised learning task, we evaluate UMM on ImageNet dataset. For transfer learning task, we validate UMM on the instance segmentation and object detection dataset, e.g., the PASCAL VOC dataset\cite{everingham2010pascal} and the COCO \cite{lin2014microsoft} dataset. (1) CIFAR-10 dataset is a widely used image classification benchmark dataset with 10 classes, and each class has 5000 images with a resolution of 32$\times$32. (2) CIFAR-100 dataset is a small-scale widely used image classification benchmark dataset with 100 classes, and each of the classes has 5000 images with a resolution of 32$\times$32. (3) STL-10 dataset contains 10 classes, with 500 training images and 800 test images per class, all at a high resolution of 96x96 pixels. Additionally, the dataset includes a vast set of 100,000 unlabeled images for unsupervised learning. (4) Tiny ImageNet dataset is an image classification dataset released by Stanford University in 2016 and is also a subset of ImageNet. It includes 200 classes. Each class has 500 training pictures, 50 verification pictures, and 50 test pictures. (5) ImageNet-100 dataset is a subset of the ImageNet dataset. It has 100 classes, and each of them has 1000 pictures. (6) ImageNet dataset is organized according to the WordNet hierarchy. It is a well-known large-scale dataset that contains 1.3M training images and 50K test images with over 1000 classes. (7) PASCAL VOC dataset is a well-known dataset including object classification, detection, and segmentation that consists of 20 classes with a total of 11,530 images. The PASCAL VOC dataset consists of two parts: VOC 07 and VOC 12. (8) COCO dataset is mainly used for object detection and segmentation tasks. It includes 91 classes, 328,000 samples, and 2,500,000 labels.

\subsection{Unsupervised Learning}
\textbf{Experimental setup.} Experiments follow the most common evaluation protocol for self-supervised learning, freezing the feature extractor network and training a supervised linear classifier on top of it. We utilize the Stochastic Gradient Descent (SGD) optimizer with a momentum of 0.9 to optimize our objective functions. The linear classifier is trained for 500 epochs with a mini-batch of 256 with an initial learning rate of $10^{-2}$, and decays to $5 \times 10^{-6}$ until the training is completed. The feature extractor for small-scale datasets, e.g., CIFAR-10, CIFAR-100, STL-10, and Tiny ImageNet is ResNet-18. For medium-scale datasets, e.g., ImageNet-100, and large-scale datasets, e.g., ImageNet, the feature extractor is ResNet-50.

\noindent\textbf{Results.} Table \ref{tab:exp1_1} shows the results of the linear classifier and 5-nn classifier on four small-scale datasets. We can observe that the experimental results of the proposed UMM outperform the state-of-the-art methods on all four datasets by a significant margin. Specifically, for the CIFAR-10 dataset, the W-MSE + UMM achieves the best results in both linear evaluation and 5-nn classification with an average linear evaluation accuracy of 93.06\% and an average 5-nn classification accuracy of 91.04\%. For the CIFAR-100 dataset, the VICRegL + UMM achieves the best results with an average linear evaluation accuracy of 69.21\% and an average 5-nn classification accuracy of 58.42\%. For the STL-10 dataset, the BYOL + UMM gets the best linear evaluation results of average accuracy 93.99\%, while the VICRegL + UMM gets the best results for 5-nn classification results with an average accuracy 91.17\%. Lastly, for the Tiny ImageNet dataset, BYOL + UMM gets the best results with an average linear evaluation accuracy of 52.54\%, and an average 5-nn classification of 37.61\%.

Table \ref{tab:image100} presents the top-1 and top-5 classification accuracy of linear classifier on ImageNet-100. Notebly, UMM demonstrates its ability to enhence the performance of contrastive learning methods. Specifically, the W-MSE + UMM achieves the highest average top-1 accuracy of 77.45\%, which represents a 1.79\% improvement over the accuracy of W-MSE alone. Additionally, BYOL + UMM achieves the best average top-5 accuracy of 94.99\%, which is 2.92\% better than BYOL's top-5 accuracy. 

Table \ref{tab:imagenet} shows the top-1 and top-5 classification accuracy on the ImageNet dataset. Remarkably, the average classification accuracy of the proposed UMM consistently surpasses that of other state-of-the-art methods in 100, 200, and 400 epochs. For 1000 epochs, although the result of the proposed UMM may not match that of RELIC v2, the results of BYOL + UMM and VICRegL + UMM outperform other state-of-the-art methods with average accuracy of 76.5\%. 

These results allow us to safely conclude that the proposed UMM can improve the performance of SSL methods on unsupervised classification tasks.

\subsection{Semi-Supervised Learning}

\textbf{Experimental setup}. Experiments follow the most common evaluation protocol for semi-supervised learning \cite{zbontar2021barlow}. We create two subsets by sampling 1\% and 10\% of the training dataset while ensuring class balance. We fine-tune the models on these two subsets for 50 epochs. The learning rate for classifier is 1.0 for the 1\% subset and 0.1 for the 10\% subset. The learning rate for the backbone network is set as 0.0001 for the 1\% subset and 0.01 for the 10\% subset.

\noindent\textbf{Results}. In Table \ref{tab:semi}, the classification results obtained on the ImageNet are compared with existing methods using two pre-trained models with 200 and 1000 epochs. From the results, we can observe that the classification accuracy of the proposed UMM outperforms other state-of-the-art methods. Specifically, when only 1\% of labels available in 1000 epochs, the average top-1 accuracy of Barlow Twins + UMM surpasses that of Barlow Twins by 2.2\%, and the average top-5 accuracy of VICRegL + UMM exceeds that of VICRegL by 1.6\%. When only 10\% of labels available in 1000 epochs, the average top-1 accuracy of MoCo + UMM outperforms MoCo by 2.2\%, the average top-5 accuracy of BYOL + UMM outperforms BYOL by 1.1\%. These results demonstrate the effectiveness of the proposed method.

\begin{table}[t]
    \centering
   \caption{Influence of $\alpha$}
    \resizebox{\linewidth}{!}{
    \begin{tabular}{lccccccc}
    		\toprule
    		  Method & $10^{-3}$ & $10^{-2}$ & $10^{-1}$ & $1$ & $10$ & $10^{2}$ & $10^{3}$\\
    		\midrule
    		BYOL + UMM & 91.98 & 92.59 & 92.77 & 92.99 & 92.56 & 91.92 & 91.02\\
    		\bottomrule
    	\end{tabular}}
    \label{tab:relation21}
\end{table}

\begin{table}[t]
    \centering    
    \caption{Influence of $\beta$.}
    \resizebox{\linewidth}{!}{
    \begin{tabular}{lccccccc}
    		\toprule
    		  Method & $10^{-3}$ & $10^{-2}$ & $10^{-1}$ & $1$ & $10$ & $10^{2}$ & $10^{3}$\\
    		\midrule
    		BYOL + UMM & 91.87 & 92.64 & 92.99 & 92.71 & 92.33 & 91.84 & 90.97\\
    		\bottomrule
    	\end{tabular}}
\label{tab:relation22}
\end{table}

\subsection{Transfer Learning}
\textbf{Experimental Setup}. Experiments follow the common setting for transfer learning used by existing methods\cite{chen2020simple,zbontar2021barlow,grill2020bootstrap}. We evaluate UMM on object detection and instance segmentation tasks on Pascal VOC and COCO datasets. For the VOC detection task, we use Faster R-CNN\cite{ren2015faster} with a C4-backbone \cite{wu2019detectron2}, while for the COCO detection and instance segmentation tasks, we use Mask R-CNN\cite{he2017mask} with a 1$\times$ schedule and the same backbone as Faster R-CNN. To fine-tune our models, we search for an optimal learning rate on the target datasets and keep all other parameters the same as those in the Detectron2 library \cite{wu2019detectron2}. We train our Faster R-CNN model on the VOC 07+12 trainval set, which contains 16K images. During training, we reduce the initial learning rate by a factor of 10 after 18K and 22K iterations. Additionally, we train our model using only the VOC 07 trainval set, which contains 5K images, with smaller iterations that match the dataset size. For the Mask R-CNN model, we train it on the COCO 2017 train split and report our results on the val split. We fine-tune this model end-to-end on the target dataset using the same learning rate search strategy as with Faster R-CNN.

\noindent\textbf{Results}. We report the results of our proposed method compared with baseline methods in Table \ref{tab:pascol}. These results highlight the performance improvements achieved by the proposed UMM in both detection and instance segmentation tasks. Specifically, For the VOC 07 detection task, SimSiam + UMM and MEC + UMM demonstrate the best performance. In the VOC 07+12 detection task, VICRegL + UMM, MEC + UMM, and SimSiam + UMM outperform other methods. In the COCO detection task, MEC + UMM, MoCo + UMM achieves the best results. Finally, in the COCO instance segmentation task, VICRegL + UMM and BYOL + UMM get the best results. 
These results demonstrate that the proposed UMM can effectively improves the performance in the transfer learning task.

\begin{figure*}[tb]
\vspace{-0.1cm}
\centering 
\subfigure[CIFAR-10]{
\label{Fig.1subqw.1}
\includegraphics[width=0.315\textwidth]{ 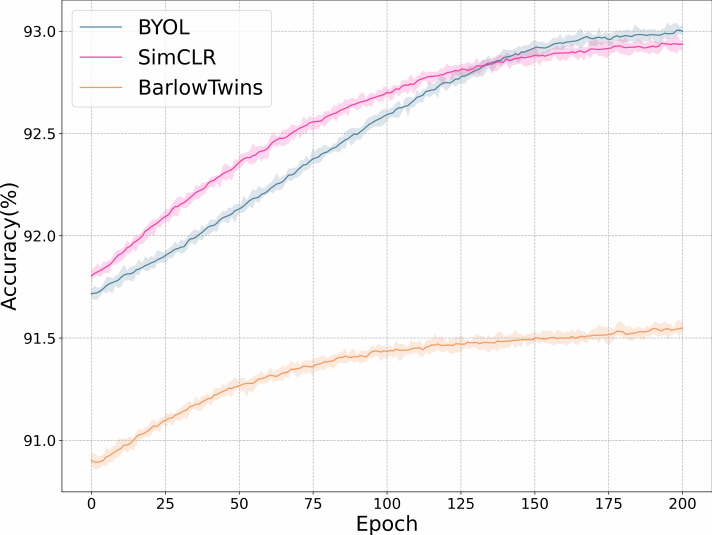}}
\subfigure[CIFAR-100]{
\label{Fig.1subqw.2}
\includegraphics[width=0.315\textwidth]{ 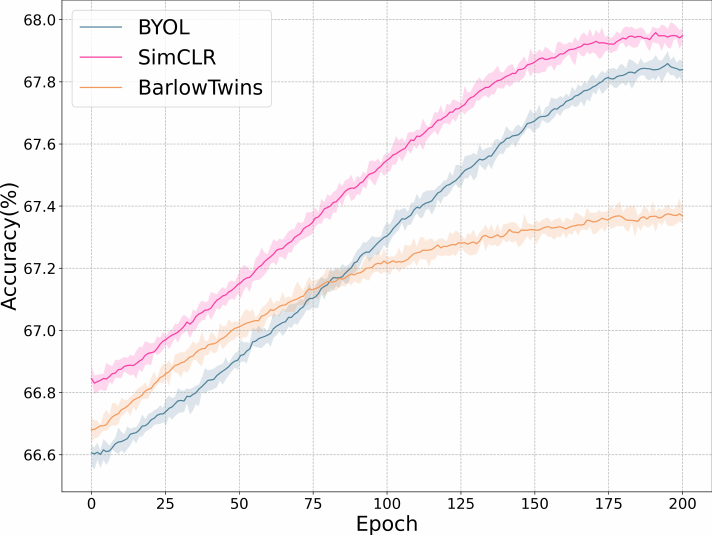}}
\subfigure[Tiny-ImageNet]{
\label{Fig.1subqw.3}
\includegraphics[width=0.315\textwidth]{ 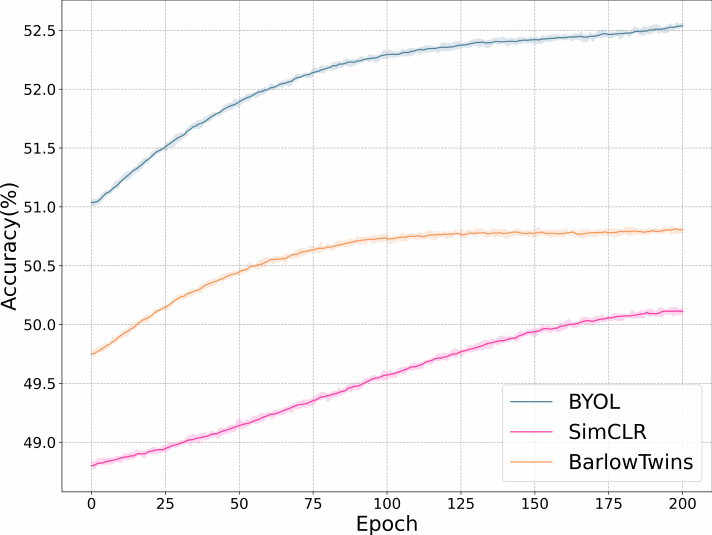}}
\subfigure[CIFAR-10]{
\label{Fig.1subqw.4}
\includegraphics[width=0.315\textwidth]{ 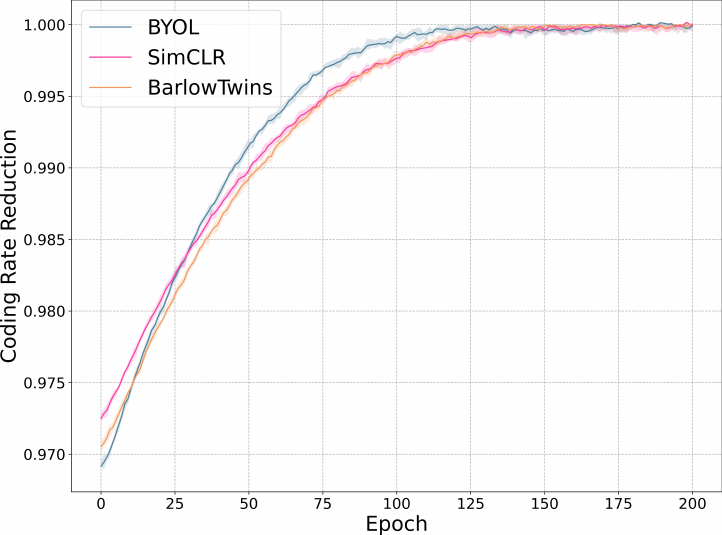}}
\subfigure[CIFAR-100]{
\label{Fig.1subqw.5}
\includegraphics[width=0.315\textwidth]{ 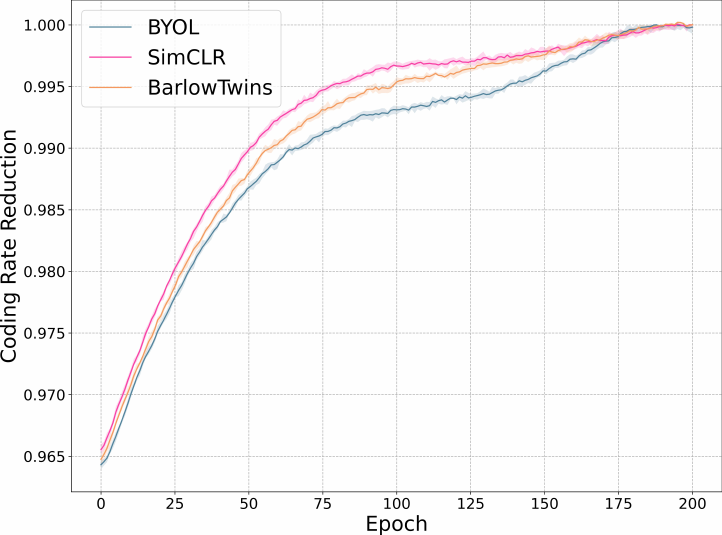}}
\subfigure[Tiny-ImageNet]{
\label{Fig.1subqw.6}
\includegraphics[width=0.315\textwidth]{ 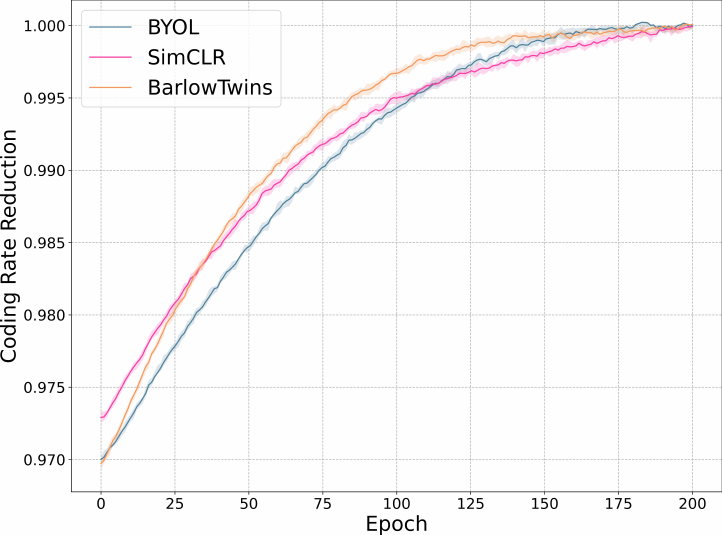}}
\caption{(a)-(c) represent the curves of test accuracy versus training epoch. (d)-(f) denote the curves of coding rate reduction versus training epoch. All results are based on the last layer output.}
\label{Fig.main4}
\end{figure*}

\begin{table}[htb]
	\begin{center}
		\caption{OOD-based test accuracy on CIFAR-10-LT,  CIFAR-100-LT, and ImageNet-100-LT datasets.}
		\vspace{-0.3cm}
		\label{tab:ood}
		\setlength\tabcolsep{3.9pt}
  \resizebox{\linewidth}{!}{
		\begin{tabular}{c|c|c|c}
			\toprule
			{Method}  & {CIFAR-10-LT}  & {CIFAR-100-LT} & {ImageNet-100-LT}\\ 
            \midrule
			SimCLR\cite{chen2020simple} & 75.34$\pm$0.31 & 47.65$\pm$0.27 & 67.08$\pm$0.26 \\ 
			SimCLR + UMM & \bf{84.32$\pm$0.18} & \bf{58.27$\pm$0.25} & \bf{73.65$\pm$0.14} \\ 
			\midrule
			SDCLR\cite{jiang2021self} & 81.74$\pm$0.14 & 55.48$\pm$0.16 & 67.54$\pm$0.34\\ 
			SDCLR + UMM & \bf{86.16$\pm$0.17} & \bf{62.81$\pm$0.28} & \bf 72.13$\pm$0.16 \\ 
			\midrule
			BCL-I\cite{zhou2022contrastive} & 81.99$\pm$0.14 & 55.70$\pm$0.23 & 67.72$\pm$0.36 \\ 
			BCL-I + UMM & \bf{85.62$\pm$0.24} & \bf{58.74$\pm$0.17} & \bf 72.67$\pm$0.14 \\
			\bottomrule
		\end{tabular}}
	\end{center}
\end{table}

\subsection{Out-of-Distribution}
\textbf{Experimental Setup}. Experiments follow the common setting for out-of-distribution learning used by existing methods \cite{bai2023effectiveness}. Three popular datasets are selected, e.g., CIFAR-10-LT \cite{cui2019class}, CIFAR-100-LT \cite{cui2019class}, and ImageNet-100-LT 
 \cite{jiang2021self}. These datasets are long-tail
subsets sampled from the original CIFAR-10, CIFAR-100, and ImageNet-100. For the CIFAR-10-LT and CIFAR-100-LT datasets, 300K Random Images \cite{hendrycks2018deep} are selected as the OOD dataset. For the ImageNet-100-LT dataset, the ImageNetR \cite{hendrycks2021many} is selected as the OOD dataset. We first train SSL methods on CIFAR-10-LT, CIFAR-100-LT, and  ImageNet-100-LT datasets, and then conduct testing on their corresponding OOD datasets. We report
performance under the widely used linear-probing evaluation protocols.

\noindent\textbf{Results.} The Table \ref{tab:ood} presents the accuracy evaluation using linear probing. In the case of the CIFAR-10-LT dataset, our proposed UMM method enhances SimCLR performance by 8.98\%, SDCLR\cite{jiang2021self} by 4.42\% and BCL-I\cite{zhou2022contrastive} by 3.63\%. On the CIFAR-100-LT dataset, UMM boosts SimCLR by 10.62\%, SDCLR by 7.33\% and BCL-I by 3.04\%. For the ImageNet-100-LT dataset, UMM elevates SimCLR by 6.57\%, SDCLR by 4.59\% and BCL-I by 4.95\%. These results demonstrate the effectiveness of our proposed UMM on the out-of-distribution datasets.

\subsection{Ablation Study}

Firstly, we carry out experiments to evaluate the impact of ${\mathcal{L}}(\Delta R,Z_{tr - el}^{aug},f_{e - l}^*)$ and ${\rm{JS}}(p(Z_{tr - el}^{aug})|p(Z_{tr}^{aug}))$. We set $\alpha = 0$ in Equation \ref{umm:eq} to eliminate the influence of ${\mathcal{L}}(\Delta R,Z_{tr - el}^{aug},f_{e - l}^*)$ and denote the resulting method as *. We set $\beta = 0$ in Equation \ref{umm:eq} to eliminate the influence of ${\rm{JS}}(p(Z_{tr - el}^{aug})|p(Z_{tr}^{aug}))$. The results are shown in Table \ref{tab:exp1_1} and Table \ref{tab:image100}. We observe that the performances of * and **surpass the compared methods but are lower than UMM. Therefore, comparing * with the baseline methods, we can obtain that ${\rm{JS}}(p(Z_{tr - el}^{aug})|p(Z_{tr}^{aug}))$ is effective. Comparing ** with the baseline methods, we can obtain that ${\mathcal{L}}(\Delta R,Z_{tr - el}^{aug},f_{e - l}^*)$ is effective. Comparing * and ** with UMM, we can conclude that only if ${\mathcal{L}}(\Delta R,Z_{tr - el}^{aug},f_{e - l}^*)$ and ${\rm{JS}}(p(Z_{tr - el}^{aug})|p(Z_{tr}^{aug}))$ learn together can the SSL method be freed from overfitting. Further experiments about hyperparameters $\alpha$ and $\beta$ in Equation \ref{umm:eq} are carried out based on CIFAR-10 dataset and BYOL. The results are shown in Table \ref{tab:relation21} and Table \ref{tab:relation22}. We can see that a properly weighted can promote the improvement of UMM. Also, we show in Figure \ref{Fig.main4} the curves of test accuracy (\ref{Fig.1subqw.1}-\ref{Fig.1subqw.3}) and the $\Delta R(Z,\Pi ,\varepsilon )$ (\ref{Fig.1subqw.4}-\ref{Fig.1subqw.6}) versus training epoch for UMM on the three datasets. All results are based on the last layer output. From Figure \ref{Fig.1subqw.1}-\ref{Fig.1subqw.6}, we observe that the test accuracy and $\Delta R(Z,\Pi ,\varepsilon )$ all first increase and then stabilize. At the same time, their trends of change are nearly identical, and they exhibit a strong correlation. These results show the the proposed UMM indeed make the SSL methods regain generalization. 

Secondly, by examining Figure \ref{Fig.main1} and Figure \ref{Fig.main3}, it becomes apparent that the peaks of linear evaluation accuracy and coding rate reduction occur at similar epochs. In this subsection, we provide an analysis to explain why the coding rate reduction is an efficient choice as an indicator of stop training. In SSL, evaluating the performance of a pre-trained model on a test set commonly involves two methods: linear evaluation and the K-Nearest Neighbor (K-NN) algorithm. Linear evaluation entails training a linear classifier using all available training data, while the K-NN algorithm involves comparing the similarity between each test sample and all training samples. In contrast, calculating the coding rate reduction does not rely on training data. To demonstrate the efficiency of these methods, we compare their execution times in Table \ref{timescac}. The results in Table \ref{timescac} are based on the CIFAR-10 dataset, with linear evaluation using 500 epochs and K-NN using 5 nearest neighbors. This experiment is conducted on a single NVIDIA Tesla V100 GPU. As we can see, the computation complexity of the coding rate reduction is the lowest. Therefore, we can conclude that the coding rate reduction not only has a significant advantage in computational complexity, but at the same time does not require sample labeling, which can make it easier to meet the needs of real-world scenarios.

\begin{table}[t]
\caption{The comparison of the execution time for different evaluation methods.}
    \centering
    \begin{tabular}{c|c}
    \toprule
       Method  & Time (s) \\
    \midrule
       Linear Evaluation  & 92.76 \\
       KNN & 13.63 \\
        CRR   &  4.03 \\
    \bottomrule
    \end{tabular}
    
    \label{timescac}
\end{table}

Thirdly, to further explore the correlation between the test accuracy and the value of the coding rate reduction in Section \ref{sec:me}, we compare the trend of $\Delta R(Z,\Pi ,\varepsilon )$ and the test accuracy of the pre-training process of SSL methods in five datasets in the same figure, which is shown in Figure \ref{Fig.compare}. 
From Figure \ref{Fig.compare}, it is clear that the trend of test accuracy is nearly consistent with that of $\Delta R(Z,\Pi,\varepsilon )$. We also provide a quantified result of the trends between $\Delta R(Z,\Pi ,\varepsilon )$ and the test accuracy using Pearson Correlation Coefficient in Appendix \ref{app:corr}. Therefore, we can conclude that using the coding rate reduction as a means to monitor the occurrence of overfitting in SSL is effective and reasonable.

\begin{figure*}[htb]
    \centering
    \subfigure[CIFAR-10]{
    \includegraphics[width=0.3\textwidth]{ 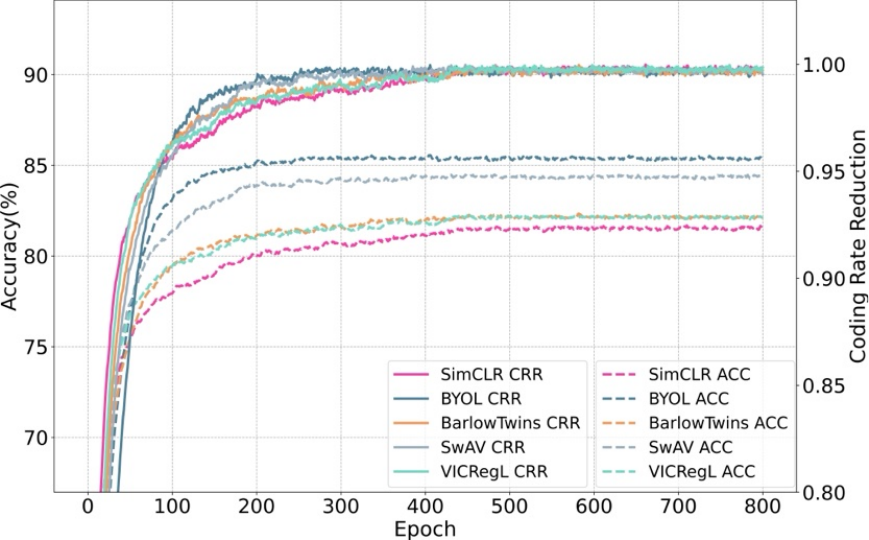}
    }
    \subfigure[CIFAR-100]{
    \includegraphics[width=0.3\textwidth]{ 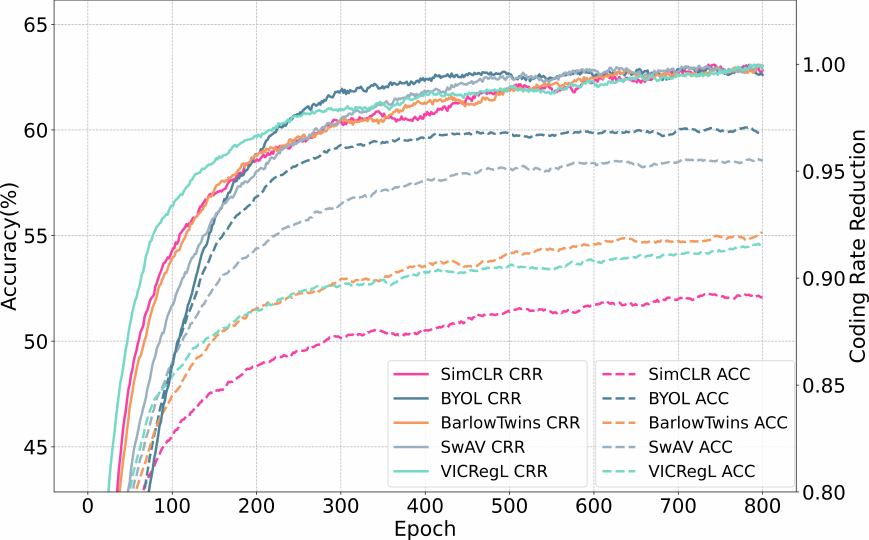}
    }
    \subfigure[Tiny-ImageNet]{
    \includegraphics[width=0.3\textwidth]{ 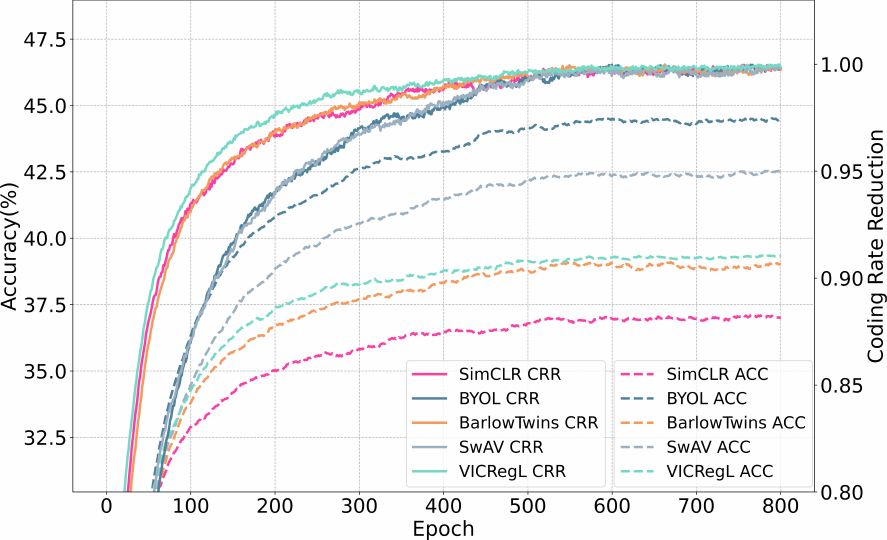}
    }
    \subfigure[ImageNet-100]{
    \includegraphics[width=0.3\textwidth]{ 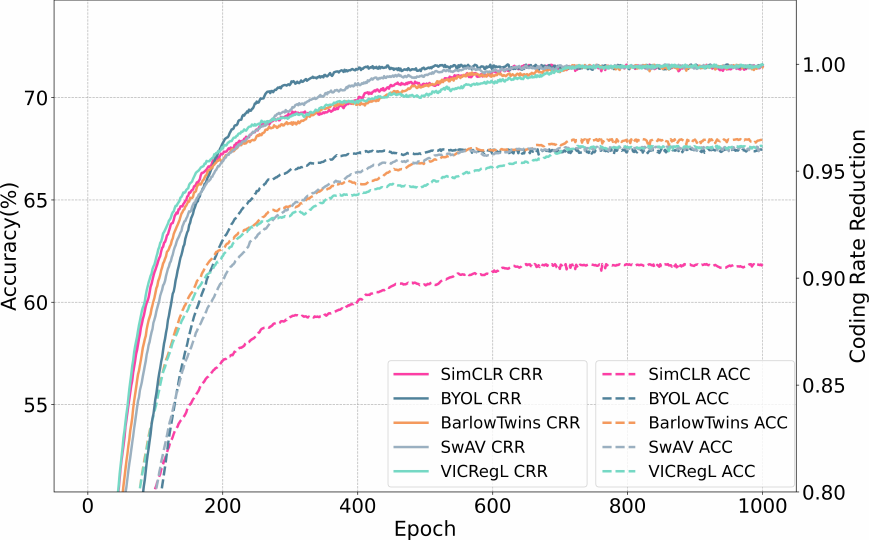}
    }
    \subfigure[ImageNet]{
    \includegraphics[width=0.3\textwidth]{ 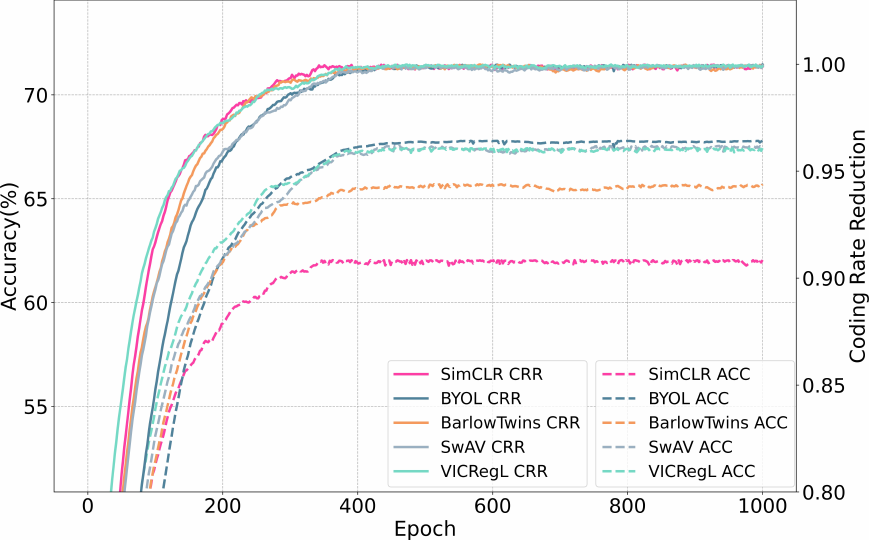}
    }
    \\
    \subfigure[CIFAR-10]{
    \includegraphics[width=0.3\textwidth]{ 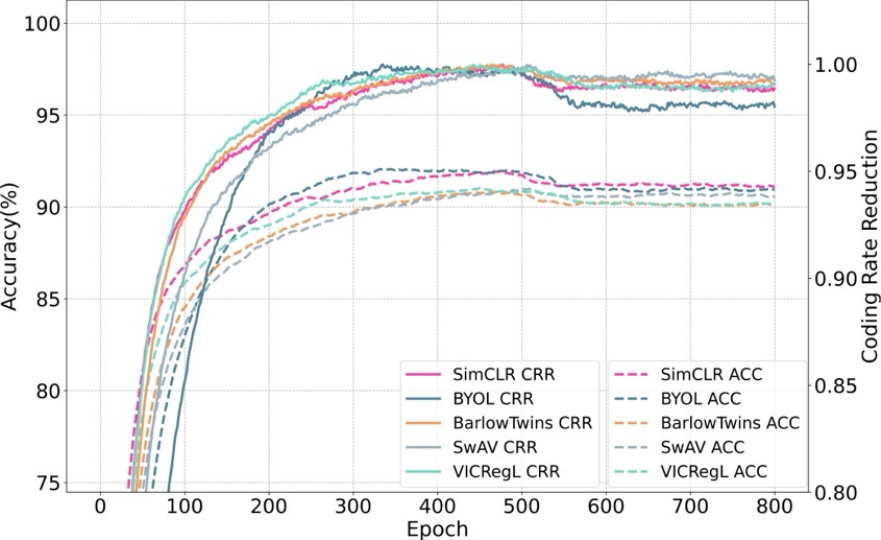}
    }
    \subfigure[CIFAR-100]{
    \includegraphics[width=0.3\textwidth]{ 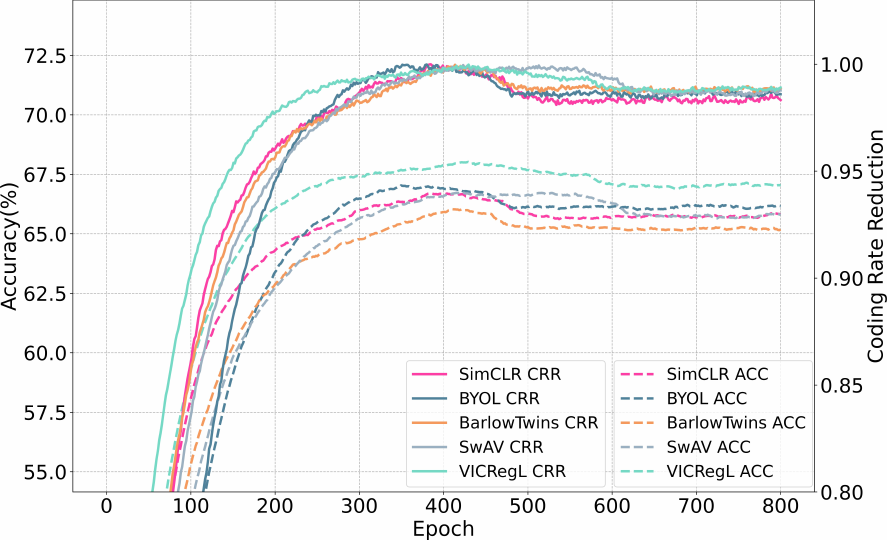}
    }
    \subfigure[Tiny-ImageNet]{
    \includegraphics[width=0.3\textwidth]{ 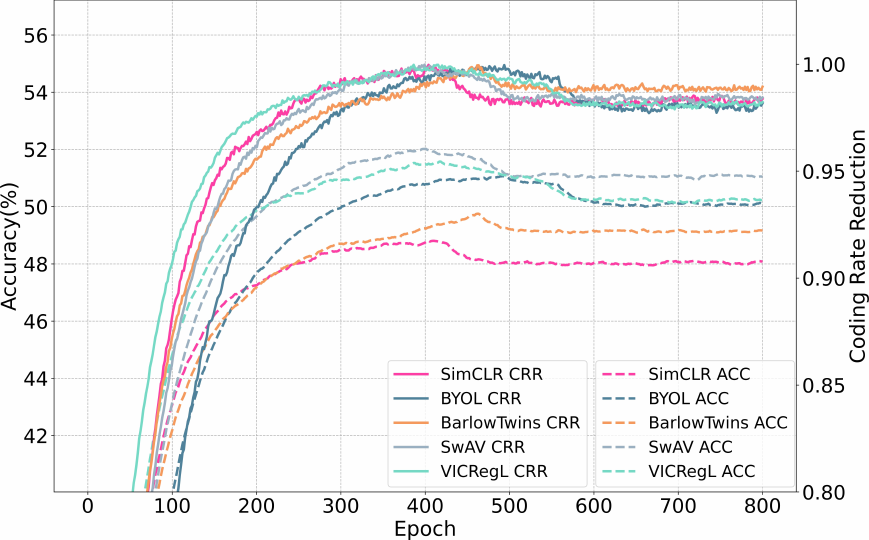}
    }
    \subfigure[ImageNet-100]{
    \includegraphics[width=0.3\textwidth]{ 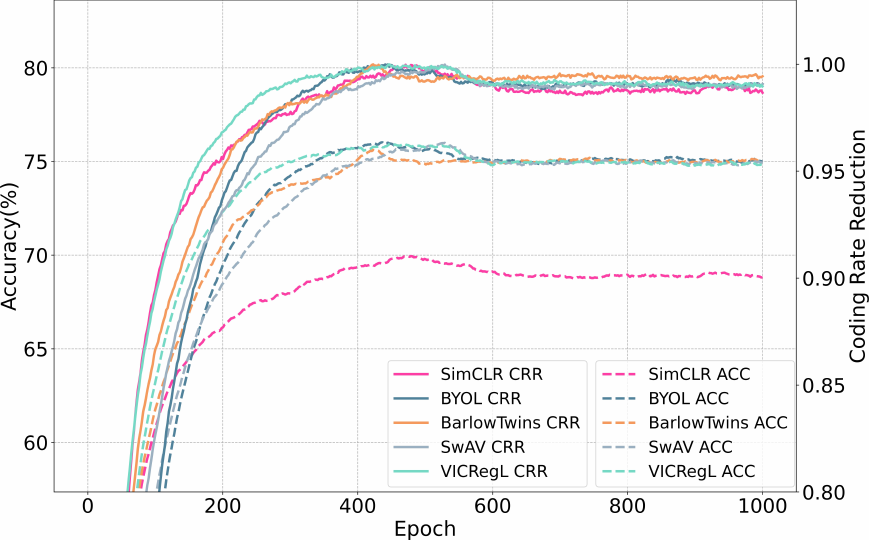}
    }
    \subfigure[ImageNet]{
    \includegraphics[width=0.3\textwidth]{ 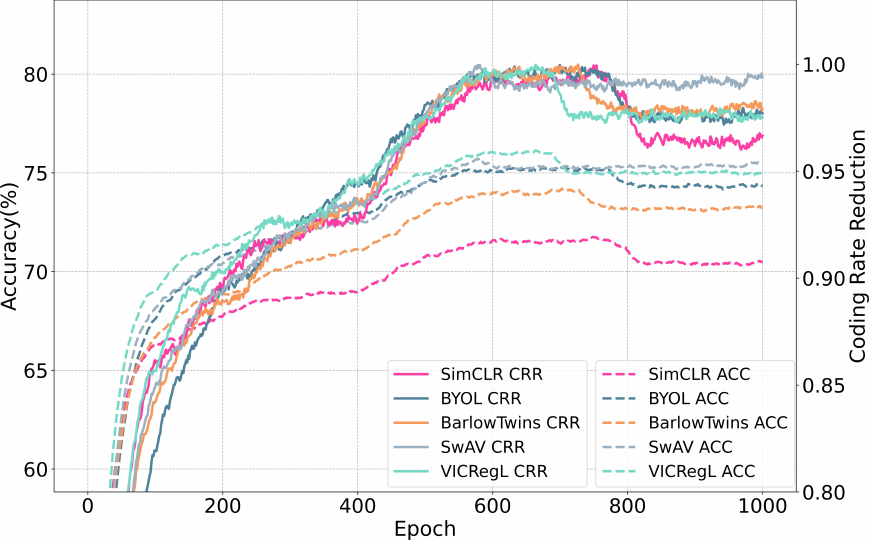}
    }
    \caption{The comparison of the trend of the coding rate reduction and the trend of test accuracy for different SSL methods and datasets. The results from (a) - (e) are based on early-layer output while results from (f) - (j) are based on last-layer output.}
    \label{Fig.compare}
\end{figure*}

\begin{figure*}[htb]
    \centering
    \subfigure[SimCLR]{
    \includegraphics[width=0.3\textwidth]{ 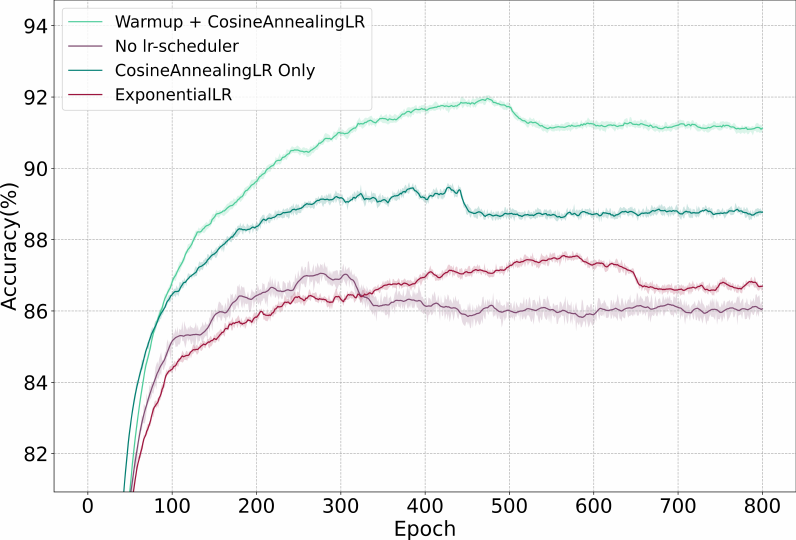}
    }
    \subfigure[BYOL]{
    \includegraphics[width=0.3\textwidth]{ 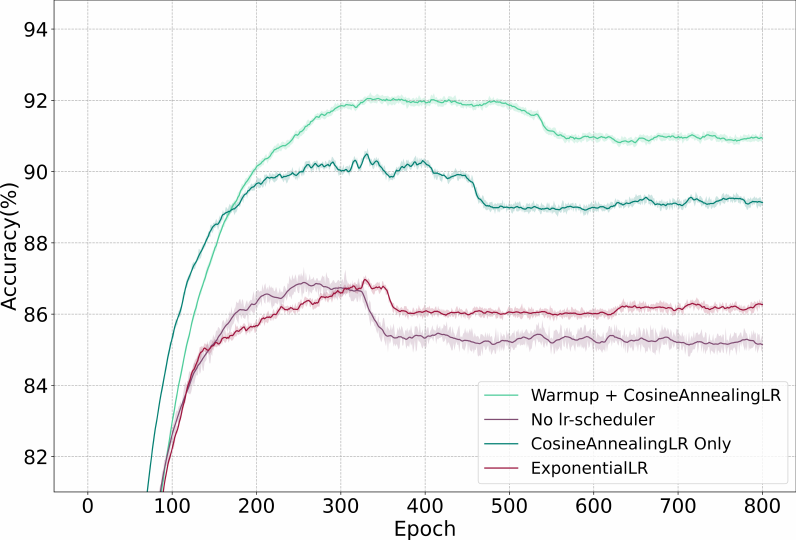}
    }
    \subfigure[VICRegL]{
    \includegraphics[width=0.3\textwidth]{ 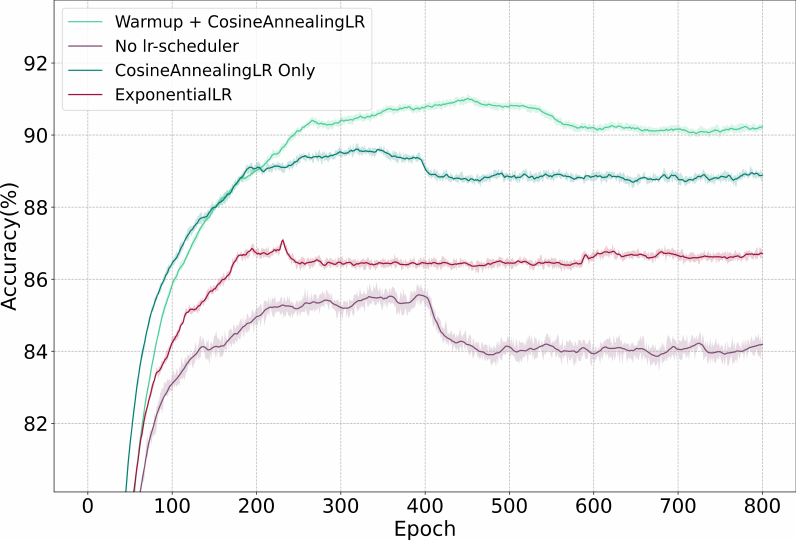}
    }
    \subfigure[SimCLR]{
    \includegraphics[width=0.3\textwidth]{ 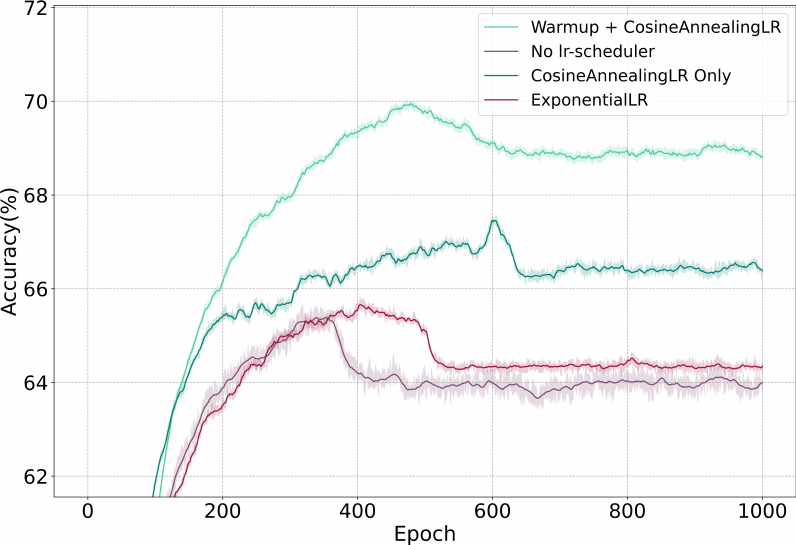}
    }
    \subfigure[BYOL]{
    \includegraphics[width=0.3\textwidth]{ 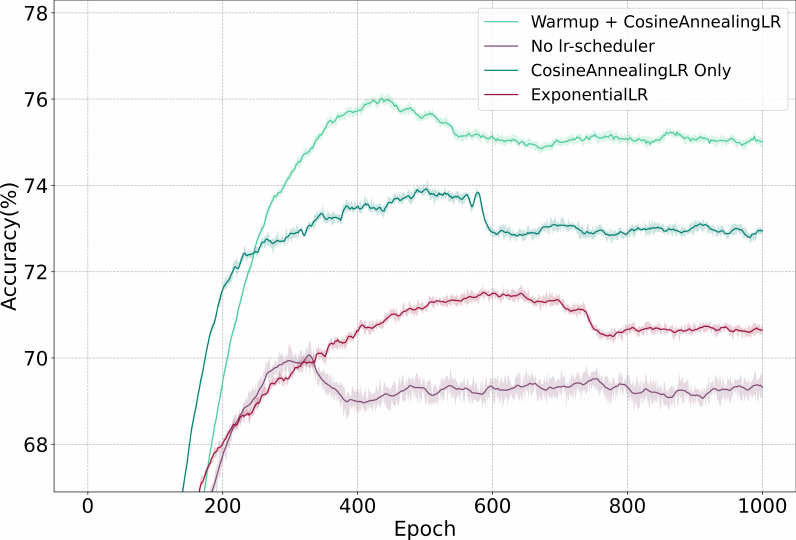}
    }
    \subfigure[VICRegL]{
    \includegraphics[width=0.3\textwidth]{ 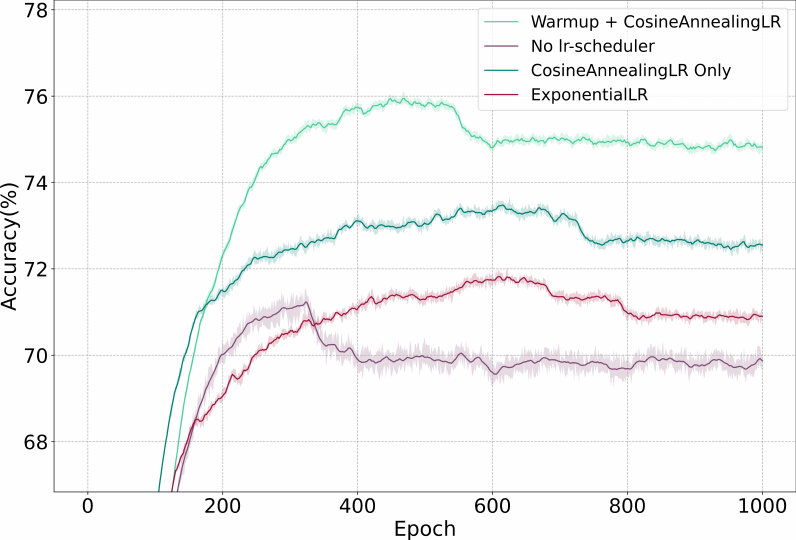}
    }
    \caption{The curves of test accuracy based on last-layer output versus training epoch under different learning rate scheduler. The results from (a) - (c) are based on the CIFAR-10 dataset. The results from (d) - (e) are based on the ImageNet-100 dataset.}
    \label{Fig.lr_sche}
\end{figure*}

\begin{table}[t]
	\begin{center}
		\caption{The classification accuracies of a linear classifier with a ResNet-18
as the feature extractor for simultaneous training of SSL and UMM. As well as the results for simultaneous training of SSL with ${\mathcal{L}}(\Delta R,Z_{tr - el}^{aug},f_{e - l}^*)$ and ${\rm{JS}}(p(Z_{tr - el}^{aug})|p(Z_{tr}^{aug}))$.}
		\label{tab:stfvhg}
		\setlength\tabcolsep{3.9pt}
  \resizebox{\linewidth}{!}{
		\begin{tabular}{c|c|c|c}
			\toprule
			{Method}  & {CIFAR-100}  & {STL-10} & Tiny ImageNet\\ 
			\midrule
                SimSiam \& UMM & 67.92$\pm$0.21 & 92.01$\pm$0.17 & 52.04$\pm$0.21 \\
			SimCLR \& UMM & 67.59$\pm$0.19 & 91.55$\pm$0.25 & 50.10$\pm$0.18 \\ 
			BYOL \& UMM & 67.66$\pm$0.20 & 93.57$\pm$0.22 & 52.41$\pm$0.21 \\ 
			Barlow-Twins \& UMM & 67.01$\pm$0.15 & 92.01$\pm$0.19 & 50.49$\pm$0.21\\ 
			W-MSE \& UMM & 68.37$\pm$0.17 & 92.68$\pm$0.15 &  51.03$\pm$0.24 \\ 
			VICRegL \& UMM & 69.01$\pm$0.23 & 92.48$\pm$0.27 & 51.98$\pm$0.26 \\
			\midrule
   SimSiam + ${\mathcal{L}}$ + ${\rm{JS}}$ & 67.82$\pm$0.20 & 91.99$\pm$0.18 & 52.01$\pm$0.22 \\
			SimCLR + ${\mathcal{L}}$ + ${\rm{JS}}$ & 67.29$\pm$0.20 & 91.25$\pm$0.20 & 50.01$\pm$0.20 \\ 
			BYOL + ${\mathcal{L}}$ + ${\rm{JS}}$ & 67.21$\pm$0.18 & 93.38$\pm$0.20 & 52.21$\pm$0.20 \\ 
			Barlow-Twins + ${\mathcal{L}}$ + ${\rm{JS}}$& 66.88$\pm$0.19 & 91.89$\pm$0.20 & 50.33$\pm$0.20\\ 
			W-MSE + ${\mathcal{L}}$ + ${\rm{JS}}$ & 68.13$\pm$0.18 & 92.47$\pm$0.20 &  50.22$\pm$0.21 \\ 
			VICRegL + ${\mathcal{L}}$ + ${\rm{JS}}$ & 68.51$\pm$0.20 & 92.08$\pm$0.18 & 51.68$\pm$0.21 \\
			\bottomrule
		\end{tabular}}
	\end{center}
\end{table}

Fourthly, in all experiments conducted in Figure \ref{Fig.main1} and \ref{Fig.main3}, we employ the same learning rate scheduler strategy, which includes a learning rate warm-up for 20 epochs followed by cosine annealing. The warm-up process starts with a learning rate of $1\times 10^{-5}$ and ends with a rate of $1\times 10^{-3}$. The minimum learning rate of the cosine annealing scheduler is set to $1\times 10^{-6}$. To investigate whether different learning rate schedulers influence the phenomenon of overfitting, we compare four commonly used learning rate scheduler methods based on the features extracted from the last layer, e.g., the warm-up followed by cosine annealing, constant learning rate (no learning rate scheduler), cosine annealing only, and exponential decay learning rate scheduler. For the solely employing cosine annealing, the initial and minimum LR values of the LR scheduler are identical to those in Figure \ref{Fig.main1} and \ref{Fig.main3}. For the exponential decay, the multiplicative factor $\varsigma$ is computed using $\varsigma=(\gamma_{e} / \gamma_{s}^{\frac{1}{T}})$, where $\gamma_{s}=1\times 10^{-3}$ denotes the initial LR, $\gamma_{e}=1\times 10^{-5}$ denotes the final LR, and $T$ represents the number of training iterations. We conduct these experiments on a small-scale dataset, CIFAR-10, as well as a large-scale dataset, ImageNet-100. Additionally, we compare three different SSL methods, namely SimCLR, BYOL, and VICRegL. The results are shown in Figure \ref{Fig.lr_sche}, we observe that the choice of different learning rate schedulers has a significant impact on test accuracy. Specifically, when learning without a learning rate scheduler, the test accuracies are lower than all other compared methods. Additionally, the warm-up + CosineAnnealing method we adopted exhibits higher test accuracies than other LR schedulers. It is worth noting that although the epochs required to reach the maximum vary, the trend of test accuracy is to first increase, then decrease after reaching the maximum, and finally stabilize, regardless of the learning rate scheduler method used. This indicates that the overfitting phenomenon we describe in Subsection \ref{sec:4.1} is common across all SSL methods and is independent of the choice of learning rate scheduler.

Finally, UMM is constructed upon the foundation of pre-trained SSL methods. An inherent challenge is whether it is feasible to incorporate UMM during the training of SSL methods. To address this question, we introduce UMM into the objective function of the SSL method during its training phase. Specifically, during training, we first update the feature extractor $f$ and projection head $f_{ph}$ using Equation \ref{cvdskop}. Subsequently, we fix $f_e$ and proceed to update $f_{e-l}$ based on Equation \ref{umm:eq}. This two-step process is iteratively carried out during the training phase until convergence is achieved. We denote this training style as \{SSL method\} \& UMM, e.g., SimCLR \& UMM. The final results are summarized in Table \ref{tab:stfvhg}. We can observe that the test accuracy of \& UMM surpasses that of directly training the SSL method, yet it falls short of the test accuracy achieved by the $+$ UMM approach. This suggests that \& UMM proves effective both during the training phase and the fine-tuning phase. However, one possible reason for \& UMM's performance being inferior to $+$ UMM performance could be attributed to the early stages of training. During this initial phase, neither the features in the early layer nor those in the last layer have overfit, and their individual performances may not be sufficiently strong. Distilling information from the early layer features into the last layer features at this point might result in an increased presence of task-irrelevant information in the last layer features. It is only when the early layer features demonstrate significant improvement and undergo overfitting that the distillation operation truly becomes effective. Meanwhile, UMM is trained in a bi-level optimization paradigm, and we have explained in Subsection \ref{umm;qw} why we use this paradigm. To further explore the effectiveness of training UMM by simultaneously using ${\mathcal{L}}(\Delta R,Z_{tr - el}^{aug},f_{e - l}^*)$ and ${\rm{JS}}(p(Z_{tr - el}^{aug})|p(Z_{tr}^{aug}))$, we have conducted experiments based on 6 methods, and 3 datasets, and the experimental results are shown in Table \ref{tab:stfvhg}. We can observe that the experimental accuracies of training UMM by simultaneously using ${\mathcal{L}}(\Delta R,Z_{tr - el}^{aug},f_{e - l}^*)$ and ${\rm{JS}}(p(Z_{tr - el}^{aug})|p(Z_{tr}^{aug}))$ is lower then these of training UMM in a bi-level optimization paradigm. This demonstrates the effectiveness of the bi-level design. Also, we can conclude that UMM indeed first adjusts ${\mathcal{L}}(\Delta R,Z_{tr - el}^{aug},f_{e - l}^*)$ using ${\rm{JS}}(p(Z_{tr - el}^{aug})|p(Z_{tr}^{aug}))$, and then uses ${\mathcal{L}}(\Delta R,Z_{tr - el}^{aug},f_{e - l}^*)$ to regulate the feature learning process of SSL.

\section{Conclusion}
In this paper,We find that in the SSL training process, overfitting occurs abruptly in later layers and epochs, while generalizing features are learned in early layers and epochs. Then we find that coding rate reduction can be used as an indicator of overfitting. Next, we propose Undoing Memorization Mechanism (UMM), a bi-level optimization-based learning mechanism that undoes memorization of by rewinding. We provide a causal analysis and empirical evaluation to show the effectiveness of UMM. 

\textbf{Limitations}. Our empirical analysis is based on the classification tasks. For regression tasks, generation tasks, etc., the relevant empirical analysis is also worth exploring.


\section*{Data Availability}

The benchmark datasets can be downloaded from the literature cited in Subsubsection \ref{sec:exp}.

\section*{Conflict of interest}
  
The authors declare no conflict of interest.
  
  
\bibliographystyle{unsrt}
\bibliography{reference}

\begin{thebibliography}{10}

\bibitem{jaiswal2020survey}
Ashish Jaiswal, Ashwin~Ramesh Babu, Mohammad~Zaki Zadeh, Debapriya Banerjee,
  and Fillia Makedon.
\newblock A survey on contrastive self-supervised learning.
\newblock {\em Technologies}, 9(1):2, 2020.

\bibitem{SI2022727}
Lingyu Si, Wenwen Qiang, Jiangmeng Li, Fanjiang Xu, and Funchun Sun.
\newblock Multi-view representation learning from local consistency and global
  alignment.
\newblock {\em Neurocomputing}, 501:727--740, 2022.

\bibitem{radford2021learning}
Alec Radford, Jong~Wook Kim, Chris Hallacy, Aditya Ramesh, Gabriel Goh,
  Sandhini Agarwal, Girish Sastry, Amanda Askell, Pamela Mishkin, Jack Clark,
  et~al.
\newblock Learning transferable visual models from natural language
  supervision.
\newblock In {\em International Conference on Machine Learning}, pages
  8748--8763. PMLR, 2021.

\bibitem{chen2020simple}
Ting Chen, Simon Kornblith, Mohammad Norouzi, and Geoffrey Hinton.
\newblock A simple framework for contrastive learning of visual
  representations.
\newblock In {\em International conference on machine learning}, pages
  1597--1607. PMLR, 2020.

\bibitem{grill2020bootstrap}
Jean-Bastien Grill, Florian Strub, Florent Altch{\'e}, Corentin Tallec, Pierre
  Richemond, Elena Buchatskaya, Carl Doersch, Bernardo Avila~Pires, Zhaohan
  Guo, Mohammad Gheshlaghi~Azar, et~al.
\newblock Bootstrap your own latent-a new approach to self-supervised learning.
\newblock {\em Advances in neural information processing systems},
  33:21271--21284, 2020.

\bibitem{zbontar2021barlow}
Jure Zbontar, Li~Jing, Ishan Misra, Yann LeCun, and St{\'e}phane Deny.
\newblock Barlow twins: Self-supervised learning via redundancy reduction.
\newblock {\em arXiv preprint arXiv:2103.03230}, 2021.

\bibitem{novak2018sensitivity}
Roman Novak, Yasaman Bahri, Daniel~A Abolafia, Jeffrey Pennington, and Jascha
  Sohl-Dickstein.
\newblock Sensitivity and generalization in neural networks: an empirical
  study.
\newblock {\em International Conference on Learning Representations}, 2018.

\bibitem{zhang2021understanding}
Chiyuan Zhang, Samy Bengio, Moritz Hardt, Benjamin Recht, and Oriol Vinyals.
\newblock Understanding deep learning (still) requires rethinking
  generalization.
\newblock {\em Communications of the ACM}, 64(3):107--115, 2021.

\bibitem{stephenson2021geometry}
Cory Stephenson, Suchismita Padhy, Abhinav Ganesh, Yue Hui, Hanlin Tang, and
  SueYeon Chung.
\newblock On the geometry of generalization and memorization in deep neural
  networks.
\newblock {\em arXiv preprint arXiv:2105.14602}, 2021.

\bibitem{swav}
Mathilde Caron, Ishan Misra, Julien Mairal, Priya Goyal, Piotr Bojanowski, and
  Armand Joulin.
\newblock Unsupervised learning of visual features by contrasting cluster
  assignments.
\newblock {\em CoRR}, abs/2006.09882, 2020.

\bibitem{Vicregl}
Adrien Bardes, Jean Ponce, and Yann LeCun.
\newblock Vicregl: Self-supervised learning of local visual features.
\newblock {\em arXiv preprint arXiv:2210.01571}, 2022.

\bibitem{cifar10}
Alex Krizhevsky, Geoffrey Hinton, et~al.
\newblock Learning multiple layers of features from tiny images.
\newblock 2009.

\bibitem{leTinyImagenetVisual2015}
Ya~Le and Xuan Yang.
\newblock Tiny imagenet visual recognition challenge.
\newblock {\em CS 231N}, 7(7):3, 2015.

\bibitem{krizhevsky2012imagenet}
Alex Krizhevsky, Ilya Sutskever, and Geoffrey~E Hinton.
\newblock Imagenet classification with deep convolutional neural networks.
\newblock {\em Advances in neural information processing systems},
  25:1097--1105, 2012.

\bibitem{tianContrastiveMultiviewCoding2020}
Yonglong Tian, Dilip Krishnan, and Phillip Isola.
\newblock Contrastive {{Multiview Coding}}, December 2020.

\bibitem{chan2022redunet}
Kwan Ho~Ryan Chan, Yaodong Yu, Chong You, Haozhi Qi, John Wright, and Yi~Ma.
\newblock Redunet: A white-box deep network from the principle of maximizing
  rate reduction.
\newblock {\em The Journal of Machine Learning Research}, 23(1):4907--5009,
  2022.

\bibitem{oord2018representation}
Aaron van~den Oord, Yazhe Li, and Oriol Vinyals.
\newblock Representation learning with contrastive predictive coding.
\newblock {\em arXiv preprint arXiv:1807.03748}, 2018.

\bibitem{tian2020contrastive}
Yonglong Tian, Dilip Krishnan, and Phillip Isola.
\newblock Contrastive multiview coding.
\newblock In {\em European conference on computer vision}, pages 776--794.
  Springer, 2020.

\bibitem{hjelm2018learning}
R~Devon Hjelm, Alex Fedorov, Samuel Lavoie-Marchildon, Karan Grewal, Phil
  Bachman, Adam Trischler, and Yoshua Bengio.
\newblock Learning deep representations by mutual information estimation and
  maximization.
\newblock {\em arXiv preprint arXiv:1808.06670}, 2018.

\bibitem{moco}
Kaiming He, Haoqi Fan, Yuxin Wu, Saining Xie, and Ross Girshick.
\newblock Momentum contrast for unsupervised visual representation learning.
\newblock In {\em Proceedings of the IEEE/CVF conference on computer vision and
  pattern recognition}, pages 9729--9738, 2020.

\bibitem{chen2020improved}
Xinlei Chen, Haoqi Fan, Ross Girshick, and Kaiming He.
\newblock Improved baselines with momentum contrastive learning.
\newblock {\em arXiv preprint arXiv:2003.04297}, 2020.

\bibitem{chen2021empirical}
Xinlei Chen, Saining Xie, and Kaiming He.
\newblock An empirical study of training self-supervised vision transformers.
\newblock In {\em Proceedings of the IEEE/CVF International Conference on
  Computer Vision}, pages 9640--9649, 2021.

\bibitem{li2022metaug}
Jiangmeng Li, Wenwen Qiang, Changwen Zheng, Bing Su, and Hui Xiong.
\newblock Metaug: Contrastive learning via meta feature augmentation.
\newblock {\em International Conference on Machine Learning}, 2022.

\bibitem{ermolov2021whitening}
Aleksandr Ermolov, Aliaksandr Siarohin, Enver Sangineto, and Nicu Sebe.
\newblock Whitening for self-supervised representation learning.
\newblock In {\em International Conference on Machine Learning}, pages
  3015--3024. PMLR, 2021.

\bibitem{chen2021exploring}
Xinlei Chen and Kaiming He.
\newblock Exploring simple siamese representation learning.
\newblock In {\em Proceedings of the IEEE/CVF Conference on Computer Vision and
  Pattern Recognition}, pages 15750--15758, 2021.

\bibitem{li2020prototypical}
Junnan Li, Pan Zhou, Caiming Xiong, and Steven~CH Hoi.
\newblock Prototypical contrastive learning of unsupervised representations.
\newblock {\em arXiv preprint arXiv:2005.04966}, 2020.

\bibitem{chen2021large}
Shuo Chen, Gang Niu, Chen Gong, Jun Li, Jian Yang, and Masashi Sugiyama.
\newblock Large-margin contrastive learning with distance polarization
  regularizer.
\newblock In {\em International Conference on Machine Learning}, pages
  1673--1683. PMLR, 2021.

\bibitem{zheng2021ressl}
Mingkai Zheng, Shan You, Fei Wang, Chen Qian, Changshui Zhang, Xiaogang Wang,
  and Chang Xu.
\newblock Ressl: Relational self-supervised learning with weak augmentation.
\newblock {\em Advances in Neural Information Processing Systems},
  34:2543--2555, 2021.

\bibitem{tomasev2022pushing}
Nenad Tomasev, Ioana Bica, Brian McWilliams, Lars Buesing, Razvan Pascanu,
  Charles Blundell, and Jovana Mitrovic.
\newblock Pushing the limits of self-supervised resnets: Can we outperform
  supervised learning without labels on imagenet?
\newblock {\em arXiv preprint arXiv:2201.05119}, 2022.

\bibitem{assran2023self}
Mahmoud Assran, Quentin Duval, Ishan Misra, Piotr Bojanowski, Pascal Vincent,
  Michael Rabbat, Yann LeCun, and Nicolas Ballas.
\newblock Self-supervised learning from images with a joint-embedding
  predictive architecture.
\newblock In {\em Proceedings of the IEEE/CVF Conference on Computer Vision and
  Pattern Recognition}, pages 15619--15629, 2023.

\bibitem{tao2023siamese}
Chenxin Tao, Xizhou Zhu, Weijie Su, Gao Huang, Bin Li, Jie Zhou, Yu~Qiao,
  Xiaogang Wang, and Jifeng Dai.
\newblock Siamese image modeling for self-supervised vision representation
  learning.
\newblock In {\em Proceedings of the IEEE/CVF Conference on Computer Vision and
  Pattern Recognition}, pages 2132--2141, 2023.

\bibitem{arora2019theoretical}
Sanjeev Arora, Hrishikesh Khandeparkar, Mikhail Khodak, Orestis Plevrakis, and
  Nikunj Saunshi.
\newblock A theoretical analysis of contrastive unsupervised representation
  learning.
\newblock {\em arXiv preprint arXiv:1902.09229}, 2019.

\bibitem{wang2020understanding}
Tongzhou Wang and Phillip Isola.
\newblock Understanding contrastive representation learning through alignment
  and uniformity on the hypersphere.
\newblock In {\em International Conference on Machine Learning}, pages
  9929--9939. PMLR, 2020.

\bibitem{zimmermann2021contrastive}
Roland~S Zimmermann, Yash Sharma, Steffen Schneider, Matthias Bethge, and
  Wieland Brendel.
\newblock Contrastive learning inverts the data generating process.
\newblock In {\em International Conference on Machine Learning}, pages
  12979--12990. PMLR, 2021.

\bibitem{von2021self}
Julius Von~K{\"u}gelgen, Yash Sharma, Luigi Gresele, Wieland Brendel, Bernhard
  Sch{\"o}lkopf, Michel Besserve, and Francesco Locatello.
\newblock Self-supervised learning with data augmentations provably isolates
  content from style.
\newblock {\em Advances in neural information processing systems},
  34:16451--16467, 2021.

\bibitem{qiang2022interventional}
Wenwen Qiang, Jiangmeng Li, Changwen Zheng, Bing Su, and Hui Xiong.
\newblock Interventional contrastive learning with meta semantic regularizer.
\newblock In {\em International Conference on Machine Learning}, pages
  18018--18030. PMLR, 2022.

\bibitem{ash2021investigating}
Jordan~T Ash, Surbhi Goel, Akshay Krishnamurthy, and Dipendra Misra.
\newblock Investigating the role of negatives in contrastive representation
  learning.
\newblock {\em arXiv preprint arXiv:2106.09943}, 2021.

\bibitem{awasthi2022more}
Pranjal Awasthi, Nishanth Dikkala, and Pritish Kamath.
\newblock Do more negative samples necessarily hurt in contrastive learning?
\newblock {\em arXiv preprint arXiv:2205.01789}, 2022.

\bibitem{nozawa2021understanding}
Kento Nozawa and Issei Sato.
\newblock Understanding negative samples in instance discriminative
  self-supervised representation learning.
\newblock {\em Advances in Neural Information Processing Systems},
  34:5784--5797, 2021.

\bibitem{hu2022your}
Tianyang Hu, LIU Zhili, Fengwei Zhou, Wenjia Wang, and Weiran Huang.
\newblock Your contrastive learning is secretly doing stochastic neighbor
  embedding.
\newblock In {\em The Eleventh International Conference on Learning
  Representations}, 2022.

\bibitem{garrido2022duality}
Quentin Garrido, Yubei Chen, Adrien Bardes, Laurent Najman, and Yann LeCun.
\newblock On the duality between contrastive and non-contrastive
  self-supervised learning.
\newblock In {\em The Eleventh International Conference on Learning
  Representations}, 2022.

\bibitem{simon2023stepwise}
James~B Simon, Maksis Knutins, Liu Ziyin, Daniel Geisz, Abraham~J Fetterman,
  and Joshua Albrecht.
\newblock On the stepwise nature of self-supervised learning.
\newblock 2023.

\bibitem{galvez2023role}
Borja~Rodr{\i}guez G{\'a}lvez, Arno Blaas, Pau Rodr{\'\i}guez, Adam Golinski,
  Xavier Suau, Jason Ramapuram, Dan Busbridge, and Luca Zappella.
\newblock The role of entropy and reconstruction in multi-view self-supervised
  learning.
\newblock In {\em International Conference on Machine Learning}, pages
  29143--29160. PMLR, 2023.

\bibitem{2021Contrastive}
Roland~S. Zimmermann, Yash Sharma, Steffen Schneider, Matthias Bethge, and
  Wieland Brendel.
\newblock Contrastive learning inverts the data generating process.
\newblock 2021.

\bibitem{2017Nonlinear}
A.~J. Hyvarinen and H.~Morioka.
\newblock Nonlinear ica of temporally dependent stationary sources.
\newblock In {\em 20th International Conference on Artificial Intelligence and
  Statistics}, 2017.

\bibitem{byol}
Jean-Bastien Grill, Florian Strub, Florent Altch{\'e}, Corentin Tallec,
  Pierre~H Richemond, Elena Buchatskaya, Carl Doersch, Bernardo~Avila Pires,
  Zhaohan~Daniel Guo, Mohammad~Gheshlaghi Azar, et~al.
\newblock Bootstrap your own latent: A new approach to self-supervised
  learning.
\newblock {\em arXiv preprint arXiv:2006.07733}, 2020.

\bibitem{li2021self}
Yazhe Li, Roman Pogodin, Danica~J Sutherland, and Arthur Gretton.
\newblock Self-supervised learning with kernel dependence maximization.
\newblock {\em arXiv preprint arXiv:2106.08320}, 2021.

\bibitem{CorInfoMax}
Serdar Ozsoy, Shadi Hamdan, Sercan Arik, Deniz Yuret, and Alper Erdogan.
\newblock Self-supervised learning with an information maximization criterion.
\newblock {\em Advances in Neural Information Processing Systems},
  35:35240--35253, 2022.

\bibitem{MEC}
Xin Liu, Zhongdao Wang, Ya-Li Li, and Shengjin Wang.
\newblock Self-supervised learning via maximum entropy coding.
\newblock {\em Advances in Neural Information Processing Systems},
  35:34091--34105, 2022.

\bibitem{stl10}
Adam Coates, Andrew Ng, and Honglak Lee.
\newblock An analysis of single-layer networks in unsupervised feature
  learning.
\newblock In {\em Proceedings of the fourteenth international conference on
  artificial intelligence and statistics}, pages 215--223. JMLR Workshop and
  Conference Proceedings, 2011.

\bibitem{everingham2010pascal}
Mark Everingham, Luc Van~Gool, Christopher~KI Williams, John Winn, and Andrew
  Zisserman.
\newblock The pascal visual object classes (voc) challenge.
\newblock {\em International journal of computer vision}, 88(2):303--338, 2010.

\bibitem{lin2014microsoft}
Tsung-Yi Lin, Michael Maire, Serge Belongie, James Hays, Pietro Perona, Deva
  Ramanan, Piotr Doll{\'a}r, and C~Lawrence Zitnick.
\newblock Microsoft coco: Common objects in context.
\newblock In {\em European conference on computer vision}, pages 740--755.
  Springer, 2014.

\bibitem{ren2015faster}
Shaoqing Ren, Kaiming He, Ross Girshick, and Jian Sun.
\newblock Faster r-cnn: Towards real-time object detection with region proposal
  networks.
\newblock {\em Advances in neural information processing systems}, 28, 2015.

\bibitem{wu2019detectron2}
Yuxin Wu, Alexander Kirillov, Francisco Massa, Wan-Yen Lo, and Ross Girshick.
\newblock Detectron2.
\newblock \url{https://github.com/facebookresearch/detectron2}, 2019.

\bibitem{he2017mask}
Kaiming He, Georgia Gkioxari, Piotr Doll{\'a}r, and Ross Girshick.
\newblock Mask r-cnn.
\newblock In {\em Proceedings of the IEEE international conference on computer
  vision}, pages 2961--2969, 2017.

\bibitem{jiang2021self}
Ziyu Jiang, Tianlong Chen, Bobak~J Mortazavi, and Zhangyang Wang.
\newblock Self-damaging contrastive learning.
\newblock In {\em International Conference on Machine Learning}, pages
  4927--4939. PMLR, 2021.

\bibitem{zhou2022contrastive}
Zhihan Zhou, Jiangchao Yao, Yan-Feng Wang, Bo~Han, and Ya~Zhang.
\newblock Contrastive learning with boosted memorization.
\newblock In {\em International Conference on Machine Learning}, pages
  27367--27377. PMLR, 2022.

\bibitem{bai2023effectiveness}
Jianhong Bai, Zuozhu Liu, Hualiang Wang, Jin Hao, Yang Feng, Huanpeng Chu, and
  Haoji Hu.
\newblock On the effectiveness of out-of-distribution data in self-supervised
  long-tail learning.
\newblock {\em arXiv preprint arXiv:2306.04934}, 2023.

\bibitem{cui2019class}
Yin Cui, Menglin Jia, Tsung-Yi Lin, Yang Song, and Serge Belongie.
\newblock Class-balanced loss based on effective number of samples.
\newblock In {\em Proceedings of the IEEE/CVF conference on computer vision and
  pattern recognition}, pages 9268--9277, 2019.

\bibitem{hendrycks2018deep}
Dan Hendrycks, Mantas Mazeika, and Thomas Dietterich.
\newblock Deep anomaly detection with outlier exposure.
\newblock {\em arXiv preprint arXiv:1812.04606}, 2018.

\bibitem{hendrycks2021many}
Dan Hendrycks, Steven Basart, Norman Mu, Saurav Kadavath, Frank Wang, Evan
  Dorundo, Rahul Desai, Tyler Zhu, Samyak Parajuli, Mike Guo, et~al.
\newblock The many faces of robustness: A critical analysis of
  out-of-distribution generalization.
\newblock In {\em Proceedings of the IEEE/CVF International Conference on
  Computer Vision}, pages 8340--8349, 2021.

\bibitem{ruzhansky2017advances}
Michael Ruzhansky, Yeol~Je Cho, Praveen Agarwal, Iv{\'a}n Area, et~al.
\newblock {\em Advances in real and complex analysis with applications}.
\newblock Springer, 2017.

\bibitem{rudin2006real}
Walter Rudin.
\newblock {\em Real and Complex Analysis}.
\newblock Tata McGraw-hill education, 2006.

\bibitem{spivak2018calculus}
Michael Spivak.
\newblock {\em Calculus on manifolds: a modern approach to classical theorems
  of advanced calculus}.
\newblock CRC press, 2018.

\bibitem{darmois1951analyse}
George Darmois.
\newblock Analyse des liaisons de probabilit{\'e}.
\newblock In {\em Proc. Int. Stat. Conferences 1947}, page 231, 1951.

\bibitem{hyvarinen1999nonlinear}
Aapo Hyv{\"a}rinen and Petteri Pajunen.
\newblock Nonlinear independent component analysis: Existence and uniqueness
  results.
\newblock {\em Neural networks}, 12(3):429--439, 1999.

\end{thebibliography}

\clearpage
\appendix

\section{Further Explanation for Overfitting}


According to Figure 1 and Figure 3 in the main paper, we observe that the accuracy curve and the curve of the coding rate reduction based on the last layer output features of the network first increases, then decreases after reaching the peak, and finally stabilizes. We know that the training accuracy during training is basically close to 100\%. Therefore, this phenomenon indicates that overfitting occurs during the training process, otherwise the accuracy would not decline.

\begin{figure*}[h]
\centering  
\subfigure[CIFAR-10]{
\label{Fig.1sub.1.app}
\includegraphics[width=0.315\textwidth]{ 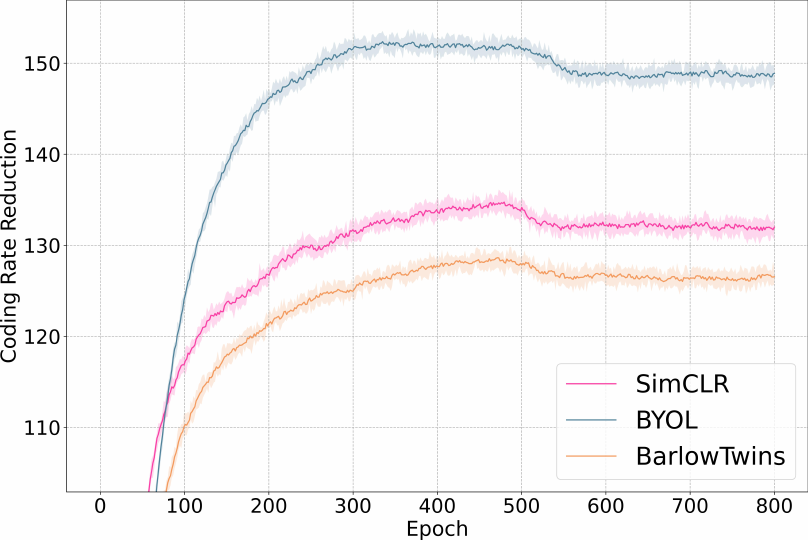}}
\subfigure[CIFAR-100]{
\label{Fig.1sub.2.app}
\includegraphics[width=0.315\textwidth]{ 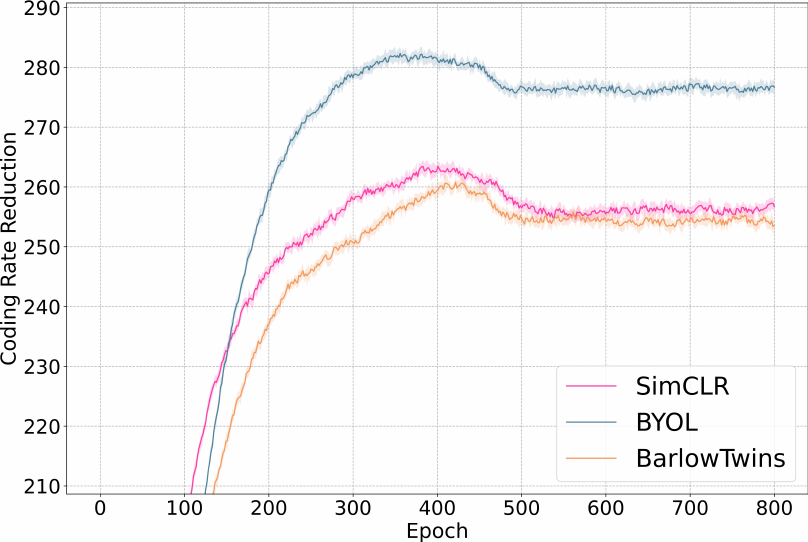}}
\subfigure[Tiny-ImageNet]{
\label{Fig.1sub.3.app}
\includegraphics[width=0.315\textwidth]{ 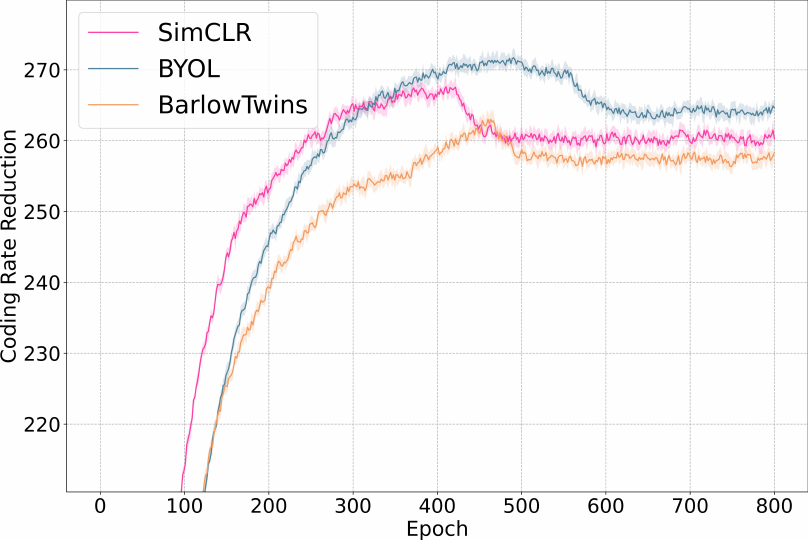}}
\caption{The curves of coding rate reduction versus training epoch.}
\label{123qeesedv}
\end{figure*}

\section{Additional Experimental Results}

\subsection{Influence of Hyperparameter $\varepsilon$}
To understand the impacts of hyperparameter $\varepsilon$, we select SimCLR+UMM, BYOL+UMM, and Barlow Twins+UMM to conduct comparisons on the CIFAR-10 dataset. Specifically, $\varepsilon$ controls the calculation of the coding rate reduction. We first set $\alpha  = 1, \beta = 0.1$ and select $\varepsilon$ from the range of $\{ 0.1, 0.2,...,0.9, 1 \}$. The results are shown in Table \ref{tab:vasdasdoc}. We observe that the test accuracy of different methods is almost constant as $\varepsilon $ changes. This illustrates that the proposed UMM is insensitive
to $\varepsilon $. Therefore, in this paper, we set $\varepsilon = 1$ for all methods and datasets.

\subsection{The Quantified Correlation Analysis}
\label{app:corr}

To explore the correlation between the test accuracy and the value of the coding rate reduction in a quantified manner, we measure the correlation of the trend of test accuracy and $\Delta R(Z,\Pi ,\varepsilon )$ with Pearson correlation coefficient in Table \ref{tab:pcc}. To do this, we first need to obtain the corresponding vectors. Since we save the parameters every 5 epochs during the training process, we can discretize each curve in Figure \ref{Fig.compare} as a vector, with each dimension of the vector representing the value of the test accuracy or the value of the coding rate reduction obtained by one of the saved models. In this way, we obtain some columns of vectors based on the test accuracy and vectors based on the value of the coding rate reduction. Pearson correlation coefficient closer to 1 in the table indicates that the compared trend exhibits a stronger correlation. From Table \ref{tab:pcc}, we observe that nearly all correlation coefficients between test accuracy and coding rate reduction are close to 1. This finding further supports the effectiveness and reasonableness of using coding rate reduction to monitor the occurrence of overfitting in SSL.

\begin{table}[h]
    \centering
    \caption{The average Pearson correlation coefficient between testing accuracy and coding rate reduction for distinct datasets and methods.}
    \resizebox{\linewidth}{!}
    {
    \begin{tabular}{c|c|c|c|c|c}
    \toprule
       Method  & CIFAR-10 & CIFAR-100 & Tiny-ImageNet & ImageNet-100 & ImageNet \\
    \midrule
       SimCLR  & 0.94 & 0.97 & 0.93 & 0.95 & 0.98 \\
       BYOL & 0.92 & 0.94 & 0.96 & 0.98 & 0.97 \\
       Barlow Twins & 0.94 & 0.95 & 0.96 & 0.93 & 0.97 \\
       SwAV & 0.94 & 0.92 & 0.95 & 0.98 & 0.94 \\
       VICRegL & 0.95 & 0.97 & 0.97 & 0.94 & 0.98 \\
    \bottomrule
    \end{tabular}
    }
    \label{tab:pcc}
\end{table}

\begin{table}[htb]
\centering
\setlength{\tabcolsep}{2pt}
\caption{The influence of hyperparameter $\varepsilon$ under linear evaluation}
\resizebox{\linewidth}{!}{\begin{tabular}{lccccccccc}
\toprule
Method  & 0.1 & 0.2 & 0.3 & 0.4 & 0.5 & 0.6 & 0.7 & 0.8 & 0.9  \\
\midrule
SimCLR+UMM  & 92.93 & 92.92 & 92.94 & 92.93 & 92.92 & 92.93 & 92.93 & 92.93 & 92.93        \\
BYOL+UMM   & 92.98 & 92.99 & 92.99 & 92.98 & 93.00 & 92.98 & 92.99 & 92.99 & 92.99          \\
Barlow Twins+UMM  & 91.54 & 91.55 & 91.56 & 91.55 & 91.54 & 91.55 & 91.55 & 91.55 & 91.55      \\
\bottomrule
\end{tabular}}
\label{tab:vasdasdoc}
\end{table}

\section{Further discussion about $\Pi^j(i,i)$}

In the definition of coding rate, we assume that there are $n$ samples, for class $j$, we denote the probability of the $i$-th sample as $\Pi^j(i,i)$. In this scenario, the class is given in advance, so each sample can have a probability value. Thus, we have $\Pi^j(i,i)\subseteq\mathbb{R}^{n\times n}$. However, in SSL scenarios, the number of training samples is $n$, then, we can obtain $2n$ augmented samples. Next, we regard one of samples as anchor, which can be regard as the index of class. Thus, we are left with only 2n-1 samples. We denote the $j$-th sample as anchor, then $\Pi^j(i,i)$ can be considered as the probability that the $i$-th and the anchor have the same ground-truth label, where $i\ne j,j\subseteq \{1,...,2n\}$. Thus, we have $\Pi^j(i,i)\subseteq\mathbb{R}^{(2n-1)\times (2n-1)}$ in SSL scenarios.

In SSL scenarios, the number of training samples is $n$, then, we can obtain $2n$ augmented samples.
Next, we regard the $j$-th sample as anchor, we can obtain that among the remaining $2n-1$ augmented samples, the augmented samples with the same ancestor as the anchor (see the first paragraph of Section \ref{pre3435} for the definition of ancestor) are considered as positive samples with respect to the anchor, while the remaining $2n-2$ samples are considered as negative samples with respect to the anchor. Thus, we only need to obtain $2n-1$ probability values indicating the probability that each sample and the anchor belong to the same class. We get the corresponding probability based on the feature similarity between the remaining augmented samples and the anchor. Thus we have only $2n-1$ terms in the denominator.

In the original definition of coding rate, the class is given in advance, $\Pi^j(i,i)$ can be regarded as the probability of the $i$-th sample belonging to the $j$-th class. Suppose that there are $K$ classes, then we can regard $\{ \Pi^1(i,i),...,\Pi^K(i,i) \} $ as the probability distribution of the category to which it corresponds given the $i$-th sample. For SSL scenarios, we can also obtain the probability distribution of the category to which it corresponds given the $i$-th sample. Because the category in SSL scenarios is refer to anchor, so if we denote the augmented samples as $\{x_{1},...,x_{2n} \}$, for the $i$-th sample, the probability distribution of the category of the $i$-th sample is denote as $\{ \Pi^1(i-1,i-1),...,\Pi^{i-1}(i-1,i-1),\Pi^{i+1}(i,i),...,\Pi^{2n}(i,i) \} $ not $\{ \Pi^1(i,i),...,\Pi^{2n-1}(i,i) \} $.

\section{Proofs}

\subsection{Proofs of Lemma 1}

\renewcommand{\thelemma}{1}
\begin{lemma}
    Denote the volume of a region $w \in \Omega $ as ${\rm{vol}}(w)$ and the volume of the affinely transformed region $S(w)$ as ${\rm{vol}}(S(w))$, we have: ${\rm{vol}}(S(w)) = {\rm{vol}}(w)\sqrt {\det (A_w^{\rm{T}}{A_w})}$.
\end{lemma}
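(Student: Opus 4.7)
The plan is to reduce the statement to the classical area (Jacobian) formula for affine maps between Euclidean spaces of possibly different dimensions. First I would observe that on the region $w$ the CPA mapping acts as the affine map $S(x) = A_w x + b_w$, and since translation by $b_w$ preserves $n$-dimensional Lebesgue/Hausdorff measure, it suffices to prove the identity for the purely linear map $x \mapsto A_w x$. Because the linear map has a constant Jacobian on the whole region $w$, the computation reduces to a single local volume-scaling factor times $\mathrm{vol}(w)$.

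Next I would invoke the area formula for a linear map $A_w$ of full column rank: an infinitesimal $n$-dimensional cube $dx$ in the input space is sent to an $n$-dimensional parallelepiped in the ambient feature space whose Hausdorff $n$-measure equals $\sqrt{\det(A_w^{\mathrm{T}} A_w)}\,dx$. Integrating the constant density over $w$ then yields
\begin{equation}
\mathrm{vol}(S(w)) \;=\; \int_{w} \sqrt{\det(A_w^{\mathrm{T}} A_w)}\,dx \;=\; \sqrt{\det(A_w^{\mathrm{T}} A_w)}\,\mathrm{vol}(w),
\end{equation}
which is exactly the claim.

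The main obstacle, and the reason one cannot simply write $|\det A_w|$, is that $A_w$ is generally rectangular: the input dimension and the feature dimension of the CPA mapping need not agree. I would therefore justify the local volume factor via the singular value decomposition $A_w = U \Sigma V^{\mathrm{T}}$, which gives $A_w^{\mathrm{T}} A_w = V \Sigma^{\mathrm{T}} \Sigma V^{\mathrm{T}}$ and hence $\sqrt{\det(A_w^{\mathrm{T}} A_w)} = \prod_i \sigma_i$, the product of the nonzero singular values of $A_w$. This product is precisely the $n$-volume scaling factor of the unit cube under $A_w$, since $V^{\mathrm{T}}$ is an orthogonal change of basis (volume-preserving), $\Sigma$ stretches each axis by $\sigma_i$, and $U$ is an isometric embedding into the feature space (also volume-preserving on its range).

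Finally, to cover the degenerate case, I would note that if $A_w$ does not have full column rank then $S(w)$ lies in a strictly lower-dimensional affine subspace of the feature space, so its $n$-dimensional volume vanishes; simultaneously $\det(A_w^{\mathrm{T}} A_w) = 0$, so both sides of the identity are zero and the statement still holds. This reduces the entire proof to the standard geometric interpretation of the Gram determinant, with no further computation needed.
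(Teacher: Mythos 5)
Your proposal is correct and follows essentially the same route as the paper: both arguments discard the translation $b_w$, treat the rectangular case via the Gram determinant (area formula / metric tensor $A_w^{\rm T}A_w$), and use the SVD to identify $\sqrt{\det(A_w^{\rm T}A_w)}$ with the product of the singular values as the volume-scaling factor. Your additional remark that the rank-deficient case makes both sides vanish is a small, harmless extra not spelled out in the paper.
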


\begin{proof}
From \cite{ruzhansky2017advances} and the Theorem 7.26 in \cite{rudin2006real}, given an affine transform of the coordinate ${A_w} \in {\mathbbm{R}^{d \times d}}$, we can obtain that the change of volume for ${A_w}$ is given by $|\det ({A_w})|$. However, in the case of this paper, the mapping can be regarded as a rectangular matrix. This is caused by that spaning an affine subspace in the ambient space $\mathbbm{R}^{d}$ will lead to $|\det ({A_w})|$ not defined.

We can see that a surface produced by a per-region affine mapping belongs to the Riemannian manifold. Thus, the volume for a mapping can be given by the square root of the determinant of the metric tensor. Then, for a surface with intrinsic dimension $n$ embedded in Euclidean space of dimension $m$ (which in the scenario of our paper, the affine mapping per region yields an affine subspace) parameterized by the function $S:{\mathbbm{R}^n} \mapsto {\mathbbm{R}^m}$. This function is simply the affine mapping $S(x) = {A_w}x + {b_w}$ for each region. Thus, the metric tensor is expressed as $g = D{S^{\rm{T}}}DS$, where $D$ represents the Jacobian/differential operator, which in this case, $g={{A_w}^{\rm{T}}}{A_w}$ for each region. This outcome is also recognized as Sard's theorem \cite{spivak2018calculus}. Consequently, the change in volume from region $w$ to the affine subspace is determined by $\sqrt {\det ({{A_w}^{\rm{T}}}{A_w})}$, which alternatively can be expressed using the SVD decomposition of the matrix $A$ as follows:
\begin{equation}
    \begin{array}{l}
\sqrt {\det ({{A_w}^{\rm{T}}}{A_w})}  \\
= \sqrt {\det ({{(U\Lambda {V^{\rm{T}}})}^{\rm{T}}}(U\Lambda {V^{\rm{T}}}))}  \\
= \sqrt {\det ((V{\Lambda ^{\rm{T}}}{U^{\rm{T}}})(U\Lambda {V^{\rm{T}}}))} \\
 = \sqrt {\det (V{\Lambda ^{\rm{T}}}\Lambda {V^{\rm{T}}})}  = \sqrt {\det ({\Lambda ^{\rm{T}}}\Lambda )}  \\
 = \prod\limits_{i:{\sigma _i} \ne 0} {{\sigma _i}({A_w})} 
\end{array}
\end{equation}
and
\begin{equation}
\int\limits_{Aff(w,{A_w},b)} {dx}  = \frac{1}{{\sqrt {\det ({{A_w}^{\rm{T}}}{A_w})} }}\int\limits_w {dz}
\end{equation}

Therefore, we can obtain that Lemma 1 holds.
\end{proof}

\subsection{Proofs of Theorem 1}

\renewcommand{\thetheorem}{1}
\begin{theorem}
    Denote the probability density of the input space as $p(x)$ and the probability density generated by $S$ as $p_S(z)$, where $z=S(x)$, we have:
    \begin{equation}
        {p_S}(z) = \sum\nolimits_{w \in \Omega } {\frac{{p({{(A_w^{\rm{T}}{A_w})}^{ - 1}}A_w^{\rm{T}}(z - {b_w}))}}{{\sqrt {\det (A_w^{\rm{T}}{A_w})} }}} {\mathbbm{1}_{x \in w}}
    \end{equation}
\end{theorem}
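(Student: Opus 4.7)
The plan is to obtain $p_S(z)$ via a region-by-region change of variables, exploiting the fact that $S$ is affine on each $w \in \Omega$ and that the partition elements are disjoint. On a fixed region $w$, the restriction $S|_w(x) = A_w x + b_w$ sends the density $p$ (restricted to $w$) onto the affine piece $S(w)$; since the partition $\Omega$ tiles the input space, the full pushforward density $p_S$ on the output manifold will be the sum of the per-region pushforwards, which explains the structure $\sum_{w \in \Omega}(\cdot)\mathbbm{1}_{x \in w}$ in the claim.

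The key steps I would carry out in order are as follows. First, on a region $w$, I would invert the affine relation $z = A_w x + b_w$ for $x$ in terms of $z$; when $A_w$ is not square, the correct left inverse is the Moore--Penrose form $x = (A_w^{\mathrm{T}} A_w)^{-1} A_w^{\mathrm{T}} (z - b_w)$, which exists whenever $A_w$ has full column rank and gives exactly the argument appearing in the numerator of the theorem. Second, I would invoke the change of variables formula along the affine piece $S(w)$, which, written in the manifold form $\int_{S(w)} p_S(z)\,d\mathrm{vol}(z) = \int_w p(x)\,dx$, requires a volume-distortion factor between $dx$ and $d\mathrm{vol}(z)$. Third, I would plug in Lemma 1, which gives $\mathrm{vol}(S(w)) = \mathrm{vol}(w)\sqrt{\det(A_w^{\mathrm{T}} A_w)}$, so that locally $d\mathrm{vol}(z) = \sqrt{\det(A_w^{\mathrm{T}} A_w)}\,dx$. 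Dividing through yields the per-region density $p((A_w^{\mathrm{T}} A_w)^{-1} A_w^{\mathrm{T}} (z - b_w))/\sqrt{\det(A_w^{\mathrm{T}} A_w)}$. Finally, summing over $w \in \Omega$ with the indicator $\mathbbm{1}_{x \in w}$ (equivalently, $\mathbbm{1}_{z \in S(w)}$ under the bijection on each piece) reassembles the full density and delivers the stated formula.

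The main obstacle I anticipate is the \emph{non-square} nature of $A_w$: a standard Jacobian-determinant change of variables does not apply, because the mapping may embed a lower-dimensional region into a higher-dimensional ambient feature space (or vice versa), so $|\det A_w|$ is not defined. This is precisely why the Riemannian volume form $\sqrt{\det(A_w^{\mathrm{T}} A_w)}$ must be used, and why the pseudoinverse appears inside $p(\cdot)$. I would therefore need to be careful to (i) interpret $p_S$ as a density with respect to the induced surface measure on the CPA manifold rather than Lebesgue measure on an ambient Euclidean space, and (ii) verify that the partition $\Omega$, when pushed forward by $S$, yields an essentially disjoint decomposition of the manifold, so that the sum in the statement is unambiguous. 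Once these two points are settled, the remainder of the proof reduces to collecting the per-region calculations and applying Lemma 1.
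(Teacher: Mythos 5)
Your proposal is correct and follows essentially the same route as the paper: invert the per-region affine map via the pseudoinverse $x = (A_w^{\rm T}A_w)^{-1}A_w^{\rm T}(z-b_w)$, apply a region-wise change of variables with the Riemannian volume factor $\sqrt{\det(A_w^{\rm T}A_w)}$ from Lemma~1, and sum over the (disjoint) regions of $\Omega$. The only cosmetic difference is that the paper justifies the distortion factor by an explicit SVD computation showing $\sigma_i(A_w^{+})=\sigma_i(A_w)^{-1}$, hence $\sqrt{\det((A_w^{+})^{\rm T}A_w^{+})}=1/\sqrt{\det(A_w^{\rm T}A_w)}$, whereas you cite Lemma~1 directly.
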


\begin{proof}
Let variables $z$ change as $z=(A_w^{\rm T}A_w)^{-1}{A_w^{\rm T}}(x-b_w)=A_w^{+}(x-b_w)$, we can obtain $J_{G^{-1}}(x)=A^+$. Then, we can obtain $P_{G(z)}(x \in \omega )=P_z(z \in G^{-1}(\omega))=\int_{{G^{ - 1}}(\omega)} {{p_z}(z)dz}$ with the invertible assumption. Then, we can obtain that
\begin{equation}\label{sdasfasdasd}
\begin{array}{l}
{P_G}(x \in w)\\
 = \sum\limits_w {\int_{w \cap \omega } {{p_z}({G^{ - 1}}(x))\sqrt {\det ({J_{{G^{ - 1}}}}{{(x)}^{\rm{T}}}{J_{{G^{ - 1}}}}(x))} dx} } \\
 = \sum\limits_w {\int_{w \cap \omega } {{p_z}({G^{ - 1}}(x))\sqrt {\det ({{(A_w^ + )}^{\rm{T}}}A_w^ + )} dx} } \\
 = \sum\limits_w {\int_{w \cap \omega } {{p_z}({G^{ - 1}}(x))(\prod\limits_{i:{\sigma _i}(A_w^ + ) > 0} {{\sigma _i}(A_w^ + )} )dx} } \\
 = \sum\limits_w {\int_{w \cap \omega } {{p_z}({G^{ - 1}}(x)){{(\prod\limits_{i:{\sigma _i}({A_w}) > 0} {{\sigma _i}({A_w})} )}^{ - 1}}dx} } \\
 = \sum\limits_w {\int_{w \cap \omega } {{p_z}({G^{ - 1}}(x))\frac{1}{{\sqrt {\det (A_w^{\rm{T}}{A_w})} }}dx} } 
\end{array}
\end{equation}

The fourth step in Equation \ref{sdasfasdasd} can be obtained by proving ${\sigma _i}(A{}^ + ) = {({\sigma _i}(A))^{ - 1}}$. First, we have:
\begin{equation}
\begin{array}{l}
{A^ + }\\
 = {({A^{\rm{T}}}A)^{ - 1}}{A^{\rm{T}}}\\
 = {({(U\Lambda {V^{\rm{T}}})^{\rm{T}}}(U\Lambda {V^{\rm{T}}}))^{ - 1}}{(U\Lambda {V^{\rm{T}}})^{\rm{T}}}\\
 = {((V{\Lambda ^{\rm{T}}}{U^{\rm{T}}}U\Lambda {V^{\rm{T}}}))^{ - 1}}{(U\Lambda {V^{\rm{T}}})^{\rm{T}}}\\
 = {((V{\Lambda ^{\rm{T}}}\Lambda {V^{\rm{T}}}))^{ - 1}}V{\Lambda ^{\rm{T}}}{U^{\rm{T}}}\\
 = V{({\Lambda ^{\rm{T}}}\Lambda )^{ - 1}}{\Lambda ^{\rm{T}}}{U^{\rm{T}}}
\end{array}
\end{equation}

Then, we can obtain: ${\sigma _i}(A{}^ + ) = {({\sigma _i}(A))^{ - 1}}$ and $\sqrt {\det ({{(A_w^ + )}^{\rm{T}}}A_w^ + )}  = 1/\sqrt {\det (A_w^{\rm{T}}{A_w})} $. Then, we have:
\begin{equation}
\begin{array}{l}
\sqrt {\det ({{(A_w^ + )}^{\rm{T}}}A_w^ + )}  = \prod\limits_{i:{\sigma _i} \ne 0} {{\sigma _i}(A_w^ + )}  = \prod\limits_{i:{\sigma _i} \ne 0} {{\sigma _i}{{({A_w})}^{ - 1}}} \\
 = {(\prod\limits_{i:{\sigma _i} \ne 0} {{\sigma _i}({A_w})} )^{ - 1}} = \frac{1}{{\sqrt {\det (A_w^{\rm{T}}{A_w})} }}
\end{array}
\end{equation}

Therefore, we can obtain that Theorem 1 holds.
\end{proof}

\subsection{Proofs of Theorem 2}

\renewcommand{\thetheorem}{2}
\begin{theorem}
    Given the ${\rm{SCM}}$ shown in Figure \ref{Fig.11subqw.1}, let $f$ denote the feature extractor, let $X^{aug}_1$ and $X^{aug}_2$ denote the corresponding augmented datasets generated by $X$ using different data augmentations. If $f$ can minimize the following objective function:
    \begin{align}\label{eq:proof_2}
        \mathcal{L}(f,{X^{aug}_1},{X^{aug}_2}) ={\rm D}(f(X_1^{aug}),f(X_2^{aug})) \nonumber \\
        - [H(f({X^{aug}_1})) + H(f({X^{aug}_2}))]
    \end{align}
    where ${\rm D}(\cdot)$ is the distance operator for two discrete distributions and ${H}(\cdot)$ is the information entropy, then $f({X^{aug}_1})$ and $f({X^{aug}_2})$ contain all of task-related information $S_{r}$.
\end{theorem}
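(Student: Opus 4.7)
The plan is to exploit the two terms in $\mathcal{L}$ separately: the alignment term ${\rm D}(f(X_1^{aug}),f(X_2^{aug}))$ will be used to argue that $f$ can only depend on information that is common to both augmented views, namely the task-relevant factor $S_r$; the negative entropy term $-[H(f(X_1^{aug}))+H(f(X_2^{aug}))]$ will then be used to argue that $f$ must in fact encode \emph{all} of $S_r$. Thus the minimizer $f$ factors as an invertible function of $S_r$, which is exactly the desired conclusion.

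First I would analyze the alignment term. By the SCM in Figure~\ref{Fig.11subqw.1}, both $X_1^{aug}$ and $X_2^{aug}$ are generated from the same realization of $S_r$ but from independent soft interventions on $S_{ur}$, yielding independently perturbed $S_{ur,1}^*$ and $S_{ur,2}^*$. Since ${\rm D}$ is a proper distance between discrete distributions, minimizing it drives $f(X_1^{aug})$ and $f(X_2^{aug})$ to agree almost surely. Conditioning on $S_r$, the pair $(S_{ur,1}^*,S_{ur,2}^*)$ is conditionally independent, so an almost sure equality $f({\rm f}(S_r,S_{ur,1}^*))=f({\rm f}(S_r,S_{ur,2}^*))$ for two independent draws of the perturbation forces $f\circ {\rm f}(S_r,\cdot)$ to be constant in its second argument on the support of $S_{ur}^*$. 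Hence there exists a measurable map $g$ such that $f(X^{aug})=g(S_r)$ almost surely, i.e.\ $f$ discards all task-irrelevant factors.

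Second, I would exploit the entropy term. Because $f(X^{aug})=g(S_r)$, the data-processing inequality gives $H(f(X^{aug}))\le H(S_r)$, with equality if and only if $g$ is (essentially) injective on the support of $S_r$. Since the minimizer of $\mathcal{L}$ simultaneously maximizes $H(f(X_j^{aug}))$, the optimum is attained only when $g$ is injective, so $g(S_r)$ is in one-to-one correspondence with $S_r$. Combining the two steps, $f(X_1^{aug})$ and $f(X_2^{aug})$ are invertible transformations of $S_r$ and therefore contain all task-relevant information, which proves the claim. I would also note that this mirrors the ``alignment + uniformity'' viewpoint of \cite{wang2020understanding}, but phrased counterfactually through the SCM, so it directly transfers the abstract invariance-maximality argument to the data-generative setting assumed here.

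The main obstacle will be the alignment step: turning ``${\rm D}=0$'' into a pointwise functional identity requires care about measure-zero exceptional sets and about the support of the intervention distribution of $S_{ur}^*$ given $S_r$. If the perturbation ${\rm f}_u^*$ only explores a proper subset of the possible values of $S_{ur}$, then $f$ is forced to be invariant only on that subset, not on all of $S_{ur}$; the conclusion ``$f$ depends only on $S_r$'' then requires an additional richness/coverage assumption on $U_A$ (essentially that the augmentations induce a connected orbit through $S_{ur}$-space for each fixed $S_r$). I would state this as an explicit hypothesis in the proof, after which the rest of the argument goes through cleanly via the data-processing inequality and the characterization of entropy-maximizing deterministic maps.
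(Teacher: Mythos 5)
Your route is genuinely different from the paper's. For the invariance half, the paper rules out dependence on $S_{ur}$ by a continuity/contradiction argument on the partial derivatives of $f$ (the auxiliary function $\varphi$ and an open set on which it is strictly positive), whereas you use conditional independence of the two perturbed styles given $S_r$ to force $f\circ{\rm f}(S_r,\cdot)$ to be a.s.\ constant; for the completeness half, the paper uses a Darmois-type construction $f={\rm F}\circ{\rm f}^{-1}_{1:d_r}$ whose output is uniform on the unit hypercube, so maximal entropy characterizes the representation, whereas you use the data-processing inequality $H(g(S_r))\le H(S_r)$ with equality iff $g$ is injective. Both of your sub-arguments are sound in themselves, and the coverage caveat you raise about the support of the perturbed $S_{ur}^*$ given $S_r$ is a real hypothesis that the paper's own topological argument also needs implicitly.

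The genuine gap is upstream of both steps: you assume that any global minimizer of $\mathcal{L}$ attains ${\rm D}=0$ and maximal entropy \emph{separately}, but you never show that the two terms decouple at the optimum. Your entropy bound $H(f(X^{aug}))\le H(S_r)$ is only available \emph{after} alignment has been established (so that $f$ is a function of $S_r$), hence it cannot be used to identify the global minimum value; for a non-aligned encoder the entropy can strictly exceed $H(S_r)$ (an injective map of $X^{aug}$ has entropy $H(X^{aug})$), so in an unconstrained output space a minimizer could in principle trade a positive alignment penalty for extra entropy, and your opening claim that minimizing ${\rm D}$ forces a.s.\ agreement no longer follows — the argument is circular on this point. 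The paper closes exactly this hole by first exhibiting a feasible minimizer (the Darmois construction) that achieves ${\rm D}=0$ and the maximal entropy value simultaneously, in a normalized output space (the unit hypercube) where the maximal entropy is a constant independent of how much style is retained; only then can it conclude that every global minimizer is both invariant and maximum-entropy. To repair your proof you would need that existence/normalization step (fix the output space so the entropy term is uniformly bounded above by a value attainable by an aligned encoder), or some other argument that the optimal value equals $-2H(S_r)$, before your two steps can be applied.
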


\begin{table*}[t]
\centering
\caption{Symbol definitions for the proof of Theorem \ref{theo:fac}}
\resizebox{\linewidth}{!}{
\begin{tabular}{|c|l|}
\hline
\textbf{Symbol} & \textbf{Definition} \\ \hline
$\mathcal{L}(f,X_1^{aug},X_2^{aug})$ & Loss function, where $X_1^{aug}$ and $X_2^{aug}$ represent augmented views of the data \\ \hline
$d_r$ & The dimension of task-relevant information \\ \hline
$f$ & A function composed of two parts: one mapping from $X$ to $S_r$ and one mapping from $S_r$ to a uniform distribution \\ \hline
${\rm {f}}^{-1}_{1:d_r}$ & The inverse mapping from $X$ to $S_r$, which extracts task-relevant information \\ \hline
$S_r$ & The set of task-relevant information \\ \hline
${\rm F}$ & A mapping from $S_r$ to $p_{ud}$, where $p_{ud}$ represents a uniform distribution \\ \hline
$f_c$ & The $c$-th dimension of the output of the function $f$ \\ \hline
$\varphi$ & A function used to define the difference in $f_c$ \\ \hline
$S_{ur}$ & The set of unconstrained variables \\ \hline
$S_{-l}$ & The set obtained by removing the $l$-th element from the set $S$ \\ \hline
$s_l$ & The $l$-th element of the set $S_{ur}$ \\ \hline
$O_{s^{*}_l}$ & A neighborhood of the set $s_l^*$ \\ \hline
\end{tabular}
}
\label{tab:symbol_definitions}
\end{table*}

\begin{proof}

We first provide the Table \ref{tab:symbol_definitions} for the proof to illustrate each symbol and the corresponding definition. Then, we demonstrate that the representation $f(X)$ derived from any smooth function $f$ minimizing Equation \ref{eq:proof_2} is connected to the true latent variable through a smooth mapping. Additionally, $f(X)$ must exhibit invariance across $(X_1^{aug},X_2^{aug})$ almost surely (a.s.) with respect to the true generative process, and it must adhere to a uniform distribution.


The global minimum of $\mathcal{L}(f,X_1^{aug},X_2^{aug})$ is reached when the distance operator is minimized (i.e., equal to zero) and the information entropy is maximized. Without additional moment constraints, the unique maximum entropy distribution is the uniform distribution \cite{darmois1951analyse,hyvarinen1999nonlinear}. First, we show that there exists a smooth function $f$ which attains the global minimum of $\mathcal{L}(f,X_1^{aug},X_2^{aug})$.

Given the function ${\rm {f}}^{-1}{1:d_r}: X \to S_r$, where the initial $d_r$ dimensions of the output of ${\rm {f}}^{-1}$ denote the task-relevant details, we proceed to construct a function ${\rm F}: S_r \to p_{ud} $ utilizing a recursive method known as the Darmois construction \cite{hyvarinen1999nonlinear}, where $p_{ud}$ denotes the uniform distribution. Thus, we obtain:
\begin{equation}
{{\rm{F}}_i}({S_r}){\rm{: = }}{{{F}}_i}(S_r^i|S_r^{1:i - 1}) = P(z \le S_r^i|S_r^{1:i - 1})
\end{equation}
where ${\rm{F}}_i$ represents the $i$-th dimension of the output of ${\rm F}$, $S_r^i$ represents the $i$-th dimension of $S_r$, $S_r^{1:1-i}$ represents the 1 to $i-1$-th dimensionw of $S_r$, $i = 1,....,{d_{{r}}}$, ${d_{{r}}}$ is the dimension of $S_r$, and ${{F}}_i$ represents the conditional cumulative distribution function. From \cite{hyvarinen1999nonlinear}, we can see that this  construction leads to a uniform distribution.

Then, we define $f = {\rm{F}} \circ {\rm{f}}_{1:{d_r}}^{ - 1}$, which is a smooth function since it is a composition of two smooth functions. Let ${\rm {f}}^{-1}_{1:d_r}(X_1^{aug})=S_r^1$ and ${\rm {f}}^{-1}_{1:d_r}(X_2^{aug})=S_r^2$, we have:
\begin{equation}
\begin{array}{cl}
\mathcal{L}(f,{X^{aug}_1},{X^{aug}_2}) & ={\rm D}(f(X_1^{aug}),f(X_2^{aug})) - [H(f({X^{aug}_1}))\\
&\quad + H(f({X^{aug}_2}))] \\
&=0
\end{array}
\end{equation}
In this equation, we have used the fact that $X_1^{aug}=X_2^{aug}$ almost surely w.r.t.\ to the ground truth generative process, so the first term is zero; and the fact that $f({X^{aug}_1})$ is uniformly distributed on $(0,1)^{n_c}$ and the uniform distribution on the unit hypercube has zero entropy, so the second term is also zero. 

Next, let $f$ be any smooth function that can obtain the global minimum of $\mathcal{L}(f,X_1^{aug},X_2^{aug})$. Define $h := f  \circ {\rm f}$, we can obtain that the first term and second term of $\mathcal{L}(f,X_1^{aug},X_2^{aug})$ all equal to $0$ (as mentioned above). Therefore, we can conclude that the representation extracted by any smooth function $f$ that minimizes $\mathcal{L}(f,X_1^{aug},X_2^{aug})$ is related to the true latent representations that are invariant to augmentation and follow a uniform distribution.

In the next step, we aim to demonstrate that $f(X)$ can only depend on $S_r$ instead of $S_{ur}$. Suppose that $f_r(S_r,S_{ur}):=f(S_r,S_{ur})_{1:n_r}=f(z)_{1:n_r}$, we assume that the partial derivative of $f_r$ w.r.t. some variable $s_l$ in $S_{ur}$ is non-zero at some point $z^*\in \mathcal{Z}={S_r}\times S_{ur} $. By continuity of the partial derivative, $\partial {f_r}/\partial {s_l}$ must be non-zero in a neighborhood of $z^*$.

Then, suppose that $\varphi :{S_r} \times {S_{ur}} \times {S_{ur}} \to {\mathbbm{R}_{ \ge 0}}$, and the can be defined as follows:
\begin{equation}\label{eq:proof_3_1}
    \varphi ({S_r},{S_{ur}},{S_{ur}}): = |{f_c}({S_r},{S_{ur}}) - {f_c}({S_r},{{\mathord{\buildrel{\lower3pt\hbox{$\scriptscriptstyle\frown$}} 
\over S} }_{ur}})| \ge 0
\end{equation}
where $f_c$ represents the $c$-th dimension of the output of $f$. Typically, when constructing neural networks or other mapping functions, there are multiple output dimensions, each of which can individually represent specific information. To obtain a contradiction to the invariance condition mentioned above, it remains to show that $\varphi$ is \textit{strictly positive} with probability greater than zero. Consider $\partial {f_r}/\partial {s_l}$ must be non-zero in a neighborhood of $z^*$, the strict monotonicity implies that:
\begin{equation}
\begin{array}{l}
    \varphi (S_r^*,(S_{ - l}^*,{s_l}),(S_{ - l}^*,{{\mathord{\buildrel{\lower3pt\hbox{$\scriptscriptstyle\frown$}} 
\over s} }_l})) > 0 \\[8pt]
\forall ({s_l},{{\mathord{\buildrel{\lower3pt\hbox{$\scriptscriptstyle\frown$}} 
\over s} }_l}) \in (s_l^*,s_l^* + \varepsilon ) \times (s_l^* - \varepsilon ,s_l^*)
\end{array}
\end{equation}
Since $\varphi$ is a composition of continuous functions (absolute value of the difference of two continuous functions), $\varphi$ is continuous.

Consider the open set ${\mathbbm{R}_{ \ge 0}}$, then, for a continuous function, pre-images (or inverse images) of open sets are always open. Then, based on $\varphi$, we obtain the pre-image as an open set ${S_u} \in {S_r} \times {S_{ur}} \times {S_{ur}}$, on which $\varphi$ is strictly positive. Then, due to \ref{eq:proof_3_1}, we can obtain that:
\begin{equation}
   \{ S_r^*\}  \times (\{ S_{ - l}^*\}  \times (s_l^*,s_l^* + \varepsilon )) \times (\{ S_{ - l}^*\}  \times (s_l^* - \varepsilon ,s_l^*)) \in {S_u}
\end{equation}
where $S^{*}_{-l}$ represents the set obtained by removing the $l$-th element from the set $S^*$. In this context, this usually means removing a certain dimension from a multi-dimensional space. Typically, $s_l$ and the elements of $S_{ur}$ are of the same size, as $s_l$ is an element of $S_{ur}$. In practical problems, $s_l$ might be a subset or a specific value within $S_{ur}$. So, ${S_u}$ is non-empty. Assume that there exists at least one subset $S_a$ of changing $S_{ur}$. Then, we can obtain that for any $s_a \in S_a$, there is an open subset $O_a  \subseteq S_a$ containing $s_a$. Define the space: 
\begin{equation}
   {\mathbbm{R}_a}: = \{ ({s_a},{{\mathord{\buildrel{\lower3pt\hbox{$\scriptscriptstyle\frown$}} 
\over s} }_a})|{s_a} \in {S_a},{{\mathord{\buildrel{\lower3pt\hbox{$\scriptscriptstyle\frown$}} 
\over s} }_a} \in {O_a}\} 
\end{equation}

Now consider the intersection $\mathbbm{R}_{ \ge 0}\cap \mathbbm{R}_c$ of the open set $\mathbbm{R}_{ \ge 0}$ with the topological subspace $\mathbbm{R}_c$. Since $\mathbbm{R}_{ \ge 0}$ is open, by the definition of topological subspaces, the intersection $\mathbbm{R}_{ \ge 0}\cap \mathbbm{R}_c\subseteq \mathbbm{R}_c$ is \textit{open} in $\mathbbm{R}_c$, (and thus has the same dimension as $\mathbbm{R}_c$ if non-empty). 

Define ${s^c} = {S_{ur}}\backslash {S_a}$ and ${\mathbbm{R}_c}={S_r} \times {S_{{s^c}}} \times {\mathbbm{R}_a}$, then we can obtain that there exists ${\varepsilon ^1}>0$,such that $\{ S_{ - l}^*\}  \times (s_l^* - {\varepsilon ^1},s_l^*) \subset {O_{s_l^*}}$. Then, we have:
\begin{equation}\label{qsasddfhsef}
\{ S_r^*\}  \times \{ S_{{s^c}}^*\}  \times (\{ {S_{{s^c}}}\}  \times (s_l^*,s_l^* + \varepsilon )) \times (\{ {S_{{s^c}}}\}  \times (s_l^* - \varepsilon ,s_l^*)) \subset {\mathbbm{R}_c}
\end{equation}
where $O_{s^{*}_l}$ represents an open set containing $s_l^*$ and its neighborhood. By the property that the inverse image of an open set under a continuous function is also open, we can deduce the non-emptiness of this open set. Then, we can obtain that:
\begin{equation}\label{qschtsfhsef}
\{ S_r^*\}  \times (\{ S_{ - l}^*\}  \times (s_l^*,s_l^* + \varepsilon )) \times (\{ S_{ - l}^*\}  \times (s_l^* - \varepsilon ,s_l^*)) \subset {\mathbbm{R}_c}
\end{equation}
For Equivalence of Equation \ref{qsasddfhsef} and Equation \ref{qschtsfhsef}, the derivation shows that when the loss function $\mathcal{L}(f,X_1^{aug},X_2^{aug})$ is minimized, the output of the function $f$ depends only on the task-relevant information $S_r$ and not on the unconstrained variables $S_{ur}$. Thus, $f(X)$ can only depend on $S_r$ and not on $S_{ur}$.

Therefore, we can obtain that when the first term and second term of  $\mathcal{L}(f,X_1^{aug},X_2^{aug})$ can be minimized, then the $S_r$ in the output of $f$ can only depend on the true content $S_r$ and not on $S_{ur}$.

\end{proof}

\end{document}